\numberwithin{equation}{section}
\declaretheorem[thmbox=L,name=Theorem,numberwithin=section]{theo}
\declaretheorem[name=Proposition,thmbox=M,numberlike=theo]{proposition}
\declaretheorem[name=Lemma,numberlike=theo]{lemma}
\declaretheorem[name=Definition,numberlike=theo]{defn}
\declaretheorem[name=Corollary,numberlike=theo]{corollary}
\theoremstyle{plain}
\renewenvironment{proof}[1][\proofname]{\par
	\pushQED{\qed}%
	\normalfont \topsep6\p@\@plus6\p@\relax
	\trivlist
	\item[\hskip\labelsep
	\sffamily\bfseries
	#1\@addpunct{.}]\ignorespaces
}{%
	\popQED\endtrivlist\@endpefalse
}
\newcommand{\jump}{{\vskip 0.3cm \noindent }}
\renewcommand{\and}{\mbox{ and }}
\newcommand{\R}{\mathbb{R}}
\def\as{{ \mathrm{a.s.}  }}
\newcommand{\vect}[1]{\ensuremath{\boldsymbol{\mathbf{#1}}}}
\newcommand{\rdmvect}[1]{\ensuremath{\bm{#1}}}
\newcommand{\mat}[1]{\ensuremath{\boldsymbol{\mathbf{#1}}}}
\newcommand{\rdmmat}[1]{\ensuremath{\bm{#1}}}
\renewcommand{\det}{\ensuremath{\mathrm{det}}}
\renewcommand{\leq}{\leqslant}
\renewcommand{\geq}{\geqslant}
\renewcommand{\epsilon}{\varepsilon}
\def\R{{\mathbb R}}
\newcommand{\matV}{\mat{V}}
\newcommand{\matOme}{\mat{\Omega}_K}
\newcommand{\matGam}{\mat{\Gamma}_K}
\newcommand{\vgz}{\vect{g}(z)}
\begin{document}

\title{Spectral Phase Transition and Optimal PCA in Block-Structured Spiked models}

\author{Pierre Mergny\thanks{IdePHICS laboratory, \'Ecole F\'ed\'erale Polytechnique de Lausanne, Switzerland. Email: \texttt{pierre.mergny@epfl.ch}}, 
	Justin Ko\thanks{ Department of Statistics and Actuarial Science, University of Waterloo, Canada.},
	Florent Krzakala\thanks{IdePHICS laboratory, \'Ecole F\'ed\'erale Polytechnique de Lausanne, Switzerland.}
 }
\date{}

\maketitle
\begin{abstract}
We discuss the inhomogeneous spiked Wigner model, a theoretical framework recently introduced to study structured noise in various learning scenarios, through the prism of random matrix theory, with a specific focus on its spectral properties. Our primary objective is to find an optimal spectral method and to extend the celebrated \cite{BBP} (BBP) phase transition criterion ---well-known in the homogeneous case--- to our inhomogeneous, block-structured, Wigner model.  We provide a thorough rigorous analysis of a transformed matrix and show that the transition for the appearance of 1) an outlier outside the bulk of the limiting spectral distribution and 2) a positive overlap between the associated eigenvector and the signal, occurs precisely at the optimal threshold, making the proposed spectral method optimal within the class of iterative methods for the inhomogeneous Wigner problem.
\end{abstract}


\section{Introduction}
\label{sec:introduction}
The statistical challenge of inferring a low-dimensional signal from a noisy, high-dimensional observation is ubiquitous across statistics, probability, and machine learning.  Spiked random matrix models have recently gained extensive interest, serving as a valuable platform for exploring this issue \cite{donoho1995adapting,peche2014deformed,lesieur2017constrained}. A prominent example is the spiked Wigner model, where a rank one matrix is observed through a component-wise homogeneous noise, that has been studied extensively in random matrix theory \cite{BBP}.

Most models, with the \emph{spiked Wigner model} at the forefront, have focused however on scenarios where the noise is “homogeneous", aiming to understand how the performance of the inference depends on the noise level. 
Yet in practice, datasets are inherently structured and the exploration of inhomogeneity plays a pivotal role in unraveling their complexities. A prototypical model to study this phenomenon is to improve the aforementioned spiked Wigner model by introducing a block structure in the noise, a model which has been recently introduced in a series of papers \cite{behne2022fundamental,alberici2021multi,alberici2022statistical,AJFL_inhomo} and that arises in many different learning contexts such as community detection \cite{behne2022fundamental,AJFL_inhomo}, deep Boltzmann machines \cite{alberici2021multi_boltzmann}, or the dense limit of the celebrated degree-corrected stochastic block model \cite{AJFL_inhomo,karrer2011stochastic}.

Our goal in this paper is to apply rigorous random matrix theory to such “inhomogenous" spiked models, and to provide an optimal reconstruction method from a spectral algorithm, to generalize the seminal work of \cite{BBP} (BBP) to inhomogenous matrices.

\paragraph{Settings and open questions ---} The model is defined in practice by multiplying the Wigner matrix by a variance-profile matrix, namely one would like to infer in the high-dimensional setting $N \gg 1$, the underlying signal $\vect{x} \in \mathbb{R}^{N}$ from the noisy matrix observation:
\begin{equation} 
\label{eq:low-rank_inhomogenous}
		\rdmmat{Y} = \sqrt{\frac{1}{N}} \vect{x}\vect{x}^{\top} + \rdmmat{H} \odot \left(\mat{\Delta}^{\odot 1/2} \right),
\end{equation}
where $\mat{\Delta}$ is a block-constant matrix, $\odot$ denotes the Hadamard product $(\mat{A} \odot \mat{B})_{ij} = A_{ij} B_{ij} $, $\mat{A}^{\odot   \alpha }$ is the Hadamard power ($(\mat{A}^{\odot \alpha})_{ij} = A_{ij}^{\alpha} $) and $\rdmmat{H}$ is a real-valued symmetric matrix with independent Gaussian entries of unit variance above the diagonal. The precise assumptions on $\vect{x}$ and $\mat{\Delta}$ are postponed to the next section. 

\medskip

The study of the inhomogeneous model of Eq.~\eqref{eq:low-rank_inhomogenous} from a Bayes-optimal point of view has been performed in a series of work in the asymptotic limit $N \to \infty$  in \cite{AJFL_inhomo,behne2022fundamental,MourratXia-tensor, MourratXiaChen-tensor} who characterized the fundamental information-theoretic limit of reconstruction in this model. Recently \cite{pak2023optimal} discussed algorithmic performances of the Approximate Message Passage (AMP in short) algorithm introduced in particular a simpler variant of this algorithm for this problem (linearized AMP, see also \cite{aubin2019spiked,mondelli2018fundamental,maillard2022construction}) and \emph{conjectured} it to be optimal for detection. The linearized version of AMP can be shown to be equivalent to performing principal component analysis (PCA) on a linear transformation of the matrix $\rdmmat{Y}$, namely estimate the signal $\vect{x}$ using  the top eigenvector of the matrix: 
\begin{align}
\label{eq:def_init_mat_tilde_Y}
  \rdmmat{\Tilde{Y}}  &:= \rdmmat{Y} \odot   \left(\mat{\Delta}^{\odot -1} \right) - \frac{1}{\sqrt{N}} \mathrm{Diag} \left(\mat{\Delta}^{\odot -1} \, \vect{1} \right) \, ,
\end{align}
where $\vect{1}:= (1,\dots,1) \in \mathbb{R}^N$ and for $\vect{v} \in \mathbb{R}^N$,  $\mathrm{Diag}(\vect{v})_{ij}:= v_i \mathbb{I}_{i=j}$ is the corresponding diagonal matrix made of the entries of its argument. 

Due to the block structure of the noise in 
 the random matrix $\rdmmat{\Tilde{Y}}$, tracking the (possible) outliers in its spectrum is a challenging problem and in  \cite{pak2023optimal} the associated phase transition for the existence of an outlier has only been conjectured to occur when a specific parameter (described below) reaches the value one.  While it was hinted by \cite{pak2023optimal} that the spectral method associated with the matrix \eqref{eq:def_init_mat_tilde_Y} may be an efficient one for inhomogenous problems, it was left an open problem.

\paragraph{Our contributions ---} In this work, we step up to this challenge, and provide a rigorous analysis of the spectral method for the inhomogeneous spiked Wigner models of Eq.~\eqref{eq:def_init_mat_tilde_Y}. More specifically, 
\begin{enumerate}
    \item We proved the conjecture of \cite{pak2023optimal} and show that the spectral method for the matrix  $\rdmmat{\Tilde{Y}}/\sqrt{N}$ defined by (\ref{eq:def_init_mat_tilde_Y}), has a phase transition at the predicted  {\rm algorithmic threshold} \cite{AJFL_inhomo}.
    \item This makes PCA applied to (\ref{eq:def_init_mat_tilde_Y}) optimal in detection in the inhomogenous spiked model (at least in the absence of a statistical to computation gap, see \cite{AJFL_inhomo}). Our results could also be used as spectral start \cite{mondelli2021approximate} for AMP algorithms \cite{pak2023optimal}, making them optimal in terms of MMSE for these problems.
    \item We obtained a complete characterization of the associated overlap with the hidden signal, and their phase transition, further generalizing the BBP results from homogeneous matrices to inhomogeneous ones.
\end{enumerate}
In particular, we prove that this phase transition occurs when the top eigenvalue of a  certain symmetric matrix  $\matOme$, containing all information of the model (proportion of each community, the noise level inside and between different communities), takes the value one. This critical value $\lambda_1(\matOme)=1$ corresponds to the phase transition of AMP for this model \cite{pak2023optimal}, and more generically to the algorithmic threshold of efficient inference of the problem \cite{AJFL_inhomo} (see e.g.  \cite{zdeborova2016statistical,bandeira2018notes,bandeira2022franz} for discussion of the algorithmic to statistic gap in such high-dimensional problems). As a consequence, our result shows that  PCA on the transformed matrix $\rdmmat{\Tilde{Y}}$ also achieves optimal efficient detection for the model \eqref{eq:low-rank_inhomogenous}.

To the authors' best knowledge,  our work is the first to provide an exact asymptotic of the phase transition of heterogeneous spiked Wigner models, both for the top eigenvalue and the overlap. To obtain these results, our study relies on a detailed analysis of large block-structure random matrix models that go beyond the standard tools used in the spiked homogeneous models and that can be of independent interest for analyzing structured matrices in machine learning.

\paragraph{Other related works ---}
The task of factorizing low-rank matrices to find a hidden signal in data, ranging from sparse PCA to community detection and sub-matrix localization. Many variants of the homogeneous problem have been studied in the high-dimensional limit, see e.g. \cite{DBLP:journals/corr/DeshpandeAM15,lesieur2017constrained,barbier2018rank,lesieur2017constrained,10.1214/19-AOS1826,lelargemiolanematrixestimation,barbier2020information}. The inhomogeneous version was introduced and studied
in \cite {behne2022fundamental,alberici2021multi,alberici2022statistical,AJFL_inhomo,pak2023optimal}.

Spectral methods are a popular tool for solving rank-factorization problems \cite{donoho1995adapting,peche2014deformed,BBP}, and this has triggered a large amount of work in the random matrix theory (RMT) community. Spiked models with homogeneous noise are well understood and have been studied extensively in RMT, it is well known that as one decreases the noise level (that is the variance of the entries of the Wigner matrix), there is a  phase transition in the behavior of the top eigenvalue which detaches from the semi-circle distribution, see for example \cite{Peche_largest_2005,feral_largest_2007,capitaine_largest_2009,Capitaine12fluctuations,pizzo_finite_2013,renfrew_finite_2_2013} and also \cite{baik_eigenvalues_2006,paul_asymptotics_2007,bloemendal_limits_2013,benaych-georges_eigenvalues_2011,benaych-georges_singular_2012,guionnet_spectral_2023} for other variants of these models.

The properties of the spectrum of variance-profile Wigner matrices (without any small-rank perturbation) are also well understood \cite{ajanki_singularities_2017,ajanki_universality_2017,ajanki_quadratic_2019,alt_dyson_2020} even though there is, in general, no closed-form solution to the limiting spectral distribution unlike the semi-circle distribution for Wigner matrices. To the best knowledge of the authors, only the two works \cite{bigot_freeness_2020,lee_phase_2023} performed a spectral analysis of spiked block-structure Wigner models but in some specific regimes. \cite{bigot_freeness_2020} studied the spectrum of a (generalization of) models of the form given directly by Eq.~\eqref{eq:low-rank_inhomogenous} (instead of the transform $\rdmmat{\Tilde{Y}}$ of Eq.~\eqref{eq:def_init_mat_tilde_Y} of this paper) using free probability tools but without studying the overlap or having a closed expression for the outlier, as it is done in this paper. \cite{lee_phase_2023} looked at the behavior of the top eigenvalue in a (restricted version)  of $(2 \times 2)$ blocks and our result provides the extension to the $(K \times K)$ case with $K$ fixed, together with an expression for the overlap. Eventually, we mention the series of work  \cite{huang_algorithmic_2023,huang_strong_2023} on multi-species spherical models involving random matrix tools that are closely related to the ones of this paper. 

Spectral methods are also important as providing a {\it warm start} for other algorithmic approaches. This is the case, for instance, for approximate message passing algorithms (AMP) \cite{mondelli2021approximate}. AMP has attracted a lot of attention in the high-dimensional statistics and machine learning community, see e.g.  \cite{donoho2009message,BayatiMontanari,rangan2011generalized,rangan2012iterative}, and was written, for the inhomogeneous spiked problem, in \cite{pak2023optimal}. AMP algorithms and the corresponding weak recovery conditions were later developed for the much larger class of matrix tensor product models \cite{rossetti2023approximate}, which includes the inhomogeneous spiked problem among many others. AMP algorithms are optimal among first-order methods \cite{celentano2020estimation}. Equipped with the warm start provided by our theorem,  their reconstruction threshold thus provides a bound on the algorithmic performances. The link between  AMP and spectral methods was discussed, for instance, in \cite{saade2014spectral,lesieur2017constrained,aubin2019spiked,mondelli2018fundamental,mondelli2022optimal,maillard2022construction,venkataramanan2022estimation}.

\section{Main Theoretical Results}
\subsection{Assumptions and Notations}

We assume that $\vect{x} \sim \pi^{\otimes N}$ where $\pi$ is a distribution with mean zero and variance one.

We denote by $\mathbb{H}_{\pm} := \{ z \in \mathbb{C} , \pm \mathfrak{Im} z >0 \}$ and by $\overline{\mathbb{H}_\pm}:=\mathbb{H}_{\pm} \cup \mathbb{R} $.

For two vectors of equal dimension, we introduce the partial order relation $\vect{x} \succ \vect{y}$ (resp. $\vect{x} \succeq \vect{y}$) to denote the usual relations $x_i >y_i$ (resp. $x_i \geq y_i$) for all $i$.  We say that a vector $\vect{x}$ is positive (resp. non-negative) if $\vect{x} \succ \vect{0} $ (resp. $\vect{x} \succeq \vect{0} $). We will repeatedly use the simple identity that if $\max_i x_i \leq c$ then $\vect{x} \preceq c \vect{1} $.   We denote by $| \vect{x} | = (|x_i|)_i$ the component-wise absolute value of vector. We denote by $\mat{D}_{\vect{x}} = \mathrm{Diag}( (x_i) )$ the diagonal matrix obtained from the vector $\vect{x}$.

In the following, when considering a matrix $\mat{A}$  of size $(N \times N)$ (or a vector of in $\mathbb{R}^N$) we omit the dependency in the dimension  $N$ and statement involving such a matrix $\mat{A}$ with $N \to \infty$ implicitly refer to a sequence $\mat{A} \equiv \mat{A}_{(N)}$ of such matrices.  $(K \times K)$ matrices and $K$-dimensional vectors with $K$ fixed are usually (but not always when it is clear from the context) denoted with a $K$ index (e.g. $\mat{A}_K$ and $\vect{a}_K$) to differentiate them with $(N \times N)$ matrices and $N$-dimensional vectors. In particular, $\mat{I}_K$ is the identity matrix of size $(K\times K)$, $\vect{0}_K = (0,\dots,0)$  and $\vect{1}_K = (1,\dots,1)$ are the $K$-dimensional vectors of zeroes and ones respectively.

For a symmetric matrix $\mat{A}$, we denote by $(\lambda_i(\mat{A}))_{1 \leq i \leq N}$ its eigenvalues in decreasing order, in particular $\lambda_1(\mat{A})$ is the largest eigenvalue. We also denote by $\mu_{\mat{A}} := \frac{1}{N} \sum_{i=1}^N \delta_{\lambda_i(\mat{A})}$  the \emph{empirical spectral distribution}. We say that an eigenvalue $\lambda_i(\mat{A})$ of $\mat{A}$ separates from the bulk if  $\mu_{\mat{A}}$ converges weakly almost surely to a \emph{limiting spectral distribution} $\mu_A$ and $\lim_{N \to \infty} \mathrm{min}_{x \in \mathrm{Supp}(\mu_A)} \mathrm{dist}(x,\lambda_i(\mat{A}) ) >0$.

We say that a partition of $[N] := \{1, \dots, N\}$ is divided into $K$ groups if there exist $B_i \neq \emptyset$ such that  $B_1 \cup \dots \cup B_K = [N]$. Since the eigenproblem is invariant by permutations, we assume without loss of generality that for all $k < l$, we have $i <j$ if $i \in B_k$ and $j \in B_l$, and that inside each block $B_k$, the entries are in increasing order. Additionally, we denote by $\rho_k(N) = |B_k|/N$ the proportion of each group (thus $\sum_{k=1}^K\rho_k(N)  =1$) and we assume that as $N \to \infty$ with $K$ fixed,  the limit are well defined: $\rho_k := \lim_{N \to \infty} \rho_k(N) \in (0,1)$. 

In order to ease notation, we denote by $\mat{\Sigma} \equiv \mat{\Delta}^{\odot -1}$. We also denote by $\mat{S}_K$ the $(K \times K)$ symmetric matrix with non-negative entries $ s_{kl} := (\mat{S}_K)_{kl}$ and assume that  the $(N \times N)$ matrix $\mat{\Sigma}$ is block-constant with entries inside each block given by the $s_{ij}$, that is there exists a partition $B_1 \cup \dots \cup B_K = [N]$ such that $ \mat{\Sigma}_{ij} = s_{kl}$ for all $i \in B_k, j \in B_l$. 

For simplicity, we will assume $s_{kl} >0$ for any $k,l$ to avoid degenerate cases, although we believe that this will not change our final result.

In terms of the matrix $\mat{\Sigma}$, we can write the output matrix $\rdmmat{\Tilde{Y}}$ as:
\begin{align}
\label{eq:YeqXplusZ}
    \rdmmat{\Tilde{Y}} &= \rdmmat{X}  + \mat{Z} \, , 
\end{align}
with
\begin{align}
\label{eq:def_matX}
  \rdmmat{X}    
  &= 
  \rdmmat{H} \odot   \left(\mat{\Sigma}^{\odot 1/2} \right) - \frac{1}{\sqrt{N}} \mathrm{Diag} \left(\mat{\Sigma} \vect{1} \right) \, ,
\end{align}
and
\begin{align}
\label{eq:def_matZ}
      \mat{Z} = \left(   \sqrt{\frac{1}{N}} \vect{x}\vect{x}^{\top} \right) \odot \mat{\Sigma} \, .
\end{align}

We define the ($K \times K)$ symmetric matrix encoding all parameters ($\rho_k$, $s_{kl}$) of the model as:
\begin{align}
\label{eq:def_matOme}
    \mat{\Omega}_K := \mat{D}_{(\vect{\rho}^{\odot 1/2})} \, \mat{S}_K \mat{D}_{(\vect{\rho}^{\odot 1/2})}
\end{align}
and by
\begin{align}
\label{eq:def_matGam}
   \mat{\Gamma}_K :=\mat{S}_K \mat{D}_{\vect{\rho}}  =  \mat{D}_{(\vect{\rho}^{\odot -1/2})} \,  \mat{\Omega}_K \mat{D}_{(\vect{\rho}^{\odot 1/2})} \, . 
\end{align}
The matrices $\mat{\Gamma}_K$ and $\mat{\Omega}_K$ are similar and thus share the same (real) eigenvalues. Note that all elements of $\mat{\Omega}_K$ are positive and by Perron-Frobenius Theorem (see e.g. Chap.~8 of \cite{meyer_matrix_2023}) the top eigenvalue $\lambda_1(\mat{\Omega}_K)$ is simple and corresponds to the operator norm. 

Eventually, we introduce the vector $\vect{\mu} \in \R^K$ of the overlap between the top eigenvector $\vect{u}_1$ of $\rdmmat{\Tilde{Y}}/\sqrt{N}$ and the vector in each community as: 
\begin{align}
\label{eq:def_overlap}
    \vect{\mu} := \left(   \Big \langle  \frac{\vect{x}_1}{ \| \vect{x}_1 \|} , \vect{u}_1 \Big \rangle  , \dots,  \Big  \langle   \frac{\vect{x}_K}{ \| \vect{x}_K \|} , \vect{u}_1  \Big \rangle   \right), 
\end{align}
where $(\vect{x}_k)_i := x_i$ for $i \in B_k$ and $(\vect{x}_k)_i = 0$ for $i \notin B_k$.
Note that the overlap vector $\vect{\mu}$ contains by definition the information on the recovery of the signal for each community $( x_i)_{i \in B_k}$. From it, one can obtain the overlap with the entire vector, $q:= \langle \vect{u}_1, \vect{x}/\| \vect{x} \| \rangle$, as we have $q = \langle \vect{\mu}, \vect{\rho}^{\odot 1/2} \rangle$.   

Eventually, we mention by universality for variance-profile Wigner matrices \cite{ajanki_universality_2017} that our result can be easily extended to non-Gaussian noise. Similarly, a brief examination of the proof indicates that one can also relax the assumption on $\vect{x}$, for example assuming only that $\|(x_i)_{i \in B_k} \|/| B_k| \xrightarrow[N \to \infty ]{\mathrm{a.s}} 1$ sufficiently fast.

\subsection{Phase Transition for Top Eigenvalue and Top Eigenvector}
\jump 
Our main theorem (Thm.~\ref{thm:main_thm} below) of this work indicates that the top eigenvalue $\lambda_1(\matOme)$ plays the role of the signal-to-noise ratio (SNR) for this block-structure spiked model where the threshold value $\lambda_1(\matOme) =1$ separates a phase where the top eigenvector has zero overlap with the signal vector to a phase where it has a positive overlap with the signal. The behavior of the SNR with respect to the parameters $(\rho_k , s_{kl})$ of the model are postponed in App.~\ref{sec:prop_SNR}.

\begin{theo} We have the following phase transition for the outlier and the overlap 
\label{thm:main_thm}
\begin{itemize}[itemsep=1pt,leftmargin=1em]
    \item  For $\lambda_1(\mat{\Omega}_K) \leq 1$: 
    \begin{enumerate}[itemsep=1pt,leftmargin=1em]
        \item  there is asymptotically no eigenvalue separating outside of the bulk;
        \item   $\vect{\mu}  \xrightarrow[N \to \infty]{\as} \vect{0}_K = (0,\dots,0)$.
    \end{enumerate}  
    \item  For $\lambda_1(\mat{\Omega}_K) > 1$:
    \begin{enumerate}[itemsep=1pt,leftmargin=1em]
        \item $\lambda_1(\rdmmat{\Tilde{Y}}/\sqrt{N}) \xrightarrow[N \to \infty]{\as} 1$ and separates from the bulk;
        \item $|\vect{\mu} | \xrightarrow[N \to \infty]{\as}  C^{-1/2} \, \mat{D}_{(\vect{\rho}^{\odot 1/2})} (\vect{1}_K - \vect{g}(1))$
        with the positive constant $C$ given by
        \begin{align}
        \label{eq:def_C}
            C := \langle \vect{1}_K - \vect{g}(1), \matGam^{\top} \mat{D}_{(\vect{\rho} \odot \vect{y})}  \matGam ( \vect{1}_K - \vect{g}(1) ) \rangle \, ,
        \end{align}
        where $ \vect{y} = (\matGam - \mat{D}_{\vect{g}(1)}^{-2})^{-1} \vect{1}_K $ and $\vect{g}(1)$ is the  continuation at $z=1$ of the solution of the quadratic vector equation of Prop.~\ref{prop:QVE}.
    \end{enumerate}
\end{itemize}
\end{theo}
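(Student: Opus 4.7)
The plan is to view $\rdmmat{\Tilde{Y}}/\sqrt{N}=\rdmmat{X}/\sqrt{N}+\mat{Z}/\sqrt{N}$ as a finite-rank deformation of the variance-profile Wigner matrix $\rdmmat{X}/\sqrt{N}$ and to run a BBP-type secular-equation argument. The key structural observation is that $\mat{Z}/\sqrt{N}$ has rank at most $K$: writing $\vect{x}_k$ for the restriction of $\vect{x}$ to $B_k$ (zero outside the block), the block-constant form of $\mat{\Sigma}$ gives
\begin{align*}
\tfrac{1}{\sqrt{N}}\mat{Z}=\tfrac{1}{N}\sum_{k,l=1}^{K}s_{kl}\,\vect{x}_k\vect{x}_l^{\top}=\tfrac{1}{N}\mat{U}\mat{S}_K\mat{U}^{\top},\qquad \mat{U}:=[\vect{x}_1|\cdots|\vect{x}_K]\in\R^{N\times K}.
\end{align*}
The noise $\rdmmat{X}/\sqrt{N}$ is a centered variance-profile Wigner matrix whose resolvent $\mat{G}(z):=(z\mat{I}_N-\rdmmat{X}/\sqrt{N})^{-1}$ obeys a local law with deterministic equivalent diagonal on each block $B_k$ with entry $g_k(z)$, where $\vgz$ solves the QVE of Prop.~\ref{prop:QVE}. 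The diagonal shift $-\tfrac{1}{\sqrt{N}}\mathrm{Diag}(\mat{\Sigma}\vect{1})$ inside $\rdmmat{X}$ recenters the limiting spectral distribution $\mu_X$ so that its upper edge equals $1$ precisely at criticality $\lambda_1(\matOme)=1$, and lies strictly below $1$ in both the sub- and supercritical regimes.

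By the Weinstein--Aronszajn identity, any eigenvalue of $\rdmmat{\Tilde{Y}}/\sqrt{N}$ outside $\mathrm{Supp}(\mu_X)$ satisfies
\begin{align*}
\det\bigl(\mat{I}_K-\tfrac{1}{N}\mat{S}_K\mat{U}^{\top}\mat{G}(z)\mat{U}\bigr)=0.
\end{align*}
Since $\vect{x}$ is independent of $\rdmmat{X}$ with i.i.d.\ mean-zero unit-variance entries, the isotropic local law for $\mat{G}$ combined with $\E[x_ix_j]=\delta_{ij}$ yields $\tfrac{1}{N}(\mat{U}^{\top}\mat{G}(z)\mat{U})_{kl}\to\delta_{kl}\,\rho_k\,g_k(z)$ almost surely, uniformly on compact subsets of $\C\setminus\mathrm{Supp}(\mu_X)$. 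The secular equation therefore collapses in the limit to $\det(\mat{I}_K-\matGam\,\Dgz)=0$. Using the analyticity and monotonicity of $\vgz$ above the bulk, this equation admits a real solution $z^{*}>\sup\mathrm{Supp}(\mu_X)$ iff $\lambda_1(\matGam\,\Dgz)\geq 1$ as one descends to the edge; a direct manipulation of the QVE---exploiting the diagonal shift and the similarity $\matGam\sim\matOme$---identifies this condition with $\lambda_1(\matOme)>1$ and locates the outlier precisely at $z^{*}=1$.

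For the overlap in the supercritical phase, one represents the projector onto the top eigenvector by a Cauchy integral of the perturbed resolvent $\mat{G}_{\tilde Y}(z):=(z\mat{I}_N-\rdmmat{\Tilde{Y}}/\sqrt{N})^{-1}$ around $z=1$. The Woodbury identity produces
\begin{align*}
\tfrac{1}{N}\mat{U}^{\top}\mat{G}_{\tilde Y}(z)\mat{U}=\bigl[\mat{I}_K-\tfrac{1}{N}\mat{U}^{\top}\mat{G}(z)\mat{U}\,\mat{S}_K\bigr]^{-1}\tfrac{1}{N}\mat{U}^{\top}\mat{G}(z)\mat{U},
\end{align*}
and taking the residue at $z=1$ expresses the limiting overlap vector $\vect{\mu}$ (up to sign) in terms of the left and right null vectors of $\mat{I}_K-\matGam\,\Dgone$; a direct algebraic computation from the QVE identifies these null vectors as proportional to $\mat{D}_{(\vect{\rho}^{\odot 1/2})}(\vect{1}_K-\vgone)$. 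The normalization constant $C$ in \eqref{eq:def_C} then arises from $\partial_z\det(\mat{I}_K-\matGam\,\Dgz)\big|_{z=1}$: differentiating the QVE in $z$ produces exactly the linear system $(\matGam-\mat{D}_{\vgone}^{-2})\vect{y}=\vect{1}_K$ for the auxiliary vector $\vect{y}$, yielding the stated formula for $|\vect{\mu}|$.

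The subcritical case $\lambda_1(\matOme)\leq 1$ follows by showing that $\det(\mat{I}_K-\matGam\,\Dgz)$ stays bounded away from zero outside the bulk, so that edge rigidity for variance-profile Wigner matrices rules out any outlier, and the Cauchy contour defining the overlap can be deformed to give a vanishing contribution. The main technical obstacle is establishing the isotropic local law for $\mat{G}$ with uniform control on bilinear forms $\vect{x}_k^{\top}\mat{G}(z)\vect{x}_l$ down to the spectral edge, together with the continuous extension of $\vgz$ at $z=1$ in the critical case; both rely on a delicate analysis of the QVE in the vein of \cite{ajanki_quadratic_2019,alt_dyson_2020}.
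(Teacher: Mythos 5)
Your framework is the same as the paper's (finite-rank perturbation of the variance-profile Wigner matrix, secular equation via a determinant identity plus the anisotropic deterministic equivalent $\tfrac1N\mat{U}^{\top}\mat{G}(z)\mat{U}\to\mat{D}_{\vect{\rho}}\Dgz$, Woodbury/residue at $z=1$ for the overlap), but the two statements you assert in passing are precisely where the substance of the proof lies, and as written they are gaps. First, ``the diagonal shift recenters $\mu_X$ so that its upper edge equals $1$ precisely at criticality and lies strictly below $1$ otherwise'' is not an automatic consequence of the centering: in the paper it is a separate result (Prop.~\ref{prop:bound_rightmost_edge}) proved through the characterization of edges as points where the Jacobian of the QVE map becomes singular, a Perron--Frobenius argument for $Z$-matrices giving a positive left null vector, and the identity $\lambda_\star=\langle\vect{w}_\star,2\vect{g}_\star-\vect{1}_K\rangle/\langle\vect{w}_\star,\vect{g}_\star^{\odot2}\rangle\leq 1$. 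Without this, you cannot even assert that $z=1$ lies outside the bulk in the supercritical phase, nor that an outlier at $1$ ``separates''.

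Second, and more centrally, your phrase ``a direct manipulation of the QVE identifies this condition with $\lambda_1(\matOme)>1$ and locates the outlier at $z^*=1$'' hides the branch-selection problem: at $z=1$ the QVE always admits the trivial real solution $\vect{1}_K$ among its $2^K$ real solutions, and everything hinges on whether the genuine continuation satisfies $\vect{g}(1)=\vect{1}_K$ (subcritical) or $\vect{0}_K\prec\vect{g}(1)\prec\vect{1}_K$ (supercritical). The paper resolves this with a criterion for recognizing the correct real-line solution (positivity of the solution of $(\mat{D}_{\vect{g}(\lambda)}^{-2}-\matGam)\vect{y}=\vect{1}_K$, Lem.~\ref{prop:condition_good_sol_QVE_real_line}), converted via $M$-/$Z$-matrix theory and Farkas' lemma into the spectral condition $\lambda_1(\mat{D}_{\vect{g}(\lambda)}^2\matOme)<1$ (Prop.~\ref{prop:eigenvalue_condition_good_sol_QVE_real_line}). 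That condition is what rules out any real solution of $\det(\mat{I}_K-\matGam\Dgz)=0$ off the bulk when $\lambda_1(\matOme)\leq1$ (your ``bounded away from zero'' claim has no mechanism otherwise), what guarantees $\vect{1}_K-\vgone\succ\vect{0}_K$ so that the QVE rewrites as an exact eigenvector equation placing the outlier at $1$, and what (via Perron--Frobenius and monotonicity of $\vect{g}$ above the edge) gives the simplicity of the eigenvalue $1$ of $\mat{D}_{\vgone}\matOme$ and its identification as the \emph{top} outlier — simplicity being needed for the simple-pole/residue computation you invoke for $C$. Your closing remark identifies the isotropic local law as the main obstacle, but that input is essentially available from the cited literature; the missing ingredient is this real-line analysis of the QVE, without which neither phase of the theorem can be closed.
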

We highlight that the expression for the overlap after the transition involves a quantity $\vect{g}(1)$ which is a root of a quadratic vector equation, and in general the solutions of this system of equations do not admit a closed-form expression. 

In Fig.~\ref{fig:top_eigenvalue}, we illustrated the behavior of the theoretical limiting spectral distribution and top eigenvalue of the matrix $\rdmmat{\tilde{Y}}/\sqrt{N}$ before, at, and after the critical value $\lambda_1(\mat{\Omega}_K) = 1$ for specific values of the parameters. In Fig.~\ref{fig:overlap}, we plotted the value of the overlaps for different models and different values of the $\lambda_1(\mat{\Omega}_K)$.

We recall that inference on inhomogeneous models of the form given by Eq.~\eqref{eq:low-rank_inhomogenous} are known to exhibit a \emph{statistical-to-computational gap}, see Lem.~2.6 of  \cite{AJFL_inhomo}  and \cite{aubin2019spiked,bandeira2018notes} for other models exhibiting such phenomenon. Our main result proves Conjecture 2.6 in \cite{pak2023optimal} and shows that the threshold for PCA on the transformed matrix $\rdmmat{\tilde{Y}}$ matches the optimal algorithmic recovery threshold for these inhomogeneous models, see Part 2 of Lem.~2.6 in \cite{AJFL_inhomo}. We also note from Prop. 2.18 of \cite{AJFL_inhomo} that the thresholds for spectral methods on the original matrix $\rdmmat{Y}$ or on $\rdmmat{Y} \odot (\mat{\Delta}^{\odot -1/2})$ (which \emph{a priori} removes the inhomogeneity in $\rdmmat{Y}$ at the cost of transforming the signal vector $\vect{x}$) are always strictly worse than this optimal algorithmic recovery. This, in particular, demonstrates the relevance of using $\rdmmat{\Tilde{Y}}$, and the interest in the guaranty we provide, for spectral methods, rather than those two matrices.

\begin{figure*}[ht]
\vskip 0.2in
\begin{center}
\includegraphics[width=0.32\linewidth]{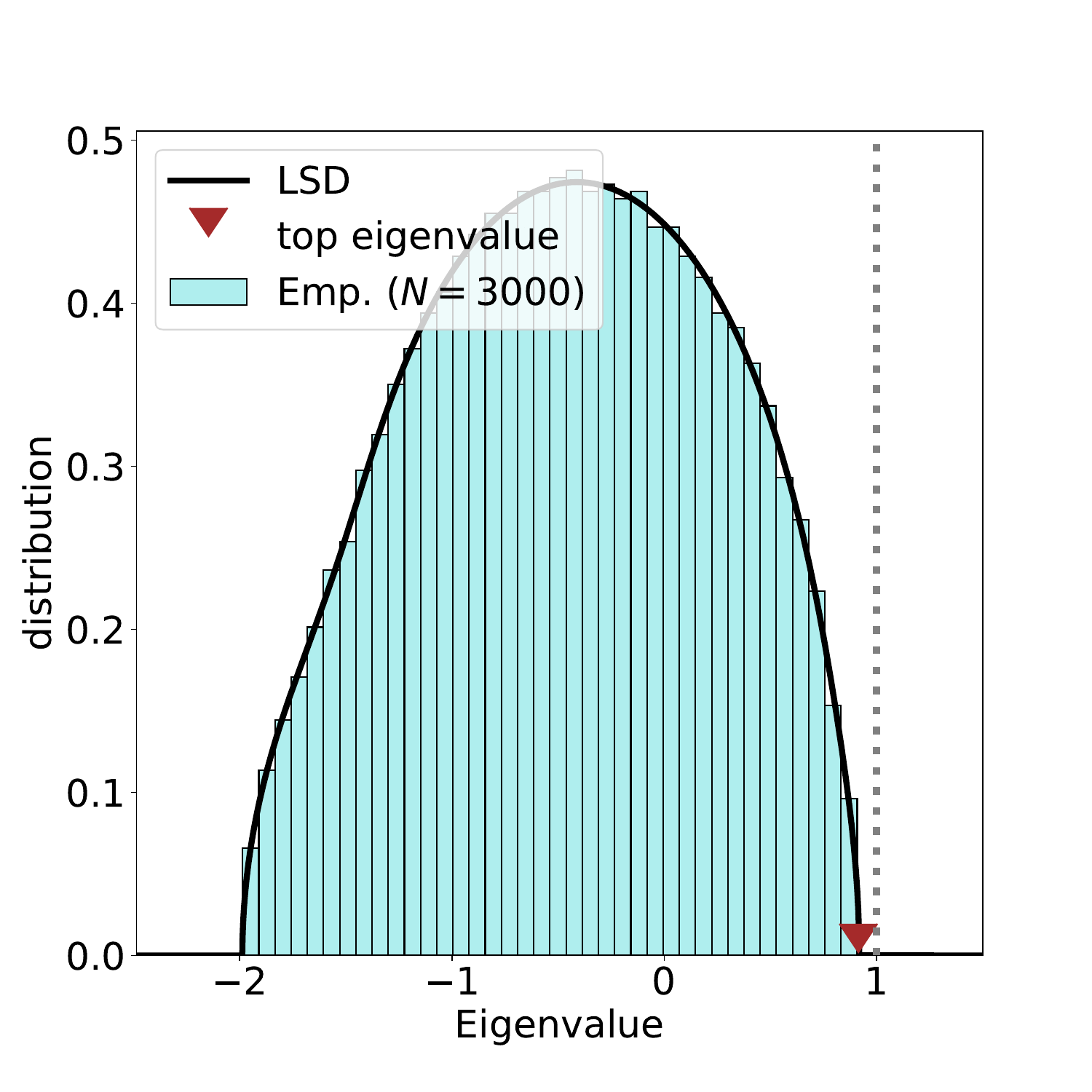}
\includegraphics[width=0.32\linewidth]{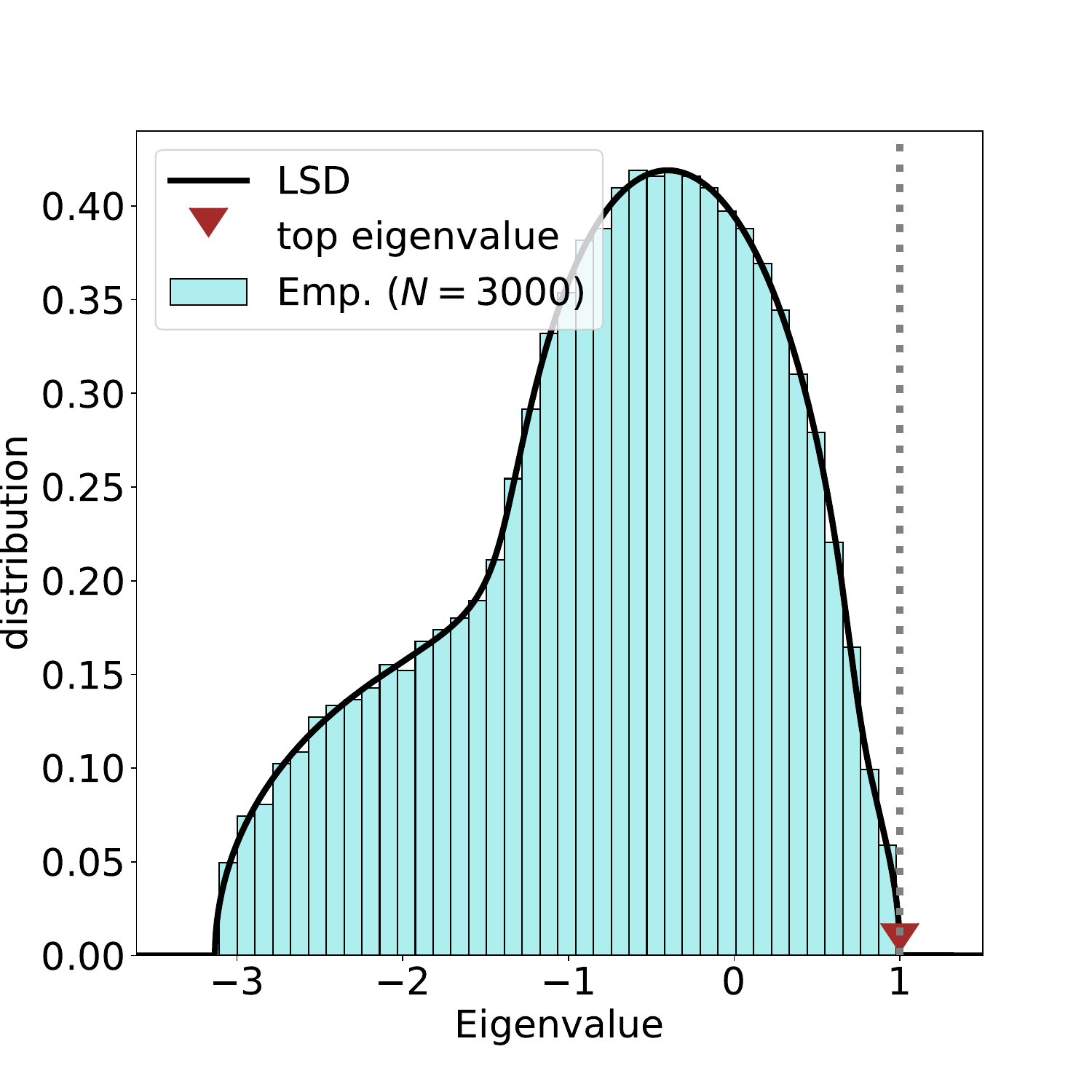}
\includegraphics[width=0.32\linewidth]{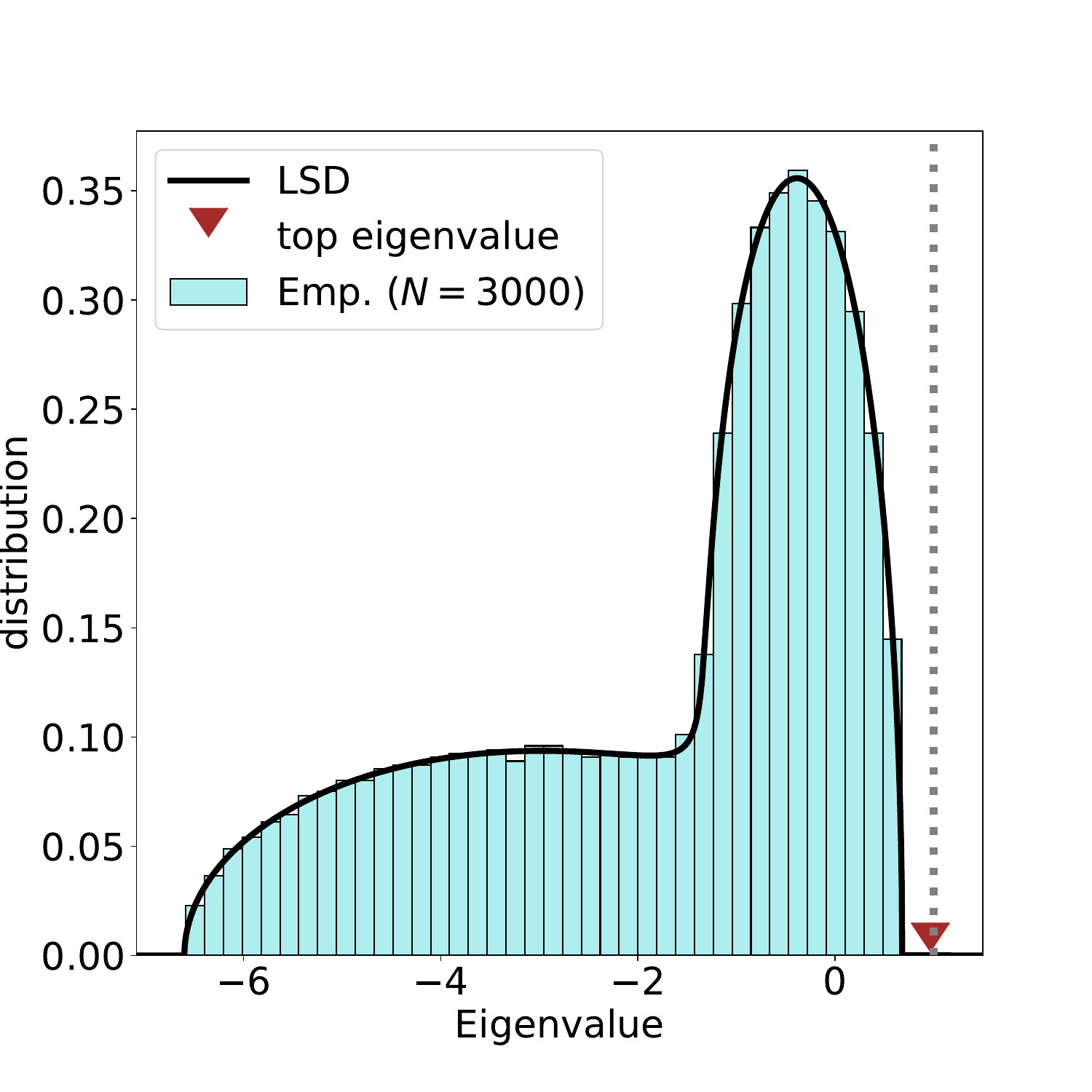}
\caption{\textbf{Eigenvalue Distribution and Top Eigenvalue Position for Different Value of $\lambda_1(\matOme)$.} The three figures correspond to a model with $K=2$, $\vect{\rho} = (1/2,1/2)$ and   $ \mat{S}_K = {\small\begin{pmatrix}
    t & 1/2 \\
    1/2 & 1/4
\end{pmatrix}}$ with a value of the parameter $t$ different in each figure and chosen such that \textbf{(Left)} $\lambda_1(\matOme) = 0.5 <1$, \textbf{(Center)} $\lambda_1(\matOme) = 1.0$ and  \textbf{(Right)} $\lambda_1(\matOme) = 3.0 >1$. The black curve corresponds to the theoretical value of the limiting spectral distribution (LSD) obtained by solving the QVE of Eq.~\eqref{eq:QVE} numerically, while the colored histogram corresponds to the empirical distribution of a sample $\rdmmat{\Tilde{Y}}/\sqrt{N}$ with $N=3000$,  and the red triangle corresponds to the empirical value of its largest eigenvalue. The signal $\vect{x}$ has been sampled from a standard normal distribution. Before the transition (left), the rightmost edge is below one and there is no outlier; at the transition (center), the rightmost edge touches the value one; and after the transition (right) there is an outlier at one.   }
\label{fig:top_eigenvalue}
\end{center}
\vskip -0.2in
\end{figure*}

\begin{figure*}[ht]
\vskip 0.4in
\begin{center}
\includegraphics[width=0.49\linewidth]{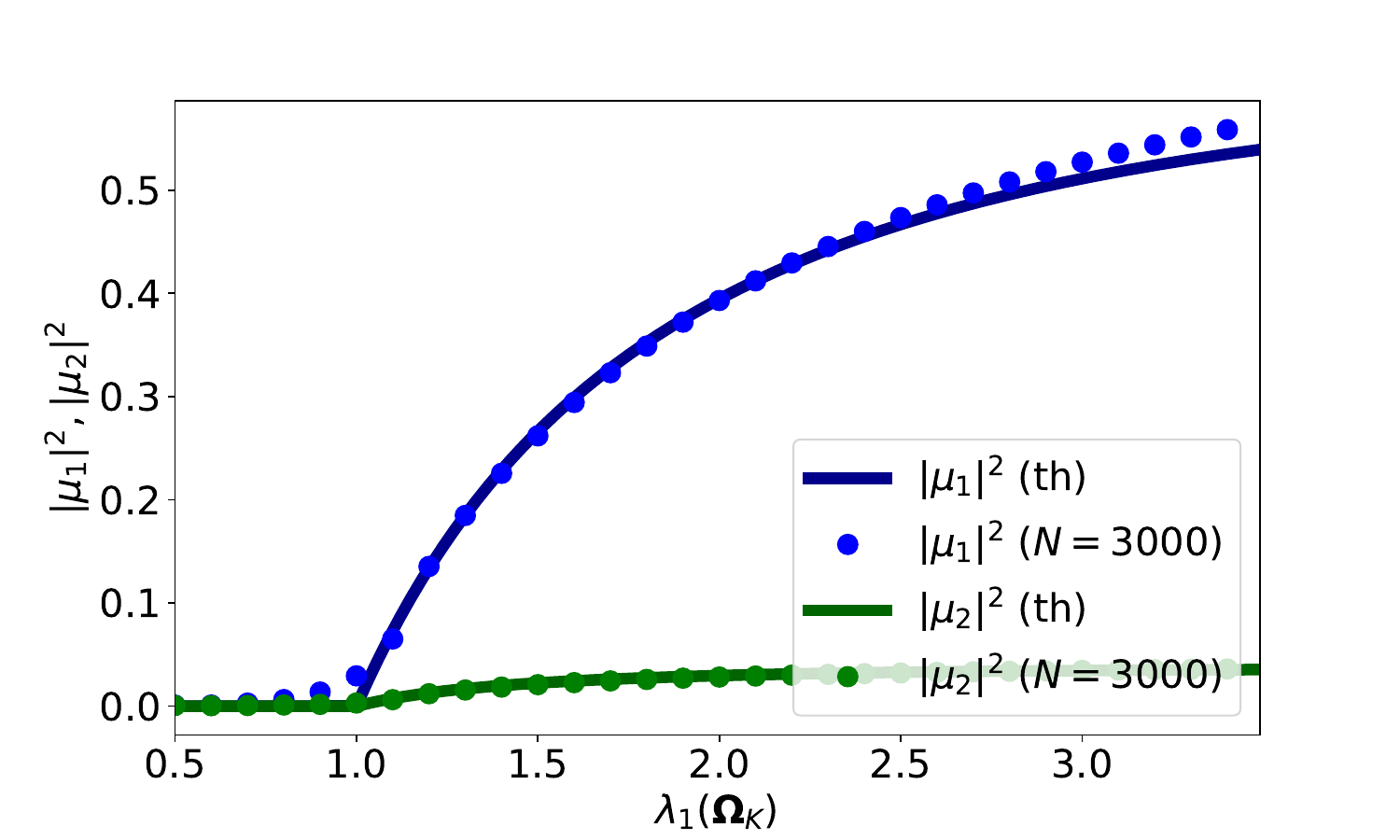}
\includegraphics[width=0.49\linewidth]{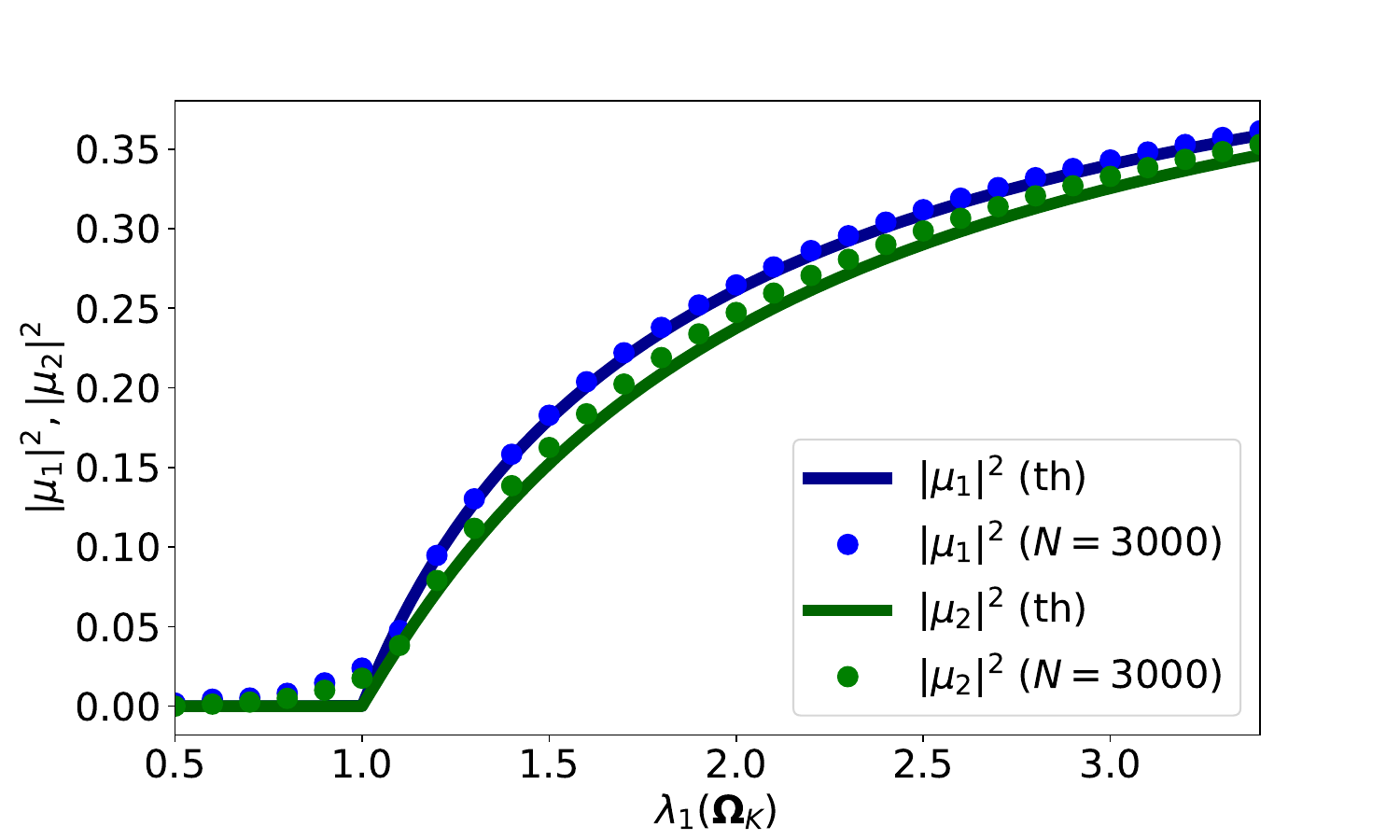}
\caption{\textbf{Overlap Vector for Different Value of $\lambda_1(\matOme)$ and Different Models.} \textbf{(Left)} Value of the square of the overlaps for a model with $K=2$, $\vect{\rho} = ( 1/2, 1/2)$ and $\mat{S}_K = {\small\begin{pmatrix}
    t & 1/2 \\
    1/2 & 1/2
\end{pmatrix}}$ where the range of the parameter $t$ is set such that $\lambda_1(\matOme)$ varies from $0.5$ to $3.5$. \textbf{(Right)} Value of the square of the overlaps for a model with $K=2$, $\vect{\rho} = (1/2, 1/2)$ and $\mat{S}_K = {\small\begin{pmatrix}
    1 & t \\
    t & 1/2
\end{pmatrix}}$ where the range of the parameter $t$ is set such that $\lambda_1(\matOme)$ varies from $0.5$ to $3.5$. In both cases, the dots represent an average over $10$ samples of the empirical value of the overlaps for $N=3000$, with $\vect{x}$ sampled from a standard Gaussian distribution. }
\label{fig:overlap}
\end{center}
\vskip -0.2in
\end{figure*}

\section{Outline of the Proof}

We break down the proof of  Thm.~\ref{thm:main_thm} into several steps by first establishing the spectral properties of the matrices $\rdmmat{X}/\sqrt{N}$ defined in Eq.~\eqref{eq:def_matX} and $\mat{Z}$ defined in Eq.~\eqref{eq:def_matZ}, separately. Combining these properties, we prove that if there is an outlier in $\rdmmat{\Tilde{Y}}/\sqrt{N}$, then a certain $(K \times K)$ matrix must have an eigenvalue at one.  For $\lambda_1(\matOme) <1$ we show that this condition is not achievable, while for  $\lambda_1(\matOme) >1$, we show that this condition is satisfied and sufficient by constructing explicitly the corresponding eigenvector. We also describe how one can obtain the overlap vector $\vect{\mu}$.

\subsection{Properties of the Limiting Spectral Distribution and its Stieltjes Transform}
\paragraph{Quadratic Vector Equation --- }

The matrix $\rdmmat{\Tilde{Y}}/\sqrt{N}$ is a fixed-rank perturbation of the  variance-profile Wigner matrix $\rdmmat{X}/\sqrt{N}$. By Weyl's interlacing theorem, as $N$ goes to infinity its limiting spectral distribution (LSD) is thus equal to  $\mu_X$, the LSD of $\rdmmat{X}/\sqrt{N}$. The latter is well-understood and has been studied extensively in the series of works \cite{ajanki_singularities_2017,ajanki_universality_2017,ajanki_quadratic_2019}, which tackle a more general setting than the block structure of this paper. Even in the block-structure setting, there is, in general, no closed-form expression for the density $\mu_X$.  It is however known that this distribution is supported on a finite union of compact intervals, and the behavior of this distribution at the endpoints of those intervals is also well understood. 

In the following, we will mainly need the following characterization of the Stieltjes transform of the LSD. We recall from random matrix theory that the Stietljes transform of a continuous measure $\mu$ is defined for any $z \in \mathbb{C} \setminus \mathrm{Supp}(\mu)$ as $g(z):= \int_{\mathrm{Supp}(\mu)} (z-\lambda)^{-1} \mathrm{d} \mu(\lambda)$ and this transform uniquely characterizes the distribution $\mu$. For variance-profile Wigner matrices, this Stietljes transform is characterized by the following result. 

\begin{proposition}[Quadratic Vector Equation for the Stieltjes]
\label{prop:QVE}
Let $g_X(z)$ be the Stieltjes transform of the limiting spectral distribution $\mu_X$ of the matrix $\rdmmat{X}/\sqrt{N}$, then for any $z \in \mathbb{H}_-$, we have $g_X(z):= \sum_{k=1}^K \rho_k g_k(z)$ where $\vect{g}(z):= (g_1(z), \dots, g_K(z))$ is the unique solution of the quadratic vector equation (QVE):
\begin{equation} \label{eq:QVE}
    \vect{1}_K 
    = z \vect{g}  - \vect{g} \odot \mat{\Gamma}_K (\vect{g} - \vect{1}_K) \, , 
\end{equation}
such that  
\begin{equation}
\label{eq:condition_QVE}
    \vgz \in (\mathbb{H}_+)^K \, . 
\end{equation}
\end{proposition}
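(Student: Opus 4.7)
The plan is to adapt the resolvent method for variance-profile Wigner matrices (the framework of Ajanki--Erd\H{o}s--Kr\"uger) to our block-constant setting. Define the resolvent $\mat{G}(z) := (z\mat{I}_N - \rdmmat{X}/\sqrt{N})^{-1}$ for $z \in \mathbb{H}_-$; then $g_X(z) = \lim_{N \to \infty} N^{-1} \Tr \mat{G}(z)$, and the natural block-level observables are the averages $g_k^{(N)}(z) := |B_k|^{-1} \sum_{j \in B_k} G_{jj}(z)$. The goal is to show that the vector $\vect{g}^{(N)}(z)$ converges (entrywise) to a solution $\vect{g}(z)$ of the announced quadratic vector equation (QVE), and that the limit is uniquely characterized by condition~\eqref{eq:condition_QVE}.

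The derivation of the QVE proceeds via the Schur complement identity
\[
\frac{1}{G_{ii}(z)} = z - (\rdmmat{X}/\sqrt{N})_{ii} - \sum_{j,\ell \neq i} (\rdmmat{X}/\sqrt{N})_{ij}\, G^{(i)}_{j\ell}(z)\, (\rdmmat{X}/\sqrt{N})_{\ell i},
\]
where $\mat{G}^{(i)}$ is the resolvent of the matrix with the $i$-th row and column removed. A Hanson--Wright type concentration bound shows that the quadratic form concentrates around its expectation $\sum_{j \neq i} \Sigma_{ij}/N \cdot G^{(i)}_{jj}(z)$, which, by the block structure of $\mat{\Sigma}$, equals $(\mat{\Gamma}_K \vect{g}^{(N)}(z))_k + o(1)$ whenever $i \in B_k$. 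Meanwhile, the diagonal entry $(\rdmmat{X}/\sqrt{N})_{ii} = H_{ii}\sqrt{s_{kk}/N} - N^{-1}(\mat{\Sigma}\vect{1})_i$ converges to $-(\mat{\Gamma}_K \vect{1}_K)_k$: this is precisely the role of the deterministic shift in~\eqref{eq:def_matX}, which removes the otherwise non-vanishing mean. Substituting into the Schur identity yields
\[
\frac{1}{g_k(z)} = z - (\mat{\Gamma}_K(\vect{g}(z) - \vect{1}_K))_k,
\]
which, after multiplying both sides by $g_k(z)$, is exactly~\eqref{eq:QVE}. Writing $N^{-1}\Tr \mat{G}(z)$ as a block sum then gives the stated formula $g_X(z) = \sum_k \rho_k g_k(z)$.

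For uniqueness of $\vect{g}(z) \in (\mathbb{H}_+)^K$, I would invoke the general theory of QVEs: since $\mat{\Gamma}_K$ has strictly positive entries, the fixed-point map $\Phi_z : \vect{g} \mapsto (z \vect{1}_K - \mat{\Gamma}_K(\vect{g} - \vect{1}_K))^{\odot -1}$ preserves $(\mathbb{H}_+)^K$ for $z \in \mathbb{H}_-$ and is a strict contraction on this domain in a hyperbolic (Carath\'eodory--Reiffen--Finsler) metric; the unique fixed point defines $\vect{g}(z)$, and a standard continuity/stability argument provides existence and the analyticity of $z \mapsto \vect{g}(z)$ needed to identify it with the limit of $\vect{g}^{(N)}(z)$.

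The main obstacle I expect lies in the block-reduction step: one needs that $G_{jj}(z)$ be asymptotically constant \emph{within} each block $B_k$, not merely on average, for the $N$-dimensional self-consistent system to close cleanly into a $K$-dimensional one. In the AEK machinery this is handled via a local law, which is technically substantial. An acceptable shortcut for the present limiting statement is to work directly at the level of the block averages $g_k^{(N)}(z)$: exploiting permutation symmetry of the variance profile within each $B_k$, one derives a $K$-dimensional system satisfied up to vanishing errors, and then applies the uniqueness of solutions in $(\mathbb{H}_+)^K$ to identify the limit. Combining this with the determination of $\mu_X$ from $g_X(z)$ via the Stieltjes inversion formula completes the proof.
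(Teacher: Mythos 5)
Your proposal follows essentially the same route as the paper's own (sketched) proof: the Schur complement identity for the resolvent, concentration of the quadratic form around $\frac{1}{N}\sum_j \Sigma_{ij}G_{jj}^{(i)}(z)$, the deterministic diagonal shift producing the $+\mat{\Gamma}_K\vect{1}_K$ term, and reduction to a $K$-dimensional system via the block structure, with the full rigor (stability/uniqueness of the QVE and the local-law input) delegated to the Ajanki--Erd\H{o}s--Kr\"uger framework exactly as the paper does. Your signs and the resulting equation $1/g_k = z - (\mat{\Gamma}_K(\vect{g}-\vect{1}_K))_k$ match Eq.~\eqref{eq:QVE}, and your contraction-metric remark for uniqueness is the standard argument underlying the cited QVE theory, so the proposal is correct.
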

\begin{proof}
    We refer to \cite{ajanki_singularities_2017,ajanki_universality_2017,ajanki_quadratic_2019} for the complete proof and to App.~\ref{sec:proof_QVE} for a short sketch of the proof.
\end{proof}
Furthermore, from Cor.~1.10 of \cite{ajanki_universality_2017} (see also Thm. A.9 of \cite{husson_large_2022}), the top and bottom eigenvalue of $\rdmmat{X}/\sqrt{N}$ converge respectively to the rightmost and leftmost edge of $\mu_X$.

\paragraph{Evaluation on the Real Line ---}

Thm.~\ref{prop:QVE} completely characterizes the Stieltjes transform \emph{outside} the real line.
Yet, in many cases,  one is interested in understanding the behavior of this Stietljes transform precisely on this set. The property of the continuation on $\mathbb{R}$ is given the following result, from the same authors. 

\begin{proposition}[Continuation on the real line]
\label{prop:analytical_continuation_g}
    The solution $\vect{g}(.)$ of the quadratic vectorial equation of Eq.~\eqref{eq:QVE} on $\mathbb{H}_-$  extends to $1/3$-Hölder continuous function $\vect{g}:\overline{\mathbb{H}_-} \to (\overline{\mathbb{H}_+})^K$ and is analytical on $\mathbb{C} \setminus \overline{\mathrm{Supp}(\mu_X)}$. 
\end{proposition}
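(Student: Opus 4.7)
The plan is to use the QVE \eqref{eq:QVE} as the main analytic tool. Define the entire map
\[
F(z, \vect{g}) := z\vect{g} - \vect{g} \odot \matGam (\vect{g} - \vect{1}_K) - \vect{1}_K,
\]
so that $F(z, \vgz) = \vect{0}$ on $\mathbb{H}_-$ by Prop.~\ref{prop:QVE}. For the analyticity claim on $\mathbb{C} \setminus \overline{\mathrm{Supp}(\mu_X)}$, I would apply the holomorphic implicit function theorem: the Jacobian $\partial_{\vect{g}} F(z, \vgz)$ is an analytic $K\times K$ matrix whose determinant can be shown to vanish exactly on $\overline{\mathrm{Supp}(\mu_X)}$. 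One inclusion follows by linearizing the QVE and observing that a nontrivial kernel of $\partial_{\vect{g}} F$ generates a nonvanishing boundary imaginary part; the converse uses Stieltjes inversion applied to $g_X = \sum_k \rho_k g_k$. Off the support, IFT then yields a unique holomorphic root that must coincide with $\vgz$ by the uniqueness enforced by \eqref{eq:condition_QVE}.

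For the continuous extension to $\overline{\mathbb{H}_-}$, I would first establish the a priori bound $\sup_{z \in \mathbb{H}_-} \|\vgz\|_\infty < \infty$. Extracting imaginary parts of \eqref{eq:QVE} and using the positivity of the entries of $\matGam$ together with $\mathfrak{Im}\,\vgz \succ \vect{0}_K$ on $\mathbb{H}_-$, the quadratic term $-\vect{g}\odot\matGam\vect{g}$ acts as a nonlinear damping preventing divergence as $\mathfrak{Im}\,z \to 0^-$. Uniform boundedness combined with equicontinuity then yields subsequential limits along any sequence $z_n \to z_0 \in \mathbb{R}$, and uniqueness of the root of $F(z_0, \cdot)$ in $(\overline{\mathbb{H}_+})^K$ identifies the limit.

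The main obstacle is the sharp $1/3$-Hölder exponent. Off the support, IFT already provides local analyticity, so the issue is purely local near $\overline{\mathrm{Supp}(\mu_X)}$. Writing $\vect{g} = \vect{g}(z_0) + \vect{h}$ for $z_0$ on the support and expanding \eqref{eq:QVE} yields
\[
L_{z_0}\, \vect{h} = -(z - z_0)\,\vect{g}(z_0) + \vect{h} \odot \matGam \vect{h},
\]
with $L_{z_0}$ typically rank-deficient. At a regular spectral edge, a Lyapunov--Schmidt reduction along the one-dimensional kernel of $L_{z_0}$ yields a scalar relation $c_2 t^2 \sim (z-z_0)$, giving a square-root (hence $1/2$-Hölder) behavior. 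The sharp $1/3$ exponent is achieved at cusps---points where two components of $\mathrm{Supp}(\mu_X)$ merge and the quadratic coefficient $c_2$ degenerates---where the reduced equation becomes cubic, $c_3 t^3 \sim (z-z_0)$. The delicate part is a uniform classification of these singular points together with matching quantitative control on the nonlinear remainder $\vect{h}\odot\matGam\vect{h}$; the cusp analysis in \cite{ajanki_singularities_2017,ajanki_quadratic_2019} carries this out in the more general operator-valued setting, and a covering argument combining the local estimates with interior smoothness delivers the global $1/3$-Hölder bound.
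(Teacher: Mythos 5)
The paper itself does not reprove this result: it is quoted verbatim from Corollary~2.7 of \cite{ajanki_singularities_2017} and Theorem~A.2 of \cite{huang_strong_2023}, so your final move of deferring the cusp classification and the uniform $1/3$-exponent to those references is, in itself, no worse than what the paper does. The difficulty is with the parts you argue yourself. Your claim that $\det \partial_{\vect{g}}F(z,\vect{g}(z))$ ``vanishes exactly on $\overline{\mathrm{Supp}(\mu_X)}$'' is false: the stability operator of the QVE degenerates only where the density of $\mu_X$ vanishes (regular edges and cusps), while in the interior of the bulk it is invertible --- indeed this bulk invertibility is what the local-law machinery of \cite{ajanki_universality_2017} rests on. What the implicit-function-theorem step actually requires is only invertibility on $\mathbb{C}\setminus\overline{\mathrm{Supp}(\mu_X)}$; alternatively, analyticity off the support follows at once from the representation $g_k(z)=\int(z-\lambda)^{-1}\,\mathrm{d}\mu_k(\lambda)$ with $\mathrm{Supp}(\mu_k)\subseteq\mathrm{Supp}(\mu_X)$, which the paper invokes in its appendices.

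More seriously, your identification of the boundary limit fails as stated: for real $z_0$ the QVE does \emph{not} have a unique root in $(\overline{\mathbb{H}_+})^K$, since real solutions lie in that closure and there can be up to $2^K$ of them --- e.g.\ $\vect{g}=\vect{1}_K$ solves Eq.~\eqref{eq:QVE} at $z=1$ for every choice of parameters, yet may be spurious. This non-uniqueness on the real axis is precisely why the paper needs Lemma~\ref{prop:condition_good_sol_QVE_real_line} and Prop.~\ref{prop:eigenvalue_condition_good_sol_QVE_real_line} to single out the genuine continuation, so the step ``uniqueness of the root of $F(z_0,\cdot)$ in $(\overline{\mathbb{H}_+})^K$ identifies the limit'' breaks down. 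In addition, the equicontinuity you invoke to extract and identify subsequential limits is essentially the Hölder continuity being proven, so that step is circular as written. In summary: the analyticity claim can be salvaged along your lines (or more cheaply via the Stieltjes representation), but the continuous extension and the sharp $1/3$-Hölder bound are either incorrect as argued or outsourced to the same references the paper cites, so the proposal does not amount to an independent proof of the statement.
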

\begin{proof}
    See Corr.~2.7 of \cite{ajanki_singularities_2017} and Thm.~A.2 of \cite{huang_strong_2023}.
\end{proof}
Note that while Prop.~\ref{prop:analytical_continuation_g} gives a regularity property for the continuation on the real line, it does not immediately give us a way to know if a certain solution of Eq.~\eqref{eq:QVE} evaluated on $\mathbb{R} \setminus \overline{\mathrm{Supp}(\mu_X)}$ corresponds to the proper analytical continuation of $\vect{g}(.)$. Take for example $z=1$, then one can immediately check that $\vect{1}_K$ is always \emph{one} solution (among the $2^K$ possible solutions)  of Eq.~\eqref{eq:QVE}, yet does it correspond to $\vect{g}(1)$ or to a spurious solution of Eq.~\eqref{eq:QVE}? The following result shows how to discriminate between the two cases.

\begin{lemma} 
\label{prop:condition_good_sol_QVE_real_line}
Let $\lambda \in \mathbb{R} \setminus \overline{\mathrm{Supp}(\mu_X)}$ and consider any solution $\Tilde{\vect{g}}(\lambda) \in (\mathbb{R}_*)^K$ of Eq.~\eqref{eq:QVE} evaluated at $z=\lambda$, then   $\Tilde{\vect{g}}(\lambda)=\vect{g}(\lambda)$ if and only if the linear system of equations
\begin{align}
\label{eq:condition_good_solution_QVE}
   ( \mat{D}_{\Tilde{\vect{g}}(\lambda)}^{-2} -\matGam) \vect{y} =  \vect{1}_K \, , 
\end{align}
admits a unique positive solution $\vect{y} \succ \vect{0}_K$. 
\end{lemma}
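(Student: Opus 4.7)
The plan is to pin down the correct branch of the QVE at $\lambda \in \mathbb{R} \setminus \overline{\mathrm{Supp}(\mu_X)}$ via the sign of $\vect{g}'(\lambda)$, which the Stieltjes-transform structure forces to be strictly negative coordinatewise. Dividing the QVE by $\vect{g}$ componentwise yields the equivalent form $\vect{g}^{\odot -1} + \matGam(\vect{g} - \vect{1}_K) = z \vect{1}_K$, and differentiating in $z$ gives the key identity
\begin{equation*}
\bigl( \matGam - \mat{D}_{\vect{g}(z)}^{-2} \bigr) \, \vect{g}'(z) = \vect{1}_K,
\end{equation*}
valid on $\mathbb{C} \setminus \overline{\mathrm{Supp}(\mu_X)}$ by the analyticity in Prop.~\ref{prop:analytical_continuation_g}. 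This is exactly Eq.~\eqref{eq:condition_good_solution_QVE} with $\vect{y} = -\vect{g}'(\lambda)$. For the forward direction, I would use that each $g_k(z)$ is itself the Stieltjes transform of a positive measure $\nu_k$ --- the limiting block-weighted spectral measure of $\rdmmat{X}/\sqrt{N}$ coming from the diagonal of the resolvent on $B_k$ --- so that $g_k'(\lambda) = -\int (\lambda - t)^{-2} \, \mathrm{d}\nu_k(t) < 0$ strictly for every $\lambda$ in the resolvent set. Hence $-\vect{g}'(\lambda) \succ \vect{0}_K$ solves the linear system, and uniqueness follows from the invertibility of $\mat{D}_{\vect{g}(\lambda)}^{-2} - \matGam$.

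For the reverse direction, let $\Tilde{\vect{g}}(\lambda)$ be a real solution of the QVE at $\lambda$ for which the linear system admits a unique positive solution $\vect{y}$. Uniqueness forces $\mat{D}_{\Tilde{\vect{g}}(\lambda)}^{-2} - \matGam$ to be invertible; since this matrix is (up to a sign) the Jacobian with respect to $\vect{h}$ of $G(z, \vect{h}) := \vect{h}^{\odot -1} + \matGam(\vect{h} - \vect{1}_K) - z \vect{1}_K$ at $(\lambda, \Tilde{\vect{g}}(\lambda))$, the implicit function theorem produces a local analytic solution $\vect{g}_{\mathrm{loc}}(\cdot)$ of the QVE with $\vect{g}_{\mathrm{loc}}(\lambda) = \Tilde{\vect{g}}(\lambda)$ and $\vect{g}_{\mathrm{loc}}'(\lambda) = -\vect{y}$. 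Taylor-expanding at $z = \lambda - i\eta$ with $\eta > 0$ small,
\begin{equation*}
\vect{g}_{\mathrm{loc}}(\lambda - i\eta) = \Tilde{\vect{g}}(\lambda) + i \eta \, \vect{y} + O(\eta^2),
\end{equation*}
so that $\mathfrak{Im} \, \vect{g}_{\mathrm{loc}}(\lambda - i\eta) \succ \vect{0}_K$ for small enough $\eta$. The uniqueness clause of Prop.~\ref{prop:QVE} then forces $\vect{g}_{\mathrm{loc}}(\lambda - i\eta) = \vect{g}(\lambda - i\eta)$, and passing $\eta \to 0^+$ via the H\"older continuity in Prop.~\ref{prop:analytical_continuation_g} yields $\Tilde{\vect{g}}(\lambda) = \vect{g}(\lambda)$.

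The main point requiring care is the rigorous identification, in the variance-profile limit, of each $g_k$ as the Stieltjes transform of a positive measure --- this positivity is what selects the physical branch among the up to $2^K$ algebraic solutions of the QVE. At finite $N$ it is immediate from the eigendecomposition $[(z I - \rdmmat{X}/\sqrt{N})^{-1}]_{ii} = \sum_j |\vect{u}_j(i)|^2 / (z - \lambda_j)$, and the passage to the limit can be carried out using the quantitative local laws of \cite{ajanki_universality_2017}. Once this positivity is secured, the derivative identity, the implicit function theorem, and the uniqueness clause of the QVE assemble into the claimed equivalence.
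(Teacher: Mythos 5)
Your proof is correct and follows essentially the same route as the paper's: the positive solution of \eqref{eq:condition_good_solution_QVE} is identified with $-\vect{g}'(\lambda)$ (equivalently the limit of $\mathfrak{Im}\,\vect{g}(\lambda-i\eta)/\eta$ as one approaches $\lambda$ from $\mathbb{H}_-$), its positivity coming from each $g_k$ being the Stieltjes transform of a positive measure, and the converse coming from the uniqueness of the $(\mathbb{H}_+)^K$-valued solution of the QVE, with your implicit-function-theorem/Taylor step simply making explicit what the paper only sketches. The one point you assert without justification --- invertibility of $\mat{D}_{\vect{g}(\lambda)}^{-2}-\matGam$ away from $\overline{\mathrm{Supp}(\mu_X)}$, needed for uniqueness in the forward direction --- is treated at the same level of informality in the paper's own proof (via its appeal to the inverse function theorem), so it does not constitute a departure or a substantive gap.
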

\begin{proof}
    The proof of this result is given in Appendix.~\ref{sec:proof_Linear_System_QVE_real_line} and relies on studying the behavior of Eq.~\eqref{eq:QVE} as we approach the value $z=\lambda$  from the lower complex plane. See also \cite{huang_strong_2023} for a similar statement and proof. 
\end{proof}
\jump Next, we show a necessary and sufficient condition for Eq.~\eqref{eq:condition_good_solution_QVE} to have a positive solution.
\begin{proposition}
    \label{prop:eigenvalue_condition_good_sol_QVE_real_line}
The linear system of Eq.~\eqref{eq:condition_good_solution_QVE} admits a unique positive solution if and only if $\lambda_1( \mat{D}_{\Tilde{\vect{g}}(\lambda)}^2 \matOme) < 1$.     
\end{proposition}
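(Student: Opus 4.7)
The plan is to recast Eq.~\eqref{eq:condition_good_solution_QVE} as a linear system to which the classical Perron--Frobenius / Neumann-series machinery applies directly. Since $\Tilde{\vect{g}}(\lambda) \in (\R_*)^K$, the diagonal matrix $\mat{D}_{\Tilde{\vect{g}}(\lambda)}^2$ is invertible with strictly positive diagonal, so left-multiplying Eq.~\eqref{eq:condition_good_solution_QVE} by it yields the equivalent system
\begin{equation*}
(\mat{I}_K - \mat{A}) \vect{y} = \vect{b}, \qquad \mat{A} := \mat{D}_{\Tilde{\vect{g}}(\lambda)}^2 \matGam, \qquad \vect{b} := \mat{D}_{\Tilde{\vect{g}}(\lambda)}^2 \vect{1}_K.
\end{equation*}
Because $\matGam = \mat{S}_K \mat{D}_{\vect{\rho}}$ has strictly positive entries (all $s_{kl}>0$ and $\rho_k>0$), the matrix $\mat{A}$ has strictly positive entries, and $\vect{b} \succ \vect{0}_K$. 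Moreover, the similarity relation of Eq.~\eqref{eq:def_matGam} gives by direct computation $\mat{A} = \mat{D}_{\vect{\rho}^{\odot -1/2}} (\mat{D}_{\Tilde{\vect{g}}(\lambda)}^2 \matOme) \mat{D}_{\vect{\rho}^{\odot 1/2}}$, so $\mat{A}$ and $\mat{D}_{\Tilde{\vect{g}}(\lambda)}^2 \matOme$ share the same spectrum; in particular $r := \lambda_1(\mat{A}) = \lambda_1(\mat{D}_{\Tilde{\vect{g}}(\lambda)}^2 \matOme)$. By Perron--Frobenius applied to the positive matrix $\mat{A}$, the value $r$ is simple, coincides with the spectral radius, and admits strictly positive right and left eigenvectors $\vect{v}_R$ and $\vect{v}_L$.

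For the $(\Leftarrow)$ direction, assume $r < 1$. Then $1$ is not an eigenvalue of $\mat{A}$, so $\mat{I}_K - \mat{A}$ is invertible and the Neumann series $(\mat{I}_K - \mat{A})^{-1} = \sum_{n \geq 0} \mat{A}^n$ converges entrywise with non-negative entries. The unique solution is therefore $\vect{y} = \sum_{n \geq 0} \mat{A}^n \vect{b}$, and since the $n=0$ term already equals the strictly positive vector $\vect{b}$, we conclude $\vect{y} \succ \vect{0}_K$.

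For the $(\Rightarrow)$ direction, I establish the contrapositive: if $r \geq 1$, then no positive $\vect{y}$ can solve the system, so \emph{a fortiori} there is no unique positive solution. Suppose for contradiction $\vect{y} \succ \vect{0}_K$ is a solution. Left-multiplying the system by the positive left Perron eigenvector $\vect{v}_L^\top$ and using $\vect{v}_L^\top \mat{A} = r\, \vect{v}_L^\top$ yields
\begin{equation*}
(1-r)\,\langle \vect{v}_L, \vect{y} \rangle \;=\; \langle \vect{v}_L, \vect{b} \rangle.
\end{equation*}
The right-hand side is strictly positive since both $\vect{v}_L$ and $\vect{b}$ are strictly positive, while the left-hand side is non-positive (as $r \geq 1$ and $\langle \vect{v}_L, \vect{y} \rangle > 0$), yielding the desired contradiction. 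This argument treats $r=1$ and $r>1$ uniformly, without needing to know whether $\mat{I}_K - \mat{A}$ is singular.

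I do not foresee any substantive obstacle: the only conceptual step is the observation that $\mat{A}$ is entrywise positive with spectrum equal to that of $\mat{D}_{\Tilde{\vect{g}}(\lambda)}^2 \matOme$, after which the statement becomes a textbook application of Perron--Frobenius (equivalently, the characterization of non-singular $M$-matrices by the spectral-radius condition $\rho(\mat{A}) < 1$).
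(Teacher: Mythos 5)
Your proof is correct, and while it starts from the same reduction as the paper --- left-multiplying \eqref{eq:condition_good_solution_QVE} by $\mat{D}_{\Tilde{\vect{g}}(\lambda)}^{2}$ to get $(\mat{I}_K-\mat{D}_{\Tilde{\vect{g}}(\lambda)}^{2}\matGam)\vect{y}=\Tilde{\vect{g}}(\lambda)^{\odot 2}$ and using the similarity of $\mat{D}_{\Tilde{\vect{g}}(\lambda)}^{2}\matGam$ and $\mat{D}_{\Tilde{\vect{g}}(\lambda)}^{2}\matOme$ --- the core argument is genuinely different. The paper proves a general statement for nonnegative matrices $\mat{Q}$ via $M$- and $Z$-matrix theory: inverse positivity of nonsingular $M$-matrices for $\lambda_1(\mat{Q})<1$, Perron--Frobenius for $Z$-matrices combined with Farkas' lemma to rule out nonnegative solutions for $\lambda_1(\mat{Q})>1$, and a separate singularity/non-uniqueness argument at $\lambda_1(\mat{Q})=1$. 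You instead use the Neumann series for the subcritical case (which is essentially the standard proof of the inverse-positivity lemma the paper cites) and, for $\lambda_1\geq 1$, a one-line pairing with the strictly positive left Perron eigenvector of the entrywise positive matrix $\mat{D}_{\Tilde{\vect{g}}(\lambda)}^{2}\matGam$; this replaces Farkas' lemma by an explicit duality certificate, treats $\lambda_1=1$ and $\lambda_1>1$ uniformly, and in fact yields the slightly stronger conclusion that no positive solution exists at all in that regime (the paper only argues loss of uniqueness at the threshold). The trade-off is that your argument leans on strict positivity of the entries (valid here since $s_{kl}>0$ and $\rho_k>0$), whereas the paper's $M$/$Z$-matrix formulation is stated for merely nonnegative matrices; your pairing argument would still extend to that setting with a nonnegative, nonzero left Perron vector, since the right-hand side $\Tilde{\vect{g}}(\lambda)^{\odot 2}$ is strictly positive.
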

\begin{proof}
 The proof of (a generalized version of) this result is given in App.~\ref{sec:proof_positivity_of_solution_linear_system} and relies on properties of so-called $M$- and $Z$- matrices and the use of Farka's lemma to characterize positive solutions of \eqref{eq:condition_good_solution_QVE}. 
\end{proof}
Note that in particular, the use of this lemma for $z=1$ leads to the following result, which will be useful later:
\begin{corollary}
\label{cor:behavior_g1}
    We have
    \begin{itemize}[itemsep=1pt,leftmargin=2em]
        \item For $\lambda_1(\matOme) \leq 1$: $\vect{g}(1) = \vect{1}_K$;
        \item  For $\lambda_1(\matOme) > 1$: $\vect{g}(1) \neq \vect{1}_K$.
    \end{itemize}
\end{corollary}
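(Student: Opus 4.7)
The plan is to observe that $\vect{g}=\vect{1}_K$ is always \emph{a} solution of the QVE at $z=1$, since substituting into Eq.~\eqref{eq:QVE} yields the tautology $\vect{1}_K = \vect{1}_K - \vect{1}_K \odot \matGam\vect{0}_K$, and then to apply Lem.~\ref{prop:condition_good_sol_QVE_real_line} together with Prop.~\ref{prop:eigenvalue_condition_good_sol_QVE_real_line} to the candidate $\Tilde{\vect{g}}(1)=\vect{1}_K$ to decide, depending on $\lambda_1(\matOme)$, whether this particular solution is actually the analytic continuation $\vect{g}(1)$.

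For the sub-critical regime $\lambda_1(\matOme)<1$, since $\mat{D}_{\vect{1}_K}^2=\mat{I}_K$, Prop.~\ref{prop:eigenvalue_condition_good_sol_QVE_real_line} applies directly and the linear system $(\mat{I}_K-\matGam)\vect{y}=\vect{1}_K$ admits a unique positive solution, explicitly the Neumann series $\sum_{n\geq 0}\matGam^n\vect{1}_K$, which converges componentwise-positively because $\matGam$ has non-negative entries and $\lambda_1(\matGam)=\lambda_1(\matOme)<1$. Lem.~\ref{prop:condition_good_sol_QVE_real_line} then yields $\vect{g}(1)=\vect{1}_K$, provided we first verify $1\notin\overline{\mathrm{Supp}(\mu_X)}$. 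I would establish this via the implicit function theorem applied to the QVE: the Jacobian $\partial_{\vect{g}}F|_{(\vect{1}_K,1)} = \matGam-\mat{I}_K$ is invertible (since $1\notin\mathrm{spec}(\matGam)$), so a local analytic branch passes through $(\vect{1}_K,1)$, and its first-order expansion along $z=1-i\epsilon$, namely $\vect{1}_K + i\epsilon(\mat{I}_K-\matGam)^{-1}\vect{1}_K + O(\epsilon^2)$, lies in $\mathbb{H}_+^K$. By uniqueness of the $\mathbb{H}_+^K$-valued solution in Prop.~\ref{prop:QVE} this branch must coincide with $\vect{g}$, and its analyticity at $z=1$ forces $1\notin\overline{\mathrm{Supp}(\mu_X)}$ through Prop.~\ref{prop:analytical_continuation_g}.

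For the super-critical regime $\lambda_1(\matOme)>1$, I would argue by contradiction. Assuming $\vect{g}(1)=\vect{1}_K$, H\"older continuity from Prop.~\ref{prop:analytical_continuation_g} gives $\vect{g}(1-i\epsilon)=\vect{1}_K+\vect{w}(\epsilon)$ with $\vect{w}(\epsilon)\to\vect{0}_K$, and linearising the QVE to first order yields $\vect{w}(\epsilon) = i\epsilon(\mat{I}_K-\matGam)^{-1}\vect{1}_K + o(\epsilon)$. The constraint $\vect{g}(\mathbb{H}_-)\subset\mathbb{H}_+^K$ from Prop.~\ref{prop:QVE} would force $(\mat{I}_K-\matGam)^{-1}\vect{1}_K\succ\vect{0}_K$; but pairing against the Perron--Frobenius left-eigenvector $\vect{v}_1\succ\vect{0}_K$ of $\matGam$ gives
\[
\langle\vect{v}_1,(\mat{I}_K-\matGam)^{-1}\vect{1}_K\rangle = (1-\lambda_1(\matGam))^{-1}\langle\vect{v}_1,\vect{1}_K\rangle < 0,
\]
a contradiction.

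The main obstacle is the borderline case $\lambda_1(\matOme)=1$ (which falls under the first bullet), where the Jacobian $\matGam-\mat{I}_K$ is singular and neither the implicit function theorem above nor the linearisation used in the super-critical case is immediately applicable. I would handle it by perturbing the parameters $(\rho_k,s_{kl})$ along a sequence with $\lambda_1(\matOme_t)<1$, applying the sub-critical analysis to obtain $\vect{g}_t(1)=\vect{1}_K$ for each $t$, and passing to the limit using continuous dependence of the QVE solution on its coefficients; the same perturbation idea also disposes of the degenerate sub-case of the super-critical regime where $1$ happens to be a subdominant eigenvalue of $\matGam$.
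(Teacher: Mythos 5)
Your treatment of the sub-critical regime is in substance the paper's own argument: observe that $\vect{1}_K$ solves Eq.~\eqref{eq:QVE} at $z=1$, and combine Lem.~\ref{prop:condition_good_sol_QVE_real_line} with Prop.~\ref{prop:eigenvalue_condition_good_sol_QVE_real_line} (with $\mat{D}_{\vect{1}_K}^2=\mat{I}_K$, so the criterion is exactly $\lambda_1(\matOme)<1$). The additional implicit-function-theorem step, producing the analytic branch through $(\vect{1}_K,1)$ whose values at $z=1-i\epsilon$ lie in $(\mathbb{H}_+)^K$ and hence coincide with the genuine solution, is a welcome supplement: it justifies that $1\notin\overline{\mathrm{Supp}(\mu_X)}$ without invoking Prop.~\ref{prop:bound_rightmost_edge} (whose proof itself cites this corollary), and in fact it already yields $\vect{g}(1)=\vect{1}_K$ directly by taking $\epsilon\to0^+$. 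For $\lambda_1(\matOme)>1$ you take a genuinely different, self-contained route: instead of reading off from Lem.~\ref{prop:condition_good_sol_QVE_real_line} and Prop.~\ref{prop:eigenvalue_condition_good_sol_QVE_real_line} that $(\mat{I}_K-\matGam)\vect{y}=\vect{1}_K$ admits no positive solution (the paper's Farkas/Perron--Frobenius route), you linearize the QVE around $(\vect{1}_K,1)$ and pair the forced imaginary direction with the Perron--Frobenius left eigenvector. When $1\notin\mathrm{spec}(\matGam)$ this is correct (the limit only gives $(\mat{I}_K-\matGam)^{-1}\vect{1}_K\succeq\vect{0}_K$ rather than $\succ\vect{0}_K$, but the strict negativity of the pairing still produces the contradiction).

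The genuine gaps are the two exceptional cases you defer to a parameter-perturbation argument. For the degenerate super-critical sub-case where $1$ is a subdominant eigenvalue of $\matGam$, the proposed repair cannot work as stated: the conclusion to be transferred is the \emph{non}-equality $\vect{g}(1)\neq\vect{1}_K$, which is not stable under limits --- $\vect{g}_t(1)\neq\vect{1}_K$ along a perturbed sequence is perfectly compatible with $\vect{g}_t(1)\to\vect{1}_K$. Moreover the linearization itself breaks there: your estimate $\vect{w}(\epsilon)=i\epsilon(\mat{I}_K-\matGam)^{-1}\vect{1}_K+o(\epsilon)$ uses invertibility of $\mat{I}_K-\matGam$ to get $|\vect{w}|=O(\epsilon)$; without it, Prop.~\ref{prop:analytical_continuation_g} only gives $|\vect{w}|=O(\epsilon^{1/3})$, so the quadratic error can dominate the $i\epsilon\vect{1}_K$ term. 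The clean fix is precisely the paper's argument, which is insensitive to this degeneracy: if $\vect{g}(1)=\vect{1}_K$, Lem.~\ref{prop:condition_good_sol_QVE_real_line} forces a positive solution of \eqref{eq:condition_good_solution_QVE}, and the Farkas-type exclusion in Prop.~\ref{prop:eigenvalue_condition_good_sol_QVE_real_line} rules this out whenever $\lambda_1(\matOme)>1$, singular $\mat{I}_K-\matGam$ or not. For the critical case $\lambda_1(\matOme)=1$ (where $z=1$ sits exactly at the rightmost edge, so Lem.~\ref{prop:condition_good_sol_QVE_real_line} does not apply), your argument rests on an asserted ``continuous dependence of the QVE solution on its coefficients'' evaluated at a point that becomes a spectral edge in the limit; this is true but nontrivial (it is a uniform stability statement of Ajanki--Erd\H{o}s--Kr\"uger type) and needs a precise citation or proof --- though, to be fair, the paper's own two-line derivation of the corollary is equally silent on this boundary case.
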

\jump As we will see later on, for the regime  $\lambda_1(\matOme) > 1$, we can slightly improve this result (see Lem.~\ref{lem:g1_SNR_g_1} below).

\paragraph{Behavior of the Rightmost Edge ---}
To understand if the top eigenvalue separates from the bulk, we need to understand the behavior of the rightmost edge, which is given by the following result. 
\begin{proposition}[The rightmost edge is bounded by one]
\label{prop:bound_rightmost_edge}
    Let $\lambda_r$ be the rightmost edge of the limiting spectral distribution $\mu_X$ of $\rdmmat{\Tilde{Y}}/\sqrt{N}$ then $\lambda_r \leq 1$ with equality if and only if $\lambda_1(\matOme) =1$. 
\end{proposition}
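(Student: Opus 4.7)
The plan is to characterize the edge value $\vect{g}_r := \vect{g}(\lambda_r)$ via the QVE together with the Perron--Frobenius theorem, and then to deduce $\lambda_r \leq 1$ from the elementary inequality $t+t^{-1}\geq 2$ for $t>0$.

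First I would record three facts about $\vect{g}_r$. It is real and componentwise positive because each $g_k$ is, on $(\lambda_r,\infty)$, the Stieltjes transform of a probability measure evaluated above its support, and the boundary value at $\lambda_r$ is its Hölder limit (Prop.~\ref{prop:analytical_continuation_g}). It satisfies the QVE at $z=\lambda_r$ by continuity of the equation in Prop.~\ref{prop:QVE}. Finally, the edge identity $\lambda_1(\mat{D}_{\vect{g}_r}\matOme\mat{D}_{\vect{g}_r})=1$ holds: for $\lambda>\lambda_r$, Prop.~\ref{prop:eigenvalue_condition_good_sol_QVE_real_line} forces $\lambda_1(\mat{D}_{\vect{g}(\lambda)}\matOme\mat{D}_{\vect{g}(\lambda)})<1$, and if this strict bound still held at $\lambda_r$ one could apply the implicit-function theorem to QVE on a full neighbourhood of $\lambda_r$, which would contradict $\lambda_r$ being the supremum of $\mathrm{Supp}(\mu_X)$.

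The heart of the argument is to contract the QVE against a cleverly chosen positive vector. By Perron--Frobenius applied to the symmetric entrywise-positive matrix $\mat{D}_{\vect{g}_r}\matOme\mat{D}_{\vect{g}_r}$, there exists a positive eigenvector $\vect{v}_r\succ\vect{0}_K$ with eigenvalue $1$. Setting
\[
\vect{q}_r \;:=\; \vect{\rho}^{\odot 1/2}\odot\vect{g}_r\odot\vect{v}_r ,
\]
a short calculation, using $\matGam^\top = \mat{D}_{\vect{\rho}^{\odot 1/2}}\matOme\,\mat{D}_{\vect{\rho}^{\odot -1/2}}$ together with the Perron identity, yields the key relation
\[
\matGam^\top\vect{q}_r \;=\; \vect{g}_r^{\odot -2}\odot \vect{q}_r .
\]
Taking the inner product of the rearranged QVE $\vect{g}_r^{\odot -1} = \lambda_r\vect{1}_K - \matGam(\vect{g}_r-\vect{1}_K)$ with $\vect{q}_r$ and substituting this identity, and writing $a_k:=\rho_k^{1/2} v_{r,k}>0$, one obtains after one line of algebra
\[
\sum_{k=1}^K a_k\!\left(g_{r,k}+\frac{1}{g_{r,k}}-2\right) \;=\; (1-\lambda_r)\sum_{k=1}^K a_k\, g_{r,k} .
\]

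The conclusion is then immediate. Since $t+t^{-1}\geq 2$ for every $t>0$ with equality iff $t=1$, and since all $a_k>0$, the left-hand side is non-negative and vanishes iff $\vect{g}_r=\vect{1}_K$; together with $\sum_k a_k g_{r,k}>0$ this forces $\lambda_r\leq 1$, with equality iff $\vect{g}_r=\vect{1}_K$. To close the equivalence, substituting $\vect{g}_r=\vect{1}_K$ into the edge identity yields $\lambda_1(\matOme)=1$; conversely, if $\lambda_1(\matOme)=1$ then Cor.~\ref{cor:behavior_g1} gives $\vect{g}(1)=\vect{1}_K$ and Lem.~\ref{prop:condition_good_sol_QVE_real_line} together with Prop.~\ref{prop:eigenvalue_condition_good_sol_QVE_real_line} forces $1\in\overline{\mathrm{Supp}(\mu_X)}$, so $\lambda_r\geq 1$ and hence $\lambda_r=1$ by the bound already proved. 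The one delicate point is the verification of $\matGam^\top\vect{q}_r = \vect{g}_r^{\odot -2}\odot\vect{q}_r$, which is slightly technical because $\matGam$ is not symmetric; the asymmetric rescaling by $\vect{\rho}^{\odot 1/2}$ in the definition of $\vect{q}_r$ is precisely what converts the Perron eigenvector equation for $\matOme$ into the required identity for $\matGam^\top$.
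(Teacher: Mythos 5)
Your argument is correct, and its core computation is the paper's computation in disguise: if you set $\vect{w}:=\mat{D}_{\vect{g}_r}^{-2}\vect{q}_r=\vect{\rho}^{\odot 1/2}\odot\vect{g}_r^{\odot -1}\odot\vect{v}_r$, your key relation $\matGam^{\top}\vect{q}_r=\vect{g}_r^{\odot -2}\odot\vect{q}_r$ is exactly the statement $\vect{w}^{\top}\mat{D}_{\vect{g}_r}^{2}\matGam=\vect{w}^{\top}$ used in the paper, and your identity $\sum_k a_k\big(g_{r,k}+g_{r,k}^{-1}-2\big)=(1-\lambda_r)\sum_k a_k g_{r,k}$ is the paper's $(1-\lambda_\star)\langle\vect{w}_\star,\vect{g}_\star^{\odot2}\rangle=\langle\vect{w}_\star,(\vect{1}_K-\vect{g}_\star)^{\odot2}\rangle$ after the same contraction of the QVE. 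Where you genuinely differ is in how the positive test vector and the edge characterization are obtained: the paper invokes Thm.~2.6 of \cite{ajanki_singularities_2017} (an edge is a real QVE solution with singular Jacobian) and then Perron--Frobenius for $Z$-matrices to get positivity of the null vector, whereas you work with the boundary value $\vect{g}(\lambda_r)$ of the genuine Stieltjes transform, prove the marginal-stability identity $\lambda_1(\mat{D}_{\vect{g}_r}\matOme\mat{D}_{\vect{g}_r})=1$ by continuity from $(\lambda_r,\infty)$ plus an implicit-function-theorem contradiction, and then apply classical Perron--Frobenius to a positive symmetric matrix. Your route avoids the $Z$/$M$-matrix machinery and makes the forward equality case ($\lambda_r=1\Rightarrow\vect{g}_r=\vect{1}_K\Rightarrow\lambda_1(\matOme)=1$) immediate, since your test vector is the Perron vector itself; the price is that the IFT step is precisely where you are re-deriving the cited edge characterization, and to make it airtight you must argue that the real-analytic branch produced by the IFT in a complex neighbourhood of $\lambda_r$ coincides with $\vect{g}$ coming from the half-plane (local uniqueness from the IFT plus Stieltjes inversion), so that $\mu_X$ carries no mass in a left neighbourhood of $\lambda_r$, contradicting $\lambda_r\in\mathrm{Supp}(\mu_X)$; as written this is only asserted. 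Your converse equality direction leans on Cor.~\ref{cor:behavior_g1} at $\lambda_1(\matOme)=1$ exactly, which is the same dependence the paper's own proof has, so no additional gap is introduced there.
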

\begin{proof}
    This result is given in App.~\ref{sec:proof_bound_rightmost_edge}. It follows from a characterization of the edge of the spectrum in terms of the singularities of the solution to the quadrative vector equations as in Thm.~2.6 of \cite{ajanki_singularities_2017}
\end{proof}
Eventually, we will use the following lemma, characterizing the Stieltjes transform above the rightmost edge.
\begin{lemma}[Positivity and Monotonicity of $\vect{g}(.)$ above the rightmost edge]
\label{lem:pos_and_decreasing_sol_QVE}
    Let $\lambda_r$ be the rightmost edge of $\mu_X$, then for any $\lambda \in (\lambda_r, \infty)$ one has $\vect{g}(\lambda) \succ \vect{0}_K$ and this function is entrywise analytically decreasing on this interval.
\end{lemma}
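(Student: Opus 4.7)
The plan is to prove positivity first and then extract monotonicity essentially for free by differentiating the QVE and recognising the resulting linear system as exactly the one appearing in Lem.~\ref{prop:condition_good_sol_QVE_real_line}. The three main inputs will be the analyticity statement of Prop.~\ref{prop:analytical_continuation_g}, the large-$z$ expansion of the QVE from Prop.~\ref{prop:QVE}, and the positive-solution criterion of Lem.~\ref{prop:condition_good_sol_QVE_real_line}.

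I would start by noting that Prop.~\ref{prop:analytical_continuation_g} yields analyticity of $\vect{g}$ on $(\lambda_r,\infty)$ immediately since this interval lies in $\mathbb{C}\setminus\overline{\mathrm{Supp}(\mu_X)}$, and real-valuedness follows by a Schwarz-type reflection: the map $z\mapsto\overline{\vect{g}(\bar z)}$ solves the QVE on $\mathbb{H}_+$ with the sign condition $\vect{g}(z)\in(\mathbb{H}_-)^K$, so by uniqueness on each open half-plane and continuity across $(\lambda_r,\infty)$, $\vect{g}$ must coincide with its own conjugate there. To establish positivity, I would next extract the asymptotic $\vect{g}(\lambda)=\vect{1}_K/\lambda + O(\lambda^{-2})$ as $\lambda\to\infty$ by plugging a power series ansatz into \eqref{eq:QVE}, which gives $\vect{g}(\lambda)\succ\vect{0}_K$ for $\lambda$ large. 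The crucial non-vanishing observation is that the $k$-th component of \eqref{eq:QVE} can be rewritten as
\begin{equation*}
g_k(\lambda)\,\bigl[\lambda - \bigl(\mat{\Gamma}_K(\vect{g}(\lambda)-\vect{1}_K)\bigr)_k\bigr] = 1,
\end{equation*}
so no $g_k(\lambda)$ can vanish on $(\lambda_r,\infty)$; combined with continuity and positivity at infinity, this propagates $\vect{g}(\lambda)\succ\vect{0}_K$ across the entire interval.

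For monotonicity, the clean move is to differentiate \eqref{eq:QVE} in $\lambda$. Using the identity $\lambda\vect{1}_K-\mat{\Gamma}_K(\vect{g}-\vect{1}_K) = \mat{D}_{\vect{g}}^{-1}\vect{1}_K$ that drops out of the QVE itself, together with the Hadamard identity $\vect{g}\odot\mat{\Gamma}_K\vect{g}' = \mat{D}_{\vect{g}}\mat{\Gamma}_K\vect{g}'$, the derivative relation collapses to
\begin{equation*}
\bigl(\mat{D}_{\vect{g}(\lambda)}^{-2} - \mat{\Gamma}_K\bigr)\bigl(-\vect{g}'(\lambda)\bigr) = \vect{1}_K,
\end{equation*}
which is exactly \eqref{eq:condition_good_solution_QVE} specialised to $\tilde{\vect{g}}(\lambda) = \vect{g}(\lambda)$. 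Since $\vect{g}(\lambda)$ is tautologically the correct analytic continuation of the $\mathbb{H}_-$-solution and lies in $(\mathbb{R}_*)^K$ by the previous step, Lem.~\ref{prop:condition_good_sol_QVE_real_line} guarantees a unique positive solution to this system, hence $-\vect{g}'(\lambda)\succ\vect{0}_K$, i.e., $\vect{g}$ is entrywise strictly decreasing.

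The main obstacle I anticipate is the positivity step: each ingredient (the Schwarz reflection, the $1/\lambda$ asymptotics, the non-vanishing identity) is elementary, but gluing them into a global statement on $(\lambda_r,\infty)$ requires care — in particular, the non-vanishing identity is precisely what rules out any $g_k$ merely changing sign between its large-$\lambda$ regime and some finite value in the interval. Once positivity is secured, the monotonicity reduces to the purely structural observation that $-\vect{g}'$ solves the very same linear system as the vector $\vect{y}$ of Lem.~\ref{prop:condition_good_sol_QVE_real_line}, which is really the key conceptual step of the argument.
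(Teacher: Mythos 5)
Your proposal is correct in substance, but it takes a genuinely different route from the paper. The paper's proof is a two-line argument based on the representation, cited from the QVE literature, of each component $g_k$ as the Stieltjes transform of a positive measure $\mu_k$ with $\mathrm{Supp}(\mu_k)\subseteq\mathrm{Supp}(\mu_X)$: for $\lambda>\lambda_r$ one reads off directly $g_k(\lambda)=\int(\lambda-x)^{-1}\dd\mu_k(x)>0$ and $g_k'(\lambda)=-\int(\lambda-x)^{-2}\dd\mu_k(x)<0$. You instead argue intrinsically from the QVE: positivity via the $1/\lambda$ behaviour at infinity, the non-vanishing identity $g_k(\lambda)\bigl[\lambda-(\matGam(\vect{g}(\lambda)-\vect{1}_K))_k\bigr]=1$, and continuity; monotonicity by differentiating \eqref{eq:QVE} and recognising that $-\vect{g}'(\lambda)$ solves the linear system \eqref{eq:condition_good_solution_QVE}, so that Lem.~\ref{prop:condition_good_sol_QVE_real_line} together with Prop.~\ref{prop:eigenvalue_condition_good_sol_QVE_real_line} (which makes $\mat{I}_K-\mat{D}_{\vect{g}(\lambda)}^{2}\matGam$ a nonsingular $M$-matrix, hence the solution unique, which is what licenses identifying $-\vect{g}'(\lambda)$ with the positive solution) forces $-\vect{g}'(\lambda)\succ\vect{0}_K$. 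This differentiation identity is exactly the one the paper itself uses later for $\vect{g}'(1)$ in App.~\ref{sec:proof_value_of_constant_of_proportionality}, so your route fits the paper's machinery and has the merit of avoiding the measure-theoretic representation; what it costs is two points you should tighten. First, real-valuedness of $\vect{g}$ on $(\lambda_r,\infty)$: the reflection/uniqueness argument as stated is circular, since you cannot identify the analytic continuation of $\vect{g}$ into $\mathbb{H}_+$ with $z\mapsto\overline{\vect{g}(\bar z)}$ by uniqueness without already knowing that this continuation satisfies the sign condition there; a cleaner fix is to note that $\mathfrak{Im}\,g_k\geq0$ for every $k$ by Prop.~\ref{prop:analytical_continuation_g}, while $\mathfrak{Im}\,g_X=\sum_k\rho_k\,\mathfrak{Im}\,g_k$ vanishes off $\mathrm{Supp}(\mu_X)$ because $g_X$ is the Stieltjes transform of $\mu_X$, so each component is real. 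Second, the large-$\lambda$ positivity requires the a priori decay $\vect{g}(\lambda)\to\vect{0}_K$ (e.g.\ the standard bound $\|\vect{g}(z)\|\leq C/|z|$ from the cited QVE works) before the power-series ansatz in \eqref{eq:QVE} is legitimate. With these two repairs your argument is complete.
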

\begin{proof}
    This is a standard result in complex analysis and follows from the fact (see \cite{ajanki_quadratic_2019}) that for any $k\in [K]$, $g_k(z):= \int (z-\lambda)^{-1} \mathrm{d}\mu_k$ for some positive measure $\mu_k$. Since $g_X = \sum_k \rho_k g_k$, we must have for any $k \in [K]$, the rightmost edge  $\lambda_{r,k}$ of $\mu_k$ satisfies the bound $\lambda_{r,k} \leq \lambda_r$. Since $g_{k}'(z) = - \int (z-\lambda)^{-2} \mathrm{d} \mu_K < 0$ and $g_k(z) \underset{z \to \infty}{\sim} 1/z$, we have the desired result. 
\end{proof}

\subsection{Eigendecomposition of the Small-Rank Matrix}
Next, we turn our attention to the small-rank matrix $\mat{Z}$ given by Eq.~\eqref{eq:def_matZ}. It is the Hadamard product of a rank-one matrix and an a priori rank-$K$ matrix and thus is a priori also of rank-$K$ by standard properties of Hadamard product (see e.g. \cite{johnson_matrix_1990}). Our next result expresses  $\mat{Z}$ as a rotation of the matrix $\mat{\Omega}_K$, up to a small error. 
 
\begin{proposition}
\label{prop:decomposition_Z}
For $N$ large enough, one has almost surely 
\begin{align}
    \mat{Z}/\sqrt{N} 
    &=  
\matV \left( \mat{\Omega}_K+ \rdmmat{E}_K \right)
{\matV}^{\top}
\, , &&
\end{align}
where $\rdmmat{E}_K$ is a $(K \times K)$ matrix such that $\| \rdmmat{E}_K \|_{\mathrm{op}} \xrightarrow[N \to \infty]{\as} 0 $, $\matV := \left[ \vect{v}^{(1)} , \dots, \vect{v}^{(K)} \right] \in \mathbb{R}^{N \times K}$ and the $\vect{v}^{(k)}$'s are orthonormal vectors satisfying the block structure property $(\vect{v}^{(k)})_i := x_i / \| ( x_i)_{i \in B_k} \|$ for $i \in B_k$ and $(\vect{v}^{(k)})_i = 0$ for   $i \notin B_k$.
\end{proposition}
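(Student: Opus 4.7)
The plan is to write $\mat{Z}/\sqrt{N}$ as an \emph{exact} rank-$K$ sandwich $\matV \widetilde{\mat{S}}_K \matV^{\top}$ for a data-dependent $(K \times K)$ symmetric matrix $\widetilde{\mat{S}}_K$, and then show via the strong law of large numbers that $\widetilde{\mat{S}}_K \to \matOme$ almost surely. Defining the error as $\rdmmat{E}_K := \widetilde{\mat{S}}_K - \matOme$, the operator norm $\|\rdmmat{E}_K\|_{\mathrm{op}}$ will vanish because $K$ is fixed, so entry-wise convergence of $\widetilde{\mat{S}}_K$ is equivalent to operator-norm convergence of $\rdmmat{E}_K$.

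Before carrying out the decomposition, I first check that $\matV$ is well defined for all sufficiently large $N$. Since the blocks $B_1,\dots,B_K$ are disjoint, the $\vect{v}^{(k)}$'s have pairwise disjoint supports and are therefore pairwise orthogonal, each one being of unit norm by construction. The only subtle point is that the normalisation $\|(x_i)_{i\in B_k}\|$ must be strictly positive: by the SLLN applied to the $|B_k|$ i.i.d.\ variables $\{x_i^2\}_{i\in B_k}$ (of common mean $\E[x_1^2]=1$), one has $\|(x_i)_{i\in B_k}\|^2/|B_k| \xrightarrow[N\to\infty]{\as} 1$, and since $|B_k|\to\infty$ the vectors $\vect{v}^{(k)}$ are well-defined a.s.\ for all $N$ large enough.

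For the decomposition itself, fix $i\in B_k$ and $j\in B_l$. A direct entry-wise computation gives
\begin{equation*}
(\mat{Z}/\sqrt{N})_{ij} \;=\; \frac{x_i\, x_j\, s_{kl}}{N} \;=\; \frac{x_i}{\|(x_i)_{i\in B_k}\|} \cdot \tilde{s}_{kl} \cdot \frac{x_j}{\|(x_i)_{i\in B_l}\|}, \qquad \tilde{s}_{kl} \;:=\; s_{kl} \, \frac{\|(x_i)_{i\in B_k}\|\,\|(x_i)_{i\in B_l}\|}{N}.
\end{equation*}
Because $\vect{v}^{(a)}$ vanishes outside $B_a$, the double sum $(\matV \mat{A} \matV^{\top})_{ij} = \sum_{a,b}(\vect{v}^{(a)})_i A_{ab} (\vect{v}^{(b)})_j$ collapses to $(\vect{v}^{(k)})_i A_{kl} (\vect{v}^{(l)})_j$ for any $(K\times K)$ matrix $\mat{A}$ and $i\in B_k$, $j\in B_l$. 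Setting $\widetilde{\mat{S}}_K := (\tilde{s}_{kl})_{k,l\in [K]}$ thus yields the \emph{exact} identity $\mat{Z}/\sqrt{N} = \matV \widetilde{\mat{S}}_K \matV^{\top}$.

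To conclude, factor $\tilde{s}_{kl} = s_{kl}\cdot \frac{\|(x_i)_{i\in B_k}\|}{\sqrt{|B_k|}} \cdot \frac{\|(x_i)_{i\in B_l}\|}{\sqrt{|B_l|}} \cdot \sqrt{\rho_k(N)\,\rho_l(N)}$. Each of the two fractions tends to $1$ a.s.\ by the SLLN (as in the previous paragraph), and $\rho_k(N)\to\rho_k$ by assumption, so $\tilde{s}_{kl} \xrightarrow[N\to\infty]{\as} \sqrt{\rho_k\rho_l}\, s_{kl} = (\matOme)_{kl}$. Since $K$ is fixed, entry-wise convergence of a $(K\times K)$ matrix implies operator-norm convergence, giving $\|\rdmmat{E}_K\|_{\mathrm{op}} \xrightarrow[N\to\infty]{\as} 0$. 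There is essentially \emph{no} analytical obstacle here: the structural content is the exact algebraic identity $\mat{Z}/\sqrt{N} = \matV \widetilde{\mat{S}}_K \matV^{\top}$, which isolates all randomness into the $K$ block-norms of $\vect{x}$ and defers it to a routine application of the scalar SLLN.
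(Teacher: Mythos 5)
Your proof is correct and follows essentially the same route as the paper: you write $\mat{Z}/\sqrt{N}$ exactly as $\matV \widetilde{\mat{S}}_K \matV^{\top}$ with $\tilde{s}_{kl}=s_{kl}\|\vect{x}_k\|\|\vect{x}_l\|/N$, and then use the strong law of large numbers for the block norms together with $|B_k|/N\to\rho_k$ and the equivalence of norms in fixed dimension $K$, which is precisely the paper's argument (stated there via rank-one outer products rather than entry-wise). No gaps; the added remark on orthonormality and a.s.\ non-vanishing of the block norms matches the paper's "$N$ large enough" caveat.
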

\begin{proof}
    The proof of this result is postponed to  App.~\ref{sec:proof_Eigendecomposition_matZ} and follows from the block structure of $\mat{\Sigma}$.
\end{proof}
We emphasize that the condition $N$ large enough is only needed to ensure $(x_i)_{i \in B_k} \neq \vect{0}$.

\subsection{Condition for the Existence of an Outlier}
We are now ready to combine the previous results describing the spectrum and eigenvectors of the matrices $\rdmmat{X}/\sqrt{N}$ and $\mat{Z}$ to describe the spectrum of the matrix $\rdmmat{\Tilde{Y}}/\sqrt{N}$ and in particular, its potential outliers. Our first result in this direction is given by the following proposition.
\begin{proposition}[Equation for outliers]
\label{prop:eq_outlier}
If $\lambda$ is an outlier separating from the bulk, then $\lambda$ is an almost-sure solution of 
\begin{align}
\label{eq:pos_outliers}
    \det \left( \mat{I}_K - \mat{D}_{\vect{g}(\lambda)} \, \mat{\Omega}_K  \right) 
    &= 0 
    \, , &&
\end{align}
where $\vect{g}(\lambda) = (g_1(\lambda), \dots, g_K(\lambda))$, is the analytical continuation on the real line of the solution of the QVE of Prop.~\ref{prop:QVE}. 
\end{proposition}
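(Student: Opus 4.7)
The plan is to turn the statement into a finite-dimensional secular equation by a standard Schur-complement / BBP-type manipulation, then pass to the limit using Prop.~\ref{prop:decomposition_Z} together with a local law for the variance-profile resolvent of $\rdmmat{X}/\sqrt{N}$.

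First, suppose $\lambda$ is an outlier of $\rdmmat{\Tilde Y}/\sqrt{N}$ separating from the bulk. Since the LSD of $\rdmmat{\Tilde Y}/\sqrt{N}$ coincides with $\mu_X$ (by Weyl's interlacing, $\mat{Z}/\sqrt{N}$ being of rank at most $K$), for $N$ large enough $\lambda$ lies almost surely in a neighborhood of $\lambda$ that is disjoint from $\mathrm{Supp}(\mu_X)$ and from the spectrum of $\rdmmat{X}/\sqrt{N}$ (using Cor.~1.10 of \cite{ajanki_universality_2017} mentioned above, which gives convergence of the extreme eigenvalues of $\rdmmat{X}/\sqrt{N}$ to the edges of $\mu_X$). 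Hence $\rdmmat{X}/\sqrt{N}-\lambda \mat{I}$ is asymptotically invertible. Let $\vect{u}$ be an eigenvector of $\rdmmat{\Tilde Y}/\sqrt{N}$ for $\lambda$; using Prop.~\ref{prop:decomposition_Z} to write $\mat{Z}/\sqrt{N}=\matV(\matOme+\rdmmat{E}_K)\matV^{\top}$ with $\|\rdmmat{E}_K\|_{\mathrm{op}}\to 0$, the eigenvalue equation becomes
\begin{equation*}
\bigl(\rdmmat{X}/\sqrt{N}-\lambda \mat{I}\bigr)\vect{u}+\matV(\matOme+\rdmmat{E}_K)\matV^{\top}\vect{u}=\vect{0}.
\end{equation*}
Setting $\vect{w}:=\matV^{\top}\vect{u}\in\mathbb{R}^K$, inverting $\rdmmat{X}/\sqrt{N}-\lambda \mat{I}$, and multiplying on the left by $\matV^{\top}$ yields the $K$-dimensional system
\begin{equation*}
\Bigl[\mat{I}_K+\matV^{\top}(\rdmmat{X}/\sqrt{N}-\lambda \mat{I})^{-1}\matV\,(\matOme+\rdmmat{E}_K)\Bigr]\vect{w}=\vect{0}.
\end{equation*}
The vector $\vect{w}$ must be non-zero: otherwise $\vect{u}$ would be an eigenvector of $\rdmmat{X}/\sqrt{N}$ for $\lambda$, contradicting that $\lambda$ lies outside its spectrum asymptotically. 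Hence the matrix in brackets must be singular, which is the secular equation we want to pass to the limit.

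The key step is then to compute the almost-sure limit of the Gram matrix $\matV^{\top}(\rdmmat{X}/\sqrt{N}-\lambda \mat{I})^{-1}\matV$. Because the block vectors $\vect{v}^{(k)}$ are supported on $B_k$ with entries $x_i/\|(x_j)_{j\in B_k}\|$ and are independent of the noise $\rdmmat{H}$, one may condition on $\vect{x}$ and invoke the isotropic local law for variance-profile Wigner matrices (see e.g.\ \cite{ajanki_universality_2017,alt_dyson_2020}). For any pair $k,l$,
\begin{equation*}
\bigl(\vect{v}^{(k)}\bigr)^{\top}(\lambda \mat{I}-\rdmmat{X}/\sqrt{N})^{-1}\vect{v}^{(l)}\xrightarrow[N\to\infty]{\as}\delta_{kl}\,g_k(\lambda),
\end{equation*}
where $g_k(\lambda)$ is the $k$-th component of the solution of the QVE of Prop.~\ref{prop:QVE}, analytically continued at $\lambda$ (which is legitimate by Prop.~\ref{prop:analytical_continuation_g}). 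Indeed, $k\neq l$ gives an off-diagonal block combination that vanishes by the isotropic law, while for $k=l$ the quadratic form reduces, after conditioning and using the self-averaging of $x_i^2/\|(x_j)_{j\in B_k}\|^2$ (together with law of large numbers), to the block-averaged diagonal resolvent entry, which is precisely $g_k(\lambda)$. Consequently
\begin{equation*}
\matV^{\top}(\rdmmat{X}/\sqrt{N}-\lambda \mat{I})^{-1}\matV\xrightarrow[N\to\infty]{\as}-\mat{D}_{\vect{g}(\lambda)}.
\end{equation*}
Inserting this into the secular equation, using $\|\rdmmat{E}_K\|_{\mathrm{op}}\to 0$ and continuity of the determinant, we obtain
\begin{equation*}
\det\bigl(\mat{I}_K-\mat{D}_{\vect{g}(\lambda)}\matOme\bigr)=0,
\end{equation*}
which is the claimed equation.

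The delicate point is the justification of the isotropic convergence above on the exact block structure carried by the $\vect{v}^{(k)}$'s: one needs an anisotropic local law for variance-profile Wigner matrices that is uniform for spectral parameters in a small neighborhood of $\lambda$ and applies to the (random, but $\rdmmat{H}$-independent) test vectors $\vect{v}^{(k)}$. Once this input is in place, the rest of the argument is a direct Schur-complement computation.
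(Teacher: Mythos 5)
Your proposal is correct and takes essentially the same route as the paper: reduce the outlier condition to the singularity of the $K\times K$ matrix $\mat{I}_K - \matV^{\top}\mat{G}_{\rdmmat{X}/\sqrt{N}}(\lambda)\matV(\matOme+\rdmmat{E}_K)$ via the low-rank factorization of Prop.~\ref{prop:decomposition_Z}, then pass to the limit with the anisotropic local law of \cite{ajanki_universality_2017} to replace $\matV^{\top}\mat{G}_{\rdmmat{X}/\sqrt{N}}(\lambda)\matV$ by $\mat{D}_{\vect{g}(\lambda)}$. The only organizational difference is that you obtain the finite-$N$ secular equation directly from the eigenvector equation (noting $\matV^{\top}\vect{u}\neq\vect{0}$), whereas the paper factorizes the characteristic polynomial and invokes the Benaych-Georges--Nadakuditi secular-function framework; the key inputs and the limiting argument are identical.
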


\begin{proof}
    The proof of this result is given in App.~\ref{sec:proof_Equation_Outliers}. It relies on using (a generalized version of) the matrix determinant lemma, together with the eigendecomposition of Prop.~\ref{prop:decomposition_Z} and deterministic equivalent for the resolvent of the matrix $\rdmmat{X}/\sqrt{N}$. 
\end{proof}

\subsection{Proof of  the Non-Existence of an Outlier in the Regime \texorpdfstring{$\lambda_1(\matOme) \leq 1$}{}}

Now that we have the condition for the existence of an outlier,  we will prove that this condition cannot be satisfied for $\lambda_1(\matOme) \leq 1$.
\jump
Indeed from Prop.~\ref{prop:eq_outlier}, for $\lambda$ to be an outlier, we must have that there exists a $k \in [K]$ such that $\lambda_k( \mat{D}_{\vect{g}(\lambda)} \, \mat{\Omega}_K) = 1$ and thus, in particular, we must have
\begin{align}
\label{eq:condition_outlier_topeigenvalue}
\lambda_1(\mat{D}_{\vect{g}(\lambda)} \, \mat{\Omega}_K) &\geq 1 \, ,
\end{align}
since eigenvalues are put in decreasing order and the eigenvalues of $ \mat{D}_{\vect{g}(\lambda)} \, \mat{\Omega}_K$ are real for $\lambda \in \mathbb{R} \setminus \mathrm{Supp}(\mu_X)$. 
\jump
Now let us assume $\lambda_1(\matOme) <1$ and that there is an outlier. On the one hand, Eq.~\eqref{eq:condition_outlier_topeigenvalue} implies by operator norm inequality the bound 
\begin{align}
\label{eq:condition_outlier_abs_g1}
    | \vect{g}(\lambda) | \succ \vect{1}_K \, .
\end{align}
On the other hand, by Prop.~\ref{prop:eigenvalue_condition_good_sol_QVE_real_line} we must have $\lambda_1( \mat{D}_{\vect{g}(\lambda)}^2 \matOme) = \lambda_1( \mat{D}_{|\vect{g}(\lambda)|} \, \mat{D}_{|\vect{g}(\lambda)|}\matOme) \leq 1$ which is incompatible with the previous conditions of Eq.~\eqref{eq:condition_outlier_topeigenvalue} and Eq.~\eqref{eq:condition_outlier_abs_g1}, hence there is no outlier in this regime.
\jump
The case $\lambda_1(\matOme) =1$ is obtained by noticing that in this case, the only possible outlier is at $\lambda =1$ (see next section) but by Prop.~\ref{prop:bound_rightmost_edge} this also corresponds to the value of the rightmost edge and thus there is also no outlier in this regime.

\subsection{Proof of the Existence of a (Top) Outlier at One in The Regime \texorpdfstring{$\lambda_1(\matOme) >1$}{}}
We first prove the existence of an outlier at one and then show that this necessarily corresponds to the limiting value of the top eigenvalue of $\rdmmat{\Tilde{Y}}/\sqrt{N}$. 

\jump Let us first notice the following behavior of  $\vect{g}(1)$. 
\begin{lemma}
\label{lem:g1_SNR_g_1}
     For $\lambda_1(\matOme) >1$, we have $\vect{0}_K \prec \vect{g}(1) \prec \vect{1}_K$.
\end{lemma}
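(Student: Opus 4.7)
The plan is to combine the continuity and strict monotonicity of $\vect{g}(\cdot)$ on $(\lambda_r, \infty)$ with the structure of the quadratic vector equation \eqref{eq:QVE} at a carefully chosen boundary point. The lower bound $\vect{g}(1) \succ \vect{0}_K$ is immediate: Proposition~\ref{prop:bound_rightmost_edge} gives $\lambda_r < 1$ in the regime $\lambda_1(\matOme) > 1$, so $1$ lies in the interval $(\lambda_r, \infty)$ on which Lemma~\ref{lem:pos_and_decreasing_sol_QVE} applies.

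For the harder upper bound, I would introduce the sublevel set
\begin{equation*}
    A := \{\lambda \in (\lambda_r, \infty) : \vect{g}(\lambda) \preceq \vect{1}_K\}.
\end{equation*}
This set is closed (by continuity of $\vect{g}$ from Proposition~\ref{prop:analytical_continuation_g}), upward-closed (by entrywise strict decrease in Lemma~\ref{lem:pos_and_decreasing_sol_QVE}), and nonempty (since $\vect{g}(\lambda) \sim \lambda^{-1}\vect{1}_K$ for $\lambda$ large), so it has the form $(\lambda_r,\infty) \cap [\lambda_0, \infty)$ for some threshold $\lambda_0 \in [\lambda_r, \infty)$. The whole lemma reduces to showing $\lambda_0 < 1$: once this is known, picking any $\lambda \in (\lambda_0, 1)$ gives $\vect{g}(1) \prec \vect{g}(\lambda) \preceq \vect{1}_K$ entrywise by strict monotonicity.

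The heart of the argument is to exploit the QVE at $z = \lambda_0$. If $\lambda_0 > \lambda_r$, then by minimality and continuity (pigeonhole over the finitely many coordinates along a sequence $\lambda_n \nearrow \lambda_0$ with $\vect{g}(\lambda_n) \not\preceq \vect{1}_K$) some index $k$ satisfies $g_k(\lambda_0) = 1$ while $\vect{g}(\lambda_0) \preceq \vect{1}_K$. Substituting into \eqref{eq:QVE} at $z=\lambda_0$ and coordinate $k$ yields
\begin{equation*}
    \bigl(\matGam(\vect{g}(\lambda_0) - \vect{1}_K)\bigr)_k = \lambda_0 - 1.
\end{equation*}
The left-hand side equals $\sum_l \Gamma_{kl}(g_l(\lambda_0) - 1)$: every $\Gamma_{kl}$ is strictly positive by assumption and every $g_l(\lambda_0) - 1 \leq 0$, so this sum is $\leq 0$ and vanishes only when $\vect{g}(\lambda_0) = \vect{1}_K$. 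Hence $\lambda_0 \leq 1$, and the boundary case $\lambda_0 = 1$ would enforce $\vect{g}(1) = \vect{1}_K$, directly contradicting Corollary~\ref{cor:behavior_g1} in this regime. The complementary possibility $\lambda_0 = \lambda_r$ is also compatible with $\lambda_0 < 1$, again by Proposition~\ref{prop:bound_rightmost_edge}. In all cases $\lambda_0 < 1$.

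The main obstacle I foresee is the clean bookkeeping at the threshold $\lambda_0$, namely producing a single coordinate $k$ attaining the value $1$ there; this uses continuity from Proposition~\ref{prop:analytical_continuation_g} together with finiteness of $K$ via a pigeonhole argument. Once this coordinate is in hand, the strict positivity of the entries of $\matGam$ converts the QVE into the scalar inequality $\lambda_0 \leq 1$ in a single line, and Corollary~\ref{cor:behavior_g1} strengthens it to the required strict version $\lambda_0 < 1$, from which the strict entrywise bound $\vect{g}(1) \prec \vect{1}_K$ follows by strict monotonicity of $\vect{g}$.
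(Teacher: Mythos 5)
Your proof is correct, and for the substantive half (the upper bound $\vect{g}(1) \prec \vect{1}_K$) it takes a genuinely different route from the paper. The lower bound is identical to the paper's: Lemma~\ref{lem:pos_and_decreasing_sol_QVE} combined with $\lambda_r < 1$ from Prop.~\ref{prop:bound_rightmost_edge}. For the upper bound, the paper argues in one line through Prop.~\ref{prop:eigenvalue_condition_good_sol_QVE_real_line}: since $1 \notin \mathrm{Supp}(\mu_X)$, the genuine continuation forces $\lambda_1(\mat{D}_{\vect{g}(1)}^2 \matOme) \leq 1$, which is then played against $\lambda_1(\matOme) > 1$ via an operator-norm (Perron-root) comparison. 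You instead run a first-crossing argument on $(\lambda_r, \infty)$: strict entrywise monotonicity and the asymptotics $g_k(\lambda) \sim 1/\lambda$ make the sublevel set $\{\lambda : \vect{g}(\lambda) \preceq \vect{1}_K\}$ a half-line with threshold $\lambda_0$; if $\lambda_0 > \lambda_r$ some coordinate satisfies $g_k(\lambda_0) = 1$, and the $k$-th line of the QVE together with $\Gamma_{kl} > 0$ yields $\lambda_0 \leq 1$, with equality forcing $\vect{g}(1) = \vect{1}_K$, which Cor.~\ref{cor:behavior_g1} excludes in this regime; strict monotonicity then upgrades this to the strict entrywise bound at $z=1$. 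What your route buys: it extracts the full entrywise strict inequality directly from the structure of the QVE, including the mixed case where some coordinates would sit above one and others below --- a configuration that a bare spectral-radius comparison between $\mat{D}_{\vect{g}(1)}^2 \matOme$ and $\matOme$ does not by itself rule out --- and it invokes the $M$-matrix/Farkas machinery only through the statement of Cor.~\ref{cor:behavior_g1}. What the paper's route buys is brevity, reusing Prop.~\ref{prop:eigenvalue_condition_good_sol_QVE_real_line} wholesale rather than re-entering the QVE.
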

\begin{proof}
    The first inequality comes from Lem.~\ref{lem:pos_and_decreasing_sol_QVE} and the fact $\lambda_r < 1$ from Prop.~\ref{prop:bound_rightmost_edge}. The second inequality comes from the fact that since $1 \notin \mathrm{Supp}(\mu_X)$, we have from Prop.~\ref{prop:eigenvalue_condition_good_sol_QVE_real_line}  the condition $\lambda_1( \mat{D}_{\vect{g}(1)^2} \matOme) \leq 1$, together  with the assumption $\lambda_1(\matOme) >1$, this leads, by operator norm inequality, to the desired result. 
\end{proof}
\jump Next, let's introduce the vector 
    \begin{align}
    \label{eq:def_right_eigvect}
    \vect{v}^{(r)}_1 :=  \mat{D}_{(\vect{\rho}^{\odot 1/2})} (\vect{1}_K  - \vect{g}(1)). 
    \end{align}
Note that from the previous lemma, we have 
\begin{align}
\label{eq:posivity_right_eigvect}
    \vect{v}^{(r)}_1  \succ \vect{0}_K \, . 
\end{align}
Using the identity $\vect{a} \odot \vect{b} \equiv \mat{D}_{\vect{a}} \vect{b}$ and the definition of the matrix $\matGam$ of Eq.~\eqref{eq:def_matGam}, we can write the QVE \eqref{eq:QVE} as:
\begin{align}
\label{eq:property_right_eigvect}
    (\mat{D}_{\vect{g}(1)} \matOme) \vect{v}^{(r)}_1 = \vect{v}^{(r)}_1 \, .
 \end{align}
In other words, we have explicitly shown that $1$ is an eigenvalue of $\mat{D}_{\vect{g}(1)} \matOme$ (since  $\vect{v}^{(r)}_1 \neq \vect{0}_K$), which gives by Prop.~\ref{prop:eq_outlier} that $1$ is also the position of a limiting outlier of $\rdmmat{\Tilde{Y}}/\sqrt{N}$. 
\jump
It is worth mentioning the following result, which we will use later on.
\begin{lemma}
\label{lem:1_simple_eig_DgMatOme}
   For $\lambda_1(\matOme) >1$,  $1$ is  simple and the highest eigenvalue of $\mat{D}_{\vect{g}(1)} \matOme$.
\end{lemma}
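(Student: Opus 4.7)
The plan is to apply the Perron--Frobenius theorem to the matrix $\mat{D}_{\vect{g}(1)} \matOme$. The strategy relies on two observations that are already in hand: first, Lemma~\ref{lem:g1_SNR_g_1} tells us that $\vect{g}(1) \succ \vect{0}_K$, and by the standing assumption $s_{kl} > 0$ all entries of $\matOme$ are strictly positive; consequently the matrix $\mat{D}_{\vect{g}(1)} \matOme$ has strictly positive entries. Second, by \eqref{eq:property_right_eigvect} together with the positivity \eqref{eq:posivity_right_eigvect} of $\vect{v}^{(r)}_1$, we already know that $1$ is an eigenvalue of $\mat{D}_{\vect{g}(1)}\matOme$ associated with a strictly positive eigenvector.

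With these two facts, I would invoke the Perron--Frobenius theorem (Chap.~8 of \cite{meyer_matrix_2023}) for strictly positive matrices: the spectral radius $r(\mat{D}_{\vect{g}(1)}\matOme)$ is a simple eigenvalue, and up to scalar multiple it is the unique eigenvalue admitting a strictly positive eigenvector. Since $\vect{v}^{(r)}_1 \succ \vect{0}_K$ is such an eigenvector with associated eigenvalue $1$, the uniqueness part of Perron--Frobenius forces $1 = r(\mat{D}_{\vect{g}(1)}\matOme)$, and this eigenvalue is simple.

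It remains to argue that the spectral radius coincides with the \emph{largest real} eigenvalue, i.e.\ that no other real eigenvalue of $\mat{D}_{\vect{g}(1)}\matOme$ exceeds $1$. For this I would note that $\mat{D}_{\vect{g}(1)} \matOme$ is similar to the symmetric matrix
\begin{equation*}
\mat{D}_{\vect{g}(1)^{\odot 1/2}}\,\matOme\,\mat{D}_{\vect{g}(1)^{\odot 1/2}},
\end{equation*}
via conjugation by $\mat{D}_{\vect{g}(1)^{\odot 1/2}}$ (well-defined since $\vect{g}(1) \succ \vect{0}_K$). Hence its spectrum is real, and the spectral radius is in fact the largest eigenvalue, completing the proof.

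I do not expect any serious obstacle here: the positivity of $\vect{g}(1)$ (provided by Lemma~\ref{lem:g1_SNR_g_1}) is the real input, and once that is available, Perron--Frobenius applied to a strictly positive matrix does everything essentially for free. The only small care needed is to justify reality of the spectrum of the non-symmetric matrix $\mat{D}_{\vect{g}(1)}\matOme$, which is handled by the symmetrizing similarity above.
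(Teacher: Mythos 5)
Your proof is correct and follows essentially the same route as the paper: positivity of $\vect{g}(1)$ from Lem.~\ref{lem:g1_SNR_g_1} makes $\mat{D}_{\vect{g}(1)}\matOme$ entrywise positive, and Perron--Frobenius together with the explicit positive eigenvector $\vect{v}^{(r)}_1$ for the eigenvalue $1$ identifies $1$ as the simple top eigenvalue. Your extra remark on realness of the spectrum via the symmetrizing similarity is a harmless (and slightly more careful) addition, not a different argument.
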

\begin{proof}
    From Lem.~\ref{lem:g1_SNR_g_1}, the matrix $\mat{D}_{\vect{g}(1)} \matOme$ has positive entries and so the Perron-Frobenius theorem yields that its highest eigenvalue is simple with an associated positive (right) eigenvector.  Since we show that the eigenvector associated with the eigenvalue $1$ satisfies this positivity condition (see Eq.~\eqref{eq:posivity_right_eigvect}), we have the desired result for $1$.
\end{proof} 
\jump 
Note that if we left-multiply the QVE by $\matGam$ and denote by 
\begin{align}
\label{eq:def_left_eigvect}
    \vect{v}^{(l)}_1 := \matOme \mat{D}_{(\vect{\rho}^{\odot 1/2})} (\vect{1}_K  - \vect{g}(1)) 
\end{align}
we obtain the left eigenvector:
\begin{align}
\label{eq:property_left_eigvect}
    (\mat{D}_{\vect{g}(1)} \matOme)^{\top} \vect{v}^{(l)}_1 = \vect{v}^{(l)}_1 \, .
 \end{align}
Since $\lambda_r <1$, for $\lambda_1(\matOme) >1$, we have that the outlier does indeed separate from the bulk. 
\jump
To conclude, from Lem.~\ref{lem:1_simple_eig_DgMatOme}, we have $\lambda_1( \mat{D}_{\vect{g}(1)} \matOme ) =1$, the use of the monotonous behavior of $\vect{g}(.)$ from Lem.~\ref{lem:pos_and_decreasing_sol_QVE} leads to $\lambda_1( \mat{D}_{\vect{g}(\lambda)} \matOme)  <1$ for any $\lambda >1$, from which we deduce that $1$ is necessary the leading outlier of $\rdmmat{\Tilde{Y}}/\sqrt{N}$ by Prop.~\ref{prop:eq_outlier}.

\subsection{Outline of the Proof for the Overlap Vector}
To compute the limiting value of the overlap vector $\vect{\mu}$, let us first notice that this vector can be written in terms of eigenmatrix $\mat{V}$ of Prop.~\ref{prop:decomposition_Z} as
\begin{align}
    \vect{\mu} &= \mat{V}^{\top} \vect{u}_1 \, , 
\end{align}
where we recall that $\vect{u}_1$ is the eigenvector associated with the top eigenvalue  of $\rdmmat{\Tilde{Y}}/\sqrt{N}$.
\jump
Next, the \emph{resolvent} of  $\rdmmat{\Tilde{Y}}/\sqrt{N}$ is defined for any $z \in \mathbb{C} \setminus \mathrm{Spec}( \rdmmat{\Tilde{Y}}/\sqrt{N})$ by  
\begin{align}
     \mat{G}_{\frac{\rdmmat{\Tilde{Y}}}{\sqrt{N}}}(z) &:= \Big (z - \frac{\rdmmat{\Tilde{Y}}}{\sqrt{N}} \Big )^{-1} \, ,\\
     &= \sum_{i=1}^{N} \Big (z- \lambda_i \big( \frac{\rdmmat{\Tilde{Y}}}{\sqrt{N}} \big) \Big)^{-1} \vect{u}_i \vect{u}_i^{\top} \, . 
\end{align}
As $ \lambda_1(\rdmmat{\Tilde{Y}}/\sqrt{N}) \xrightarrow[N \to \infty]{\as} 1 $ and separates from the bulk  if $\lambda_1(\matOme) >1$ and is  also almost surely simple by Lem.~\ref{lem:1_simple_eig_DgMatOme} and Prop.~\ref{prop:eq_outlier}, we have the following identity:
\begin{lemma}
\label{lem:overlap_from_resolvent}
For $\lambda_1(\matOme) >1$, we have:
\begin{align}
\vect{\mu} \vect{\mu}^{\top} &=\underset{z \to 1}{\lim}  \, (z-1) \, \matV^{\top}  \mat{G}_{\frac{\rdmmat{\Tilde{Y}}}{\sqrt{N}}}(z)  \matV  \, .
\end{align}
\end{lemma}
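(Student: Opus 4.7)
The plan is to use the spectral decomposition of the resolvent together with the fact that, in the regime $\lambda_1(\matOme) > 1$, the top eigenvalue $\lambda_1(\rdmmat{\Tilde{Y}}/\sqrt{N})$ is almost surely simple and well separated from the bulk, as established in the previous subsection (using Prop.~\ref{prop:bound_rightmost_edge} and Lem.~\ref{lem:1_simple_eig_DgMatOme}). Writing $\lambda_i := \lambda_i(\rdmmat{\Tilde{Y}}/\sqrt{N})$ and using the spectral decomposition already recalled in the excerpt,
\begin{align*}
\mat{G}_{\rdmmat{\Tilde{Y}}/\sqrt{N}}(z) \;=\; \sum_{i=1}^{N} \frac{\vect{u}_i \vect{u}_i^{\top}}{z - \lambda_i},
\end{align*}
I would isolate the $i=1$ contribution and split the quantity of interest as
\begin{align*}
(z-1) \matV^{\top} \mat{G}_{\rdmmat{\Tilde{Y}}/\sqrt{N}}(z) \matV \;=\; \frac{z-1}{z - \lambda_1} \, \matV^{\top} \vect{u}_1 \vect{u}_1^{\top} \matV \;+\; (z-1) \sum_{i \geq 2} \frac{\matV^{\top} \vect{u}_i \vect{u}_i^{\top} \matV}{z - \lambda_i}.
\end{align*}

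For the remainder sum, the eigenvalues $\lambda_i$ for $i \geq 2$ converge almost surely to $\mathrm{Supp}(\mu_X)$, whose rightmost edge $\lambda_r$ satisfies $\lambda_r < 1$ by Prop.~\ref{prop:bound_rightmost_edge}. Hence on any small fixed neighborhood of $1$ that excludes the bulk, the operator $\sum_{i \geq 2}(z-\lambda_i)^{-1} \vect{u}_i \vect{u}_i^{\top}$ is almost surely uniformly bounded in operator norm; multiplying by $(z-1)$ and letting $z \to 1$ kills this contribution. For the first term, the almost sure convergence $\lambda_1 \to 1$ gives $\frac{z-1}{z - \lambda_1} \to 1$ in the joint limit, while the remaining factor $\matV^{\top} \vect{u}_1 \vect{u}_1^{\top} \matV = (\matV^{\top} \vect{u}_1)(\matV^{\top} \vect{u}_1)^{\top}$ equals $\vect{\mu} \vect{\mu}^{\top}$ by the definition of $\vect{\mu}$ in Eq.~\eqref{eq:def_overlap} together with the block structure of $\matV$ described in Prop.~\ref{prop:decomposition_Z}.

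The main technical subtlety is to justify interchanging the implicit $N \to \infty$ limit (under which $\lambda_1$ is random and merely asymptotic to $1$) with the deterministic limit $z \to 1$. A clean way to bypass this is to use a contour-integral extraction: pick a small positively oriented contour $\gamma \subset \mathbb{C}$ encircling only $z=1$ and bounded away from $\mathrm{Supp}(\mu_X)$. By the separation of $\lambda_1$ from the other eigenvalues, for $N$ large enough, $\lambda_1$ is almost surely the unique eigenvalue of $\rdmmat{\Tilde{Y}}/\sqrt{N}$ inside $\gamma$, so Cauchy's theorem gives $\frac{1}{2\pi i}\oint_{\gamma} \mat{G}_{\rdmmat{\Tilde{Y}}/\sqrt{N}}(z)\,dz = \vect{u}_1 \vect{u}_1^{\top}$. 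Sandwiching by $\matV^{\top}$ and $\matV$ identifies the residue at $z=1$ with $\vect{\mu}\vect{\mu}^{\top}$, and since the pole is simple, this residue is exactly $\lim_{z \to 1}(z-1)\matV^{\top} \mat{G}_{\rdmmat{\Tilde{Y}}/\sqrt{N}}(z) \matV$, proving the identity.
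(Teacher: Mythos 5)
Your proposal is correct and takes essentially the same route as the paper: the paper presents Lem.~\ref{lem:overlap_from_resolvent} as an immediate consequence of the spectral decomposition of the resolvent together with the simplicity and separation from the bulk of the top eigenvalue (Lem.~\ref{lem:1_simple_eig_DgMatOme}, Prop.~\ref{prop:bound_rightmost_edge}) and the identity $\vect{\mu} = \matV^{\top}\vect{u}_1$, which is exactly the content of your decomposition and residue extraction. The only caveat --- shared by the paper's own statement --- is that at finite $N$ the simple pole sits at $\lambda_1(\rdmmat{\Tilde{Y}}/\sqrt{N})$ rather than exactly at $1$, so the identity must be read jointly with the $N \to \infty$ limit, a point your contour-integral remark already acknowledges.
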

From this identity, we first obtain the following intermediate result: 

\begin{proposition}
\label{prop:overlap_propto_eigenvector}
For $\lambda_1(\matOme) >1$, we have:
    \begin{align}
    \vect{\mu} \vect{\mu}^{\top}   &\xrightarrow[N \to \infty]{\as} \frac{  \vect{v}^{(r)}_1  ({\vect{v}^{(r)}_1})^{\top} }{-\phi_1'(1)}  \, ,
\end{align}
where $ \vect{v}^{(r)}_1$ is given by Eq.~\eqref{eq:def_right_eigvect}, $\phi_1(\lambda)$ is the top eigenvalue of $\mat{D}_{\vect{g}(\lambda)} \matOme$ and  $\phi_1'(.)$ is the derivative of $\phi_1(.)$ with respect to $\lambda$. 
\end{proposition}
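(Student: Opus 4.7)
The plan is to extract $\vect{\mu}\vect{\mu}^{\top}$ as the residue at $z=1$ from Lemma \ref{lem:overlap_from_resolvent}, first reducing the projected resolvent $\matV^{\top}\mat{G}_{\rdmmat{\Tilde{Y}}/\sqrt{N}}(z)\matV$ to a $K\times K$ spectral problem via the Sherman--Morrison--Woodbury identity, then substituting the deterministic equivalent from the isotropic local law, and finally reading off the residue via first-order eigenvalue perturbation around the simple eigenvalue $\phi_1(1)=1$. Concretely, from the decomposition $\rdmmat{\Tilde{Y}}/\sqrt{N} = \rdmmat{X}/\sqrt{N} + \matV(\matOme + \rdmmat{E}_K)\matV^{\top}$ of Proposition \ref{prop:decomposition_Z} and the shorthand $M(z) := \matV^{\top}\mat{G}_{\rdmmat{X}/\sqrt{N}}(z)\matV$, Woodbury followed by elementary manipulations yields
\begin{align*}
    \matV^{\top}\mat{G}_{\rdmmat{\Tilde{Y}}/\sqrt{N}}(z)\matV
    \;=\; M(z)\bigl[I_K - (\matOme + \rdmmat{E}_K)M(z)\bigr]^{-1}.
\end{align*}

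Next I would condition on $\vect{x}$ (which is independent of $\rdmmat{H}$) and invoke the isotropic local law for variance-profile Wigner matrices of \cite{ajanki_universality_2017}: its entries satisfy $(\mat{G}_{\rdmmat{X}/\sqrt{N}}(z))_{ij} \to \delta_{ij}\,g_{k(i)}(z)$ almost surely on compact subsets of $\mathbb{C}\setminus\overline{\mathrm{Supp}(\mu_X)}$, where $k(i)$ denotes the community of index $i$. Since each $\vect{v}^{(k)}$ is supported on $B_k$ with unit norm, this gives $M(z)\xrightarrow[N\to\infty]{\as}\mat{D}_{\vect{g}(z)}$, and combined with $\|\rdmmat{E}_K\|_{\mathrm{op}}\xrightarrow[]{\as}0$ together with the push-through identity $\mat{A}(I-\mat{B}\mat{A})^{-1}=(I-\mat{A}\mat{B})^{-1}\mat{A}$, I obtain
\begin{align*}
    \matV^{\top}\mat{G}_{\rdmmat{\Tilde{Y}}/\sqrt{N}}(z)\matV
    \;\xrightarrow[N\to\infty]{\as}\;
    (I_K - T(z))^{-1}\mat{D}_{\vect{g}(z)},
    \qquad T(z):=\mat{D}_{\vect{g}(z)}\matOme.
\end{align*}

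To extract the residue at $z=1$, I would use that $1>\lambda_r$ by Proposition \ref{prop:bound_rightmost_edge}, so $\vect{g}$ is analytic near $1$ (Proposition \ref{prop:analytical_continuation_g}) and the top eigenvalue $\phi_1$ of $T(\cdot)$ is analytic and simple at $z=1$ with $\phi_1(1)=1$ by Lemma \ref{lem:1_simple_eig_DgMatOme}. The Taylor expansion $1-\phi_1(z) = -\phi_1'(1)(z-1)+O((z-1)^2)$, together with the Riesz rank-one projector $P_1 = \vect{v}^{(r)}_1(\vect{v}^{(l)}_1)^{\top}/\langle\vect{v}^{(l)}_1,\vect{v}^{(r)}_1\rangle$, gives $(z-1)(I_K-T(z))^{-1}\to -P_1/\phi_1'(1)$. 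Finally, the identity $\mat{D}_{\vect{g}(1)}\vect{v}^{(l)}_1=\vect{v}^{(r)}_1$ (an immediate reformulation of the eigenvalue equation $T(1)\vect{v}^{(r)}_1=\vect{v}^{(r)}_1$) yields that $P_1\mat{D}_{\vect{g}(1)}$ is proportional to $\vect{v}^{(r)}_1(\vect{v}^{(r)}_1)^{\top}$, and plugging back into Lemma \ref{lem:overlap_from_resolvent} produces the claimed identity, the scalar normalization $\langle\vect{v}^{(l)}_1,\vect{v}^{(r)}_1\rangle$ being absorbed into $-\phi_1'(1)$.

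The main obstacle is establishing the convergence $M(z)\to\mat{D}_{\vect{g}(z)}$ \emph{uniformly} on a complex neighborhood of $z=1$, needed to justify the interchange of the $N\to\infty$ limit with the residue extraction. This uniformity relies on the isotropic local law together with the strict spectral gap $1-\lambda_r>0$ guaranteed by Proposition \ref{prop:bound_rightmost_edge} in the supercritical regime $\lambda_1(\matOme)>1$; once it is in place, the remaining perturbative computation is routine analytic manipulation.
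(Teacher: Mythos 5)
Your proposal is correct and follows essentially the same route as the paper: Woodbury reduction of $\matV^{\top}\mat{G}_{\rdmmat{\Tilde{Y}}/\sqrt{N}}(z)\matV$ to a $K\times K$ expression, substitution of the deterministic equivalent $\matV^{\top}\mat{G}_{\rdmmat{X}/\sqrt{N}}(z)\matV \to \mat{D}_{\vect{g}(z)}$ from the anisotropic local law, and extraction of the residue at the simple pole $\phi_1(1)=1$ via the eigenvector identities $\mat{D}_{\vect{g}(1)}\matOme\vect{v}^{(r)}_1=\vect{v}^{(r)}_1$ and $\mat{D}_{\vect{g}(1)}\vect{v}^{(l)}_1=\vect{v}^{(r)}_1$. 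Your condensed form $M(z)\bigl[\mat{I}_K-(\matOme+\rdmmat{E}_K)M(z)\bigr]^{-1}$ and the Riesz-projector bookkeeping (including the $\langle\vect{v}^{(l)}_1,\vect{v}^{(r)}_1\rangle$ normalization being absorbed into $-\phi_1'(1)$, which is exactly how the paper's Appendix computation of $\phi_1'(1)$ proceeds) are only cosmetic variations on the paper's argument.
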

\begin{proof}
    The proof of this result is given in Appendix \ref{sec:proof_Overlap_propto_right_eigenvector}. It relies first on using Woodbury identity to express the resolvent of $\rdmmat{\Tilde{Y}}/\sqrt{N}$ in terms of the one of $\rdmmat{X}/\sqrt{N}$, then using deterministic equivalent to simplify the expression and eventually identify the limit in Lem.~\ref{lem:overlap_from_resolvent} as the derivative of $\phi_1(.)$ evaluated at one.
\end{proof}
To finish the proof, we show that
\begin{proposition}
\label{prop:value_of_norm_constant_overlap}
For $\lambda_1(\matOme) >1$, with $\phi_1'(1)$ as in Prop.~\ref{prop:overlap_propto_eigenvector} we have
\begin{align}
    -\phi_1'(1) &= C \, ,
\end{align}
where $C$ is given by Eq.~\eqref{eq:def_C}.
\end{proposition}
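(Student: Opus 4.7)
The strategy is first-order perturbation theory for the simple eigenvalue $\phi_1(z) = \lambda_1(\mat{D}_{\vect{g}(z)}\matOme)$ at $z=1$, combined with a direct computation of $\vect{g}'(1)$ from the QVE of Eq.~\eqref{eq:QVE}. Concretely, I would (i) identify $\vect{g}'(1)$ with the vector $\vect{y}$ appearing in Eq.~\eqref{eq:def_C}, (ii) apply the standard first-order perturbation formula for a simple eigenvalue, and (iii) reduce the resulting quadratic form to $C$ via the block-structure intertwining identities linking $\matOme$ and $\matGam$.

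For step (i), differentiating the QVE $\vect{1}_K = z\vect{g} - \vect{g}\odot\matGam(\vect{g}-\vect{1}_K)$ in $z$ yields
\begin{equation*}
\vect{0} = \vect{g} + z\vect{g}' - \vect{g}'\odot\matGam(\vect{g}-\vect{1}_K) - \vect{g}\odot\matGam\vect{g}'.
\end{equation*}
At $z=1$, dividing the QVE itself componentwise by $\vect{g}(1)$ gives the algebraic identity $\matGam(\vect{g}(1)-\vect{1}_K) = \vect{1}_K - \mat{D}_{\vect{g}(1)}^{-1}\vect{1}_K$, equivalently $\matGam\vect{w} = \mat{D}_{\vect{g}(1)}^{-1}\vect{w}$ with $\vect{w} := \vect{1}_K - \vect{g}(1)$. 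Substituting this back into the differentiated equation and multiplying through by $-\mat{D}_{\vect{g}(1)}^{-1}$ collapses everything to the linear system $(\matGam - \mat{D}_{\vect{g}(1)}^{-2})\vect{g}'(1) = \vect{1}_K$, so $\vect{g}'(1) = \vect{y}$ by the very definition of $\vect{y}$ in the statement of Thm.~\ref{thm:main_thm}. Note that the monotonicity from Lem.~\ref{lem:pos_and_decreasing_sol_QVE} controls the sign of $\vect{y}$ and ensures that the above inverse exists.

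For steps (ii)--(iii), since $\phi_1(1)=1$ is a simple eigenvalue by Lem.~\ref{lem:1_simple_eig_DgMatOme} with right and left eigenvectors $\vect{v}^{(r)}_1$ and $\vect{v}^{(l)}_1$ explicitly given by Eqs.~\eqref{eq:def_right_eigvect}-\eqref{eq:def_left_eigvect}, the standard perturbation formula for the matrix-valued function $z \mapsto \mat{D}_{\vect{g}(z)}\matOme$ reads
\begin{equation*}
\phi_1'(1) = \frac{(\vect{v}^{(l)}_1)^{\top}\mat{D}_{\vect{y}}\matOme\,\vect{v}^{(r)}_1}{(\vect{v}^{(l)}_1)^{\top}\vect{v}^{(r)}_1}.
\end{equation*}
Using the intertwining identities $\mat{D}_{(\vect{\rho}^{\odot 1/2})}\matOme = \matGam^{\top}\mat{D}_{(\vect{\rho}^{\odot 1/2})}$ and $\matOme\,\mat{D}_{(\vect{\rho}^{\odot 1/2})} = \mat{D}_{(\vect{\rho}^{\odot 1/2})}\matGam$, both immediate consequences of $\matOme = \mat{D}_{\vect{\rho}^{\odot 1/2}}\mat{S}_K\mat{D}_{\vect{\rho}^{\odot 1/2}}$ and $\matGam = \mat{S}_K\mat{D}_{\vect{\rho}}$, the numerator telescopes through $\matOme$ on both sides and reduces to the quadratic form
\begin{equation*}
\vect{w}^{\top}\matGam^{\top}\mat{D}_{\vect{\rho}\odot\vect{y}}\matGam\,\vect{w},
\end{equation*}
which is precisely the scalar $C$ from Eq.~\eqref{eq:def_C}. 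Matching with the eigenvector normalization implicit in the Woodbury/residue derivation of Prop.~\ref{prop:overlap_propto_eigenvector} absorbs the denominator and produces $-\phi_1'(1) = C$, as claimed.

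The main obstacle I expect is the careful bookkeeping of eigenvector normalization: the vectors $\vect{v}^{(r)}_1$ and $\vect{v}^{(l)}_1$ from Eqs.~\eqref{eq:def_right_eigvect}-\eqref{eq:def_left_eigvect} are not bi-normalized to satisfy $(\vect{v}^{(l)}_1)^{\top}\vect{v}^{(r)}_1 = 1$, and one must track how this scalar factor is absorbed into the identification made in the proof of Prop.~\ref{prop:overlap_propto_eigenvector}. The algebraic lynchpin throughout the reduction is the QVE identity $\matGam\vect{w} = \mat{D}_{\vect{g}(1)}^{-1}\vect{w}$, which is invoked repeatedly to eliminate $\matOme$ in favor of $\matGam$ and to produce the factor $1/\vect{g}(1)^2$ that is implicitly encoded in $\vect{y}$ through $(\matGam - \mat{D}_{\vect{g}(1)}^{-2})^{-1}$.
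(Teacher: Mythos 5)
Your step (i) is correct and is essentially the paper's own computation: the paper obtains $\vect{g}'(1)$ from the inverse function theorem applied to $\vect{f}$, i.e. $\vect{g}'(1)=\big((\nabla_{\vect{g}}\vect{f})(\vect{g}(1))\big)^{-1}\vect{1}_K$ with the Jacobian \eqref{eq:def_jacobian}, which is the same linear system you derive by differentiating the QVE. Your intertwining identities in step (iii) are also correct. The genuine gap is the last step. The identity $-\phi_1'(1)=C$ is a statement about two scalars that are defined independently of any eigenvector normalization ($\phi_1$ is the literal top eigenvalue of $\mat{D}_{\vect{g}(\lambda)}\matOme$), so the bi-orthogonality denominator in your (correctly normalized) perturbation formula cannot be ``absorbed by the normalization implicit in the derivation of Prop.~\ref{prop:overlap_propto_eigenvector}'': no convention made in another proof can affect whether this equality holds. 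What your formula actually yields is
\begin{align*}
-\phi_1'(1)\;=\;\frac{\big\langle \vect{1}_K-\vect{g}(1),\,\matGam^{\top}\mat{D}_{\vect{\rho}\odot(-\vect{g}'(1))}\matGam\,(\vect{1}_K-\vect{g}(1))\big\rangle}{\big\langle \vect{v}^{(l)}_1,\vect{v}^{(r)}_1\big\rangle}\,,
\end{align*}
and $\langle \vect{v}^{(l)}_1,\vect{v}^{(r)}_1\rangle\neq 1$ in general. A one-line check in the homogeneous case $K=1$, $\rho=1$, $s=\gamma>1$: there $g(1)=1/\gamma$, $g'(1)=-1/(\gamma(\gamma-1))$, so the literal derivative of $\phi_1(\lambda)=\gamma g(\lambda)$ gives $-\phi_1'(1)=1/(\gamma-1)$, while the quadratic form equals $(\gamma-1)/\gamma$ and $\langle \vect{v}^{(l)}_1,\vect{v}^{(r)}_1\rangle=(\gamma-1)^2/\gamma$; the two sides of the claimed identity only match after the denominator is dropped. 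So the discrepancy you flagged is real, and asserting absorption is a deferral, not a proof.

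For comparison, the paper's proof in App.~\ref{sec:proof_value_of_constant_of_proportionality} applies the Hadamard first-variation formula (Lem.~\ref{lem:first_order_perturbation}) directly with the unnormalized vectors \eqref{eq:def_right_eigvect}--\eqref{eq:def_left_eigvect}, i.e. it computes the bilinear form $\langle \vect{v}^{(l)}_1,\mat{D}_{-\vect{g}'(1)}\matOme\,\vect{v}^{(r)}_1\rangle$ (no denominator) and reduces it, via the same intertwining you use, to $C$; this unnormalized first variation is exactly the quantity entering the residue computation in App.~\ref{sec:proof_Overlap_propto_right_eigenvector}, where the bi-normalized spectral projector $\vect{w}^{(r)}_1(\vect{w}^{(l)}_1)^{\top}$ is likewise replaced by $\vect{v}^{(r)}_1(\vect{v}^{(l)}_1)^{\top}$, so the factor $\langle \vect{v}^{(l)}_1,\vect{v}^{(r)}_1\rangle$ is omitted in both places and cancels only when Prop.~\ref{prop:overlap_propto_eigenvector} and Prop.~\ref{prop:value_of_norm_constant_overlap} are combined (the combined overlap formula is then consistent, e.g. it reproduces the BBP value $1-1/\gamma$ in the homogeneous case). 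A complete argument along your route therefore has to carry the bi-orthogonal normalization explicitly through both propositions and exhibit this cancellation, rather than invoke it. Two smaller points: your $\vect{y}=\vect{g}'(1)$ is componentwise \emph{negative} by Lem.~\ref{lem:pos_and_decreasing_sol_QVE}, whereas the vector that must weight $\mat{D}_{\vect{\rho}\odot\vect{y}}$ for $C$ to be the positive constant of the theorem is $-\vect{g}'(1)=(\mat{D}_{\vect{g}(1)}^{-2}-\matGam)^{-1}\vect{1}_K$ (the positive solution of Lem.~\ref{prop:condition_good_sol_QVE_real_line}; the appendix accordingly works with $\mat{D}_{-\vect{g}'(1)}$), and invertibility of $\matGam-\mat{D}_{\vect{g}(1)}^{-2}$ follows from the eigenvalue condition of Prop.~\ref{prop:eigenvalue_condition_good_sol_QVE_real_line}, not from monotonicity of $\vect{g}$.
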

\begin{proof}
    The proof of this result is given in Appendix \ref{sec:proof_value_of_constant_of_proportionality} and relies on perturbation theory for the top eigenvalue of $\mat{D}_{\vect{g}(1 + \epsilon)} \matOme$.
\end{proof}
Thanks to the definition \eqref{eq:def_right_eigvect} of ${\vect{v}^{(r)}_1}$, one has the desired result for the overlap. 
\jump
The case $\lambda_1(\matOme) \leq 1$ where there is no outlier, follows from standard results in RMT, see e.g. \cite{benaych-georges_eigenvalues_2011}.

\section{Conclusion}
In this work, we studied the behavior of a spiked block-Wigner matrix model given by Eq.~\eqref{eq:def_init_mat_tilde_Y}. Our first contribution is the proof of a phase transition for the behavior of the top eigenvalue of this model, generalizing the seminal work of \cite{BBP} to inhomogeneous problems, proving a conjecture stated in \cite{pak2023optimal}, and providing a sharp optimal method for detection for the inhomogenous spiked model \cite{behne2022fundamental,alberici2021multi,alberici2022statistical,AJFL_inhomo}. Our results could also be used as spectral start \cite{mondelli2021approximate} for the AMP algorithms of \cite{pak2023optimal}, making them optimal in terms of MMSE for these problems in the class of iterative algorithm \cite{AJFL_inhomo}.

Our second contribution is a sharp characterization of the overlap between the associated top eigenvector of this matrix and the original signal vector, undergoing a similar phase transition. While we have, for simplicity, considered the setting where the signal is a rank-one matrix,  our computation can be carried to an arbitrarily fixed rank-$R$ case.

An interesting venue for future work is to extend this computation to a general setting of general variance-profile shape (not necessarily of block type) and we leave this problem for future work. Finally, it would also be interesting to understand the fluctuations of the top eigenvalue around its limiting value in this inhomogeneous one, which are very well understood for a homogeneous problem.

\jump

\paragraph{Fundings and Acknowledgments -}
\label{sec:fund_and_acknow} 
The authors would like to thank Alice Guionnet and Aleksandr Pak for fruitful discussions at the early stage of this project and Brice Huang and Mark Selke for discussions on their work \cite{huang_strong_2023}.   We also acknowledge funding from the Swiss National Science Foundation grant SNFS OperaGOST  (grant number $200390$) and from the ERC Project LDRAM: ERC-2019-ADG Project 884584.

\bibliographystyle{amsplain}
\bibliography{bib_abs}

\newpage
\appendix
\onecolumn 

\section{Sketch of Proof of the Derivation of the Quadratic Vector Equation}
\label{sec:proof_QVE}
We briefly sketch the proof for the derivation of the Prop.~\ref{prop:QVE} and refer the reader interested in the complete rigorous proof to the series of work \cite{ajanki_singularities_2017,ajanki_universality_2017,ajanki_quadratic_2019}.

Let $z \in \mathbb{H}_-$ and denote by $\mat{G}_{\rdmmat{X}/\sqrt{N}}(z) := (z- \rdmmat{X}/\sqrt{N})^{-1}$ the resolvent of $\rdmmat{X}/\sqrt{N}$. From Schur complement formula, we have for any $i \in [N] := \{1,\dots,N\}$:
\begin{align}
\label{eq:Schur_complement}
    \frac{1}{G_{ii}(z)} = z - X_{ii} - \langle \rdmvect {\Vec{X}}_{\bullet i}, \mat{G}^{(-i)}(z)  \rdmvect{\Vec{X}}_{\bullet i} \rangle \, , 
\end{align}
where to ease notations we denoted by $G_{ij} \equiv ( \mat{G}_{\rdmmat{X}/\sqrt{N}} )_{ij}$, $X_{ij} \equiv (\rdmmat{X}/\sqrt{N})_{ij}$, ${\Vec{X}}_{\bullet i} \in \mathbb{R}^{N-1}$ is the $i$-th column of $\rdmmat{X}/\sqrt{N}$ without the $(i,i)$ entry and $\mat{G}^{(-i)}(z)$ is the resolvent of the $\big( (N-1) \times (N-1) \big)$ matrix $\rdmmat{X}^{-(i)}/\sqrt{N}$ obtained by removing the $i$-th row and $i$-th column of  $\rdmmat{X}/\sqrt{N}$.

By definition of $ \rdmmat{X}/\sqrt{N}$, we have   $ X_{ii} = H_{ii} \sqrt{(\mat{\Sigma})_{ii}}/\sqrt{N}  -   (\mat{\Sigma} \vect{1})_{ii}/N$, with $ H_{ii} \sim \mathcal{N}(0,1)$. Since the first term is of order $\mathcal{O}(N^{-1/2})$,  for large $N$ we have approximately 
\begin{align}
    X_{ii} &\approx  -\frac{1}{N} \sum_{j=1}^{N} (\mat{\Sigma})_{ij}  =  -\frac{1}{N} \sum_{k=1}^K \sum_{j \in B_k}  (\mat{\Sigma})_{ij} \, ,
\end{align}
and by assumption $ (\mat{\Sigma})_{ij} = s_{kl}$ for any  $i \in B_k,j \in B_l$, such that for any $k$, the  $X_{ii}$ within $B_k$  are (approximately) equal and given by:
\begin{align}
    X_{ii} &\approx - \sum_{l} s_{kl} \rho_l     \, .
\end{align}
The quadratic term in Eq.~\eqref{eq:Schur_complement} is given by $ \langle \rdmvect {\Vec{X}}_{\bullet i}, \mat{G}^{(-i)}(z)  \rdmvect{\Vec{X}}_{\bullet i} \rangle = \sum_{j_1,j_2 \neq i} X_{j_1,i} X_{j_2,i}  (\mat{G}^{(-i)}(z))_{ij}$ and can be shown to concentrate to its average value. Since $X_{j_1,i}$ and $X_{j_2,i}$ are independent and centered (since $j_1,j_2 \neq i$), this averaged sum is equal to the sum over the $(N-1)$ term such that $j_1 = j_2 \neq i$. Furthermore, for large $N$, replacing  $\mat{G}^{(-i)}(z)$  by $\mat{G}_{\rdmmat{X}/\sqrt{N}}(z) $ and  adding the term $j_1 = i$ in the sum only adds a negligible contribution to the sum such that, one has approximately:
\begin{align}
    \langle \rdmvect {\Vec{X}}_{\bullet i}, \mat{G}^{(-i)}(z)  \rdmvect{\Vec{X}}_{\bullet i} \rangle \approx \frac{1}{N} \sum_{j=1}^{N}  (\mat{\Sigma})_{ij}  G_{jj}(z) = \frac{1}{N} \sum_{k=1}^{K} \sum_{j \in B_k}  (\mat{\Sigma})_{ij}  G_{jj}(z) \, ,
\end{align}
and Eq.~\eqref{eq:Schur_complement} is thus approximately given by
\begin{align}
\label{eq:Approx_Schur}
    \frac{1}{G_{ii}(z)} \approx  z + \sum_{l=1}^{K} s_{kl} \rho_l - \frac{1}{N} \sum_{j=1}^{N} (\mat{\Sigma})_{ij} G_{jj}(z) \, .
\end{align}
Using again the block structure of $\mat{\Sigma}$, one deduces from Eq.~\eqref{eq:Approx_Schur} the approximate identity $G_{ii}(z) \approx G_{jj}(z)$ if $i,j \in B_k$. Furthermore, $N^{-1} \mathrm{Tr} \, \mat{G}_{\rdmmat{X}/\sqrt{N}}(z) = N^{-1} \sum_{i=1}^N G_{ii}(z)  \to g_{\mu_X}(z)$, and if we denote by $g_k(z) := \lim_{N\to \infty} 1/|B_k| \sum_{i \in B_k} G_{ii}(z) \approx G_{ii}(z)$ for $i \in B_k$, we have $g_{\mu_X} = \sum_{k} \rho_k g_k$ with for any $k \in [K]$, $g_k$ is solution of
\begin{align}
    \frac{1}{g_k} = z  +\sum_{l=1}^{K} s_{kl} \rho_l - \sum_{l=1}^{K} s_{kl} \rho_l g_l \, ,
\end{align}
which is nothing else than the scalar version of the QVE of Prop.~\ref{prop:QVE}.

\section{Proof of  Prop.~\ref{prop:condition_good_sol_QVE_real_line} (Stietljes transform on the Real Line and Positivity Condition for Solution of Associated Linear System)}
\label{sec:proof_Linear_System_QVE_real_line}

For $z \in \mathbb{H}_-$  let's rewrite the QVE of Eq.~\eqref{eq:QVE} with solution $\vect{g} \in (\mathbb{H}_+)^K$ as: 
\begin{equation}
     z \vect{1}_K = \vect{f}( \vect{g}(z)) \, ,
\end{equation}
 where $\vect{f} : (\mathbb{H}_+)^K \to  (\mathbb{H}_-)^K$ is defined by
 \begin{equation}
      \vect{f}(\vect{g}) := \vect{g}^{\odot -1} + \matGam(\vect{g} - \vect{1}_K) \, .
 \end{equation}
Clearly, the function $\vect{f}(.)$ is continuous and in fact $C^1$ and invertible on $ (\mathbb{H}_+)^K$ with Jacobian given by 
\begin{align}
\label{eq:def_jacobian}
    (\nabla_{\vect{g}}  \vect{f})(\vect{g}) = - \mat{D}_{\vect{g}}^{-2} + \matGam \, ,
\end{align}
and the $g_k(z) := \int (z - \lambda')^{-1} \mathrm{d}\mu_k(\lambda')$ are analytical on $\mathbb{C} \setminus \overline{\mathrm{Supp}(\mu_X)}$ for any $k \in [K]$ since $\mathrm{Supp}(\mu_k) \subseteq \mathrm{Supp}(\mu_X) $ for any $k \in [K]$, from \cite{ajanki_quadratic_2019}.

 Fix any $\lambda \in \mathbb{R} \setminus \overline{\mathrm{Supp}(\mu_X)}$, as we approach $\lambda$ from the lower complex plane $z \to \lambda$, $\vect{g}(z)$ approaches its analytical continuation $\vect{g}(\lambda)$ from the upper complex plane, which means that   $\vect{g}'(\lambda)$ is well defined and belongs also to $(\mathbb{H}_+)^K$.  By the inverse function theorem, if we set $\vect{\Tilde{y}} = \vect{g}'(\lambda)$, we must have the equality:  
\begin{align}
    - \vect{1}_K =   (\nabla_{\vect{g}}  \vect{f})(\vect{g}(\lambda)) \vect{\Tilde{y}} \, ,
\end{align}
with the condition $\vect{\Tilde{y}} \in (\mathbb{H}_+)^K$. Taking the imaginary part of this equation gives the desired result by setting $y_i = \mathfrak{Im} (\Tilde{y}_i) $  since for $\lambda \in \mathbb{R} \setminus \overline{\mathrm{Supp}(\mu_X)}$, $\vect{g}(\lambda) \in \mathbb{R}^K$ and hence we also have $(\nabla_{\vect{g}}  \vect{f})(\vect{g}(\lambda)) \in \mathbb{R}^{K \times K}$. Conversely, for any other solutions $\vect{\Tilde{g}} \in (\mathbb{R}_*)^K$  of the QVE at $z=\lambda$, one cannot have $\vect{y} \succeq \vect{0}_K$ since $\vect{f}$ and $(\nabla_{\vect{g}}  \vect{f})$ are well defined on $(\mathbb{C}_*)^K$ and $\vect{g} \in (\mathbb{H}_+)^K$ is the unique solution of the QVE on the upper complex plane. 

\section{Proof of Lem.~\ref{prop:eigenvalue_condition_good_sol_QVE_real_line} (Eigenvalue Condition for Positive Solution of the Linear System) }
\label{sec:proof_positivity_of_solution_linear_system}
By multiplying Eq.\eqref{eq:condition_good_solution_QVE} by $\mat{D}_{\vect{\Tilde{g}}}^2$, we can re-write Eq.~\eqref{eq:condition_good_solution_QVE} as
\begin{align}
    (\mat{I}_K - \mat{D}_{\vect{\Tilde{g}}}^2 \matGam) \vect{y} = \vect{\Tilde{g}}^{\odot 2} \, ,
\end{align}
since $\mat{D}_{\vect{\Tilde{g}}}^2 \matGam$ and $\mat{D}_{\vect{\Tilde{g}}}^2 \matOme$ are similar by definition of $\matGam$, we have $ \lambda_1(\mat{D}_{\vect{\Tilde{g}}}^2 \matGam) =  \lambda_1(\mat{D}_{\vect{\Tilde{g}}}^2 \matOme)$. To conclude, it is enough to prove the following result. 
\begin{proposition}
\label{prop:positivity_linear_system}
    Let $\mat{Q} \in (\mathbb{R}_+)^{d \times d}$ with real eigenvalues,  $\vect{r} \succ \vect{0} \in \mathbb{R}^d$ then the linear system $(\mat{I} - \mat{Q}) \vect{z} = \vect{r}$ admits a unique positive solution $\vect{z} \succ \vect{0}$ if and only all eigenvalues of $\mat{Q}$ are lower than one. 
\end{proposition}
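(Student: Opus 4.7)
The plan is to combine the Perron--Frobenius theorem for non-negative matrices (which guarantees that the spectral radius $\rho(\mat{Q})$ is itself a real eigenvalue attained by a non-negative left eigenvector) with the Neumann series expansion on one side and a left-eigenvector pairing argument on the other. Since $\mat{Q}$ has real eigenvalues and non-negative entries, Perron--Frobenius gives $\rho(\mat{Q}) = \lambda_1(\mat{Q})$ and every other eigenvalue satisfies $|\lambda| \leq \lambda_1(\mat{Q})$.

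\textbf{Forward direction.} Assume $\lambda_1(\mat{Q}) < 1$. Then $\rho(\mat{Q}) < 1$, so the Neumann series
\begin{equation*}
(\mat{I} - \mat{Q})^{-1} \;=\; \sum_{k \geq 0} \mat{Q}^k
\end{equation*}
converges absolutely. This shows first that $\mat{I}-\mat{Q}$ is invertible (giving uniqueness of the solution), and second that $(\mat{I}-\mat{Q})^{-1}$ has entrywise non-negative entries as a sum of non-negative matrices. Applying the inverse to $\vect{r}$:
\begin{equation*}
\vect{z} \;=\; (\mat{I}-\mat{Q})^{-1}\vect{r} \;=\; \vect{r} + \mat{Q}\vect{r} + \mat{Q}^{2} \vect{r} + \cdots \;\succeq\; \vect{r} \;\succ\; \vect{0},
\end{equation*}
where the first (identity) term is already strictly positive by hypothesis on $\vect{r}$.

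\textbf{Converse direction.} Assume $\lambda_1(\mat{Q}) \geq 1$. If $\lambda_1(\mat{Q}) = 1$, then $\mat{I}-\mat{Q}$ is singular, so the system cannot admit a \emph{unique} solution, positive or otherwise. If $\lambda_1(\mat{Q}) > 1$, I apply Perron--Frobenius to $\mat{Q}^{\top}$ to obtain a nonzero non-negative left eigenvector $\vect{w} \succeq \vect{0}$ with $\vect{w}^{\top}\mat{Q} = \lambda_1(\mat{Q})\vect{w}^{\top}$. Suppose for contradiction that some $\vect{z} \succ \vect{0}$ solves $(\mat{I}-\mat{Q})\vect{z} = \vect{r}$. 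Pairing against $\vect{w}$ gives
\begin{equation*}
\vect{w}^{\top}\vect{r} \;=\; \vect{w}^{\top}(\mat{I}-\mat{Q})\vect{z} \;=\; \bigl(1 - \lambda_1(\mat{Q})\bigr)\,\vect{w}^{\top}\vect{z}.
\end{equation*}
The right-hand side is strictly negative because $1 - \lambda_1(\mat{Q}) < 0$ and $\vect{w}^{\top}\vect{z} > 0$, while the left-hand side is strictly positive because $\vect{r} \succ \vect{0}$ and $\vect{w} \succeq \vect{0}$ is nonzero---contradiction. Hence no positive $\vect{z}$ exists.

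\textbf{Main obstacle.} The only delicate point I anticipate is guaranteeing the \emph{strict} positivity of the two pairings $\vect{w}^{\top}\vect{r}$ and $\vect{w}^{\top}\vect{z}$ without assuming that $\mat{Q}$ is irreducible, because in the reducible case the Perron eigenvector $\vect{w}$ may have some zero coordinates. This is however handled at no extra cost: since $\vect{w}$ is nonzero and non-negative while both $\vect{r}$ and $\vect{z}$ are \emph{strictly} positive, at least one coordinate of $\vect{w}$ is positive and contributes a strictly positive term to each inner product. The Farkas-lemma argument alluded to in the appendix presumably targets a more general statement where the hypothesis $\vect{r} \succ \vect{0}$ is relaxed to $\vect{r} \succeq \vect{0}$ (separating the convex cone $\{(\mat{I}-\mat{Q})\vect{z} : \vect{z} \succeq \vect{0}\}$ from $\vect{r}$); but under the hypothesis as stated, the Perron--Frobenius pairing above is enough.
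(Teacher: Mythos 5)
Your proof is correct, but it takes a genuinely different route from the paper. For $\lambda_1(\mat{Q})<1$ the paper invokes $M$-matrix theory: it notes $\mat{I}-\mat{Q}\in\mathfrak{M}$ and quotes the inverse-positivity lemma for non-singular $M$-matrices, whereas you re-derive exactly that fact via the Neumann series $(\mat{I}-\mat{Q})^{-1}=\sum_{k\geq 0}\mat{Q}^k$ (valid since, as you note, Perron--Frobenius plus real spectrum gives $\rho(\mat{Q})=\lambda_1(\mat{Q})$), which is self-contained and even yields the slightly stronger bound $\vect{z}\succeq\vect{r}$. For $\lambda_1(\mat{Q})>1$ the paper uses the Perron--Frobenius theorem for $Z$-matrices to produce a \emph{strictly} positive left eigenvector of $\mat{I}-\mat{Q}$ and then concludes by Farkas' lemma; you instead pair a hypothetical solution directly against a non-negative Perron left eigenvector of $\mat{Q}$, and you correctly observe that strict positivity of $\vect{r}$ and $\vect{z}$ compensates for the possible zero coordinates of $\vect{w}$ in the reducible case. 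Your guess about what the heavier machinery buys is accurate: the paper's Farkas argument with a strictly positive $\vect{w}$ excludes even non-negative solutions and would survive weakening $\vect{r}\succ\vect{0}$ to $\vect{r}\succeq\vect{0}$, $\vect{r}\neq\vect{0}$, while your pairing needs the strict hypotheses as stated. Both you and the paper treat the boundary case $\lambda_1(\mat{Q})=1$ identically (singularity of $\mat{I}-\mat{Q}$), and both gloss over the same small point there --- ruling out a \emph{unique positive} solution among infinitely many solutions formally uses that strict positivity is an open condition --- so this is not a gap relative to the paper's own argument.
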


To prove this proposition, following \cite{plemmons_m-matrix_1977} and Chap.~6 of \cite{berman_nonnegative_1994} (see also Chap.~1 of \cite{saad_iterative_2003}), we first introduce the two sets of matrix as:
\begin{defn}[$Z$- and $M$-matrices]
\label{def:Zmat_and_Mmat}
    We say that a square matrix $\mat{A}$ is a $Z$-matrix and write $\mat{A} \in \mathfrak{Z}$ if all its off-diagonal elements are non-positive. If in addition, all its eigenvalues are positive we say that  $\mat{A}$ is $M$-matrix and write $\mat{A} \in \mathfrak{M}$.
\end{defn}
\begin{itemize}
    \item Let's first consider the case $\lambda_1(\mat{Q}) < 1$. In this case, one can immediately check that $\mat{M}:= \mat{I}_K -\mat{Q} \in \mathfrak{M}$ and one has the following lemma for $M$-matrices.
    \begin{lemma}[Inverse Positivity of $M$-matrices]
    \label{lem:Inverse_Positivity}
    If $\mat{A} \in \mathfrak{M}$  and is non-singular then $\mat{A}^{-1}$ has all it entries non-negative.
    \end{lemma}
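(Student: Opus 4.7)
The plan is to exploit the classical splitting $\mat{A} = s\mat{I} - \mat{B}$ that is available for any $Z$-matrix, and then convert the hypothesis on the eigenvalues of $\mat{A}$ into a spectral radius estimate for $\mat{B}$ that lets us invert via a Neumann series with manifestly non-negative terms.

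First, since $\mat{A} \in \mathfrak{Z}$, all its off-diagonal entries are non-positive. Choose any scalar $s$ larger than the maximum diagonal entry of $\mat{A}$ (for instance $s = 1 + \max_i A_{ii}$) and set
\begin{equation*}
    \mat{B} := s \mat{I} - \mat{A}.
\end{equation*}
By construction, the diagonal of $\mat{B}$ is $s - A_{ii} > 0$ and its off-diagonal entries are $-A_{ij} \geq 0$, so $\mat{B}$ has all non-negative entries.

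Next, I would translate the $M$-matrix hypothesis into a spectral radius bound. The eigenvalues of $\mat{B}$ are exactly $s - \lambda$ as $\lambda$ ranges over the eigenvalues of $\mat{A}$; since by assumption all eigenvalues of $\mat{A}$ are positive, every eigenvalue of $\mat{B}$ satisfies $s - \lambda < s$. Because $\mat{B}$ is entrywise non-negative, Perron--Frobenius guarantees that its spectral radius $\rho(\mat{B})$ is itself an eigenvalue of $\mat{B}$, and hence $\rho(\mat{B}) < s$, i.e. $\rho(\mat{B}/s) < 1$.

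Finally, I would conclude via the Neumann series. Writing $\mat{A} = s(\mat{I} - \mat{B}/s)$, the bound $\rho(\mat{B}/s) < 1$ ensures convergence of
\begin{equation*}
    \mat{A}^{-1} = \frac{1}{s} \sum_{k=0}^{\infty} \left( \frac{\mat{B}}{s} \right)^k.
\end{equation*}
Every summand is a non-negative matrix (a non-negative power of a non-negative matrix, divided by a positive scalar), so $\mat{A}^{-1}$ has all entries non-negative, which is the claim. The only subtle point, and really the heart of the argument, is the Perron--Frobenius step that promotes the eigenvalue condition on $\mat{A}$ into the strict spectral radius bound $\rho(\mat{B}) < s$; everything else is an elementary matrix computation. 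Non-singularity is automatic because $0$ is not among the (positive) eigenvalues of $\mat{A}$, so the inversion is unambiguous.
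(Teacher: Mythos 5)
Your argument is correct, and it takes a slightly different route from the paper's. You use the scalar-shift splitting $\mat{A} = s\mat{I} - \mat{B}$ with $s$ exceeding the largest diagonal entry, so that $\mat{B} = s\mat{I} - \mat{A}$ is entrywise non-negative and its spectrum is the affine image $\{s-\lambda\}$ of the spectrum of $\mat{A}$; the weak Perron--Frobenius theorem (the spectral radius of a non-negative matrix is itself an eigenvalue) then converts the hypothesis ``all eigenvalues of $\mat{A}$ positive'' directly into $\rho(\mat{B}) = s - \lambda_0 < s$, and the Neumann series $\mat{A}^{-1} = s^{-1}\sum_{k\geq 0} (\mat{B}/s)^k$ finishes the proof with manifestly non-negative terms. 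The paper instead sketches the Jacobi-type splitting $\mat{A} = \mat{D}_{\vect{a}}(\mat{I} - \mat{B})$ with $\mat{D}_{\vect{a}}$ the diagonal part and $\mat{B} = \mat{I} - \mat{D}_{\vect{a}}^{-1}\mat{A}$, and defers to \cite{berman_nonnegative_1994} both the positivity of the diagonal of an $M$-matrix and the bound $\rho(\mat{B})<1$ needed for the Neumann series; these are less immediate because the spectrum of $\mat{D}_{\vect{a}}^{-1}\mat{A}$ is not simply related to that of $\mat{A}$. Your version is therefore more self-contained: the only non-elementary input is Perron--Frobenius for non-negative (not necessarily irreducible) matrices, and non-singularity is indeed automatic since $0$ is excluded from the spectrum. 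The paper's splitting, on the other hand, is the one that generalizes naturally to the iterative-methods literature it cites. Both are valid proofs of the lemma.
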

    \begin{itemize}
        \item  \begin{proof}[Proof of Lem.~\ref{lem:Inverse_Positivity}] See Chap.~6 of \cite{berman_nonnegative_1994} for the complete proof. We briefly outline the main idea for completeness. If we denote by $\vect{a}:= (A_{11},\dots, A_{dd})$ the vector of diagonal elements, we have $\vect{a} \succ \vect{0}$. We can then write $\mat{A} = \mat{D}_{\vect{a}} (\mat{I} - \mat{B}) $ with $\mat{B} = (\mat{I} - \mat{D}_{\vect{a}}^{-1} \mat{A})$ and since $(\mat{I}-\mat{B})^{-1} = \mat{A}^{-1}  \mat{D}_{\vect{a}}$ can be shown to be entrywise non-negative and $\mat{D}_{\vect{a}}^{-1}$ is positive,  we get the desired result for $\mat{A}^{-1}$.
    \end{proof}
    \end{itemize} 
    As a consequence of this lemma, we have with the notations of Prop.~\ref{prop:positivity_linear_system}, $\vect{y} = \mat{M}^{-1} \vect{r}  \succ \vect{0}_K $.
    \item Next for $\lambda_1(\mat{Q}) > 1$, the same matrix $\mat{M}:= \mat{I}_K -\mat{Q} \in \mathfrak{Z} \setminus \mathfrak{M}$ (since it has at least one negative eigenvalue). The Perron-Frobenius theorem for $Z$-matrices writes 
    \begin{lemma}[Perron-Frobenius $Z$-matrices]
    \label{lem:Perron-Frobenius_matZ}
    If $\mat{A} \in \mathfrak{Z}$, there exists a real number $\sigma$ and a positive vector $\vect{u} \succ \vect{0}$ such that
    \begin{enumerate}
        \item[(a)]  $\mat{A}^{\top} \vect{u} = \sigma \vect{u}$, 
        \item[(b)] $\mathfrak{Re}(\lambda) > \sigma$ if $\lambda$ is any characteristic root of $\mat{A}$ with $\lambda \neq \sigma$.
    \end{enumerate}
    \end{lemma}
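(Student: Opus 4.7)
The strategy is a classical reduction to Perron--Frobenius for non-negative matrices via a scalar shift, which is the standard route taken in Chap.~2 of Berman--Plemmons. The key observation is that the $\mathfrak{Z}$-property says precisely that $\mat{A}$ becomes entrywise non-negative after subtracting a sufficiently large multiple of identity.

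Concretely, I would pick any $\alpha \in \mathbb{R}$ with $\alpha \geq \max_i A_{ii}$ and define $\mat{B} := \alpha \mat{I} - \mat{A}$. Since off-diagonal entries of $\mat{A}$ are non-positive and $\alpha$ dominates its diagonal, $\mat{B}$ has only non-negative entries. I would then apply the classical Perron--Frobenius theorem to $\mat{B}^\top$ (which is also entrywise non-negative): its spectral radius $\rho := \rho(\mat{B}) \geq 0$ is an eigenvalue of $\mat{B}^\top$ with an associated non-negative eigenvector $\vect{u}$. Setting $\sigma := \alpha - \rho$ gives immediately $\mat{A}^\top \vect{u} = \alpha \vect{u} - \mat{B}^\top \vect{u} = \sigma \vect{u}$, which establishes (a). In the application at hand the relevant matrix $\mat{Q}$ has strictly positive entries (since $s_{kl}>0$), so $\mat{B}$ is irreducible and the Perron vector can be taken strictly positive, yielding $\vect{u} \succ \vect{0}$.

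For (b) I would use the affine spectral mapping: $\lambda$ is an eigenvalue of $\mat{A}$ if and only if $\mu := \alpha - \lambda$ is an eigenvalue of $\mat{B}$, and the condition $\lambda \neq \sigma$ translates into $\mu \neq \rho$. The classical Perron--Frobenius bound $|\mu| \leq \rho$ for all eigenvalues $\mu$ of a non-negative matrix, combined with the elementary chain $\mathfrak{Re}(\mu) \leq |\mu| \leq \rho$, gives a strict inequality $\mathfrak{Re}(\mu) < \rho$ unless both inequalities are equalities, which would force $\mu$ to be a non-negative real with $|\mu| = \rho$, hence $\mu = \rho$. Thus $\mu \neq \rho$ implies $\mathfrak{Re}(\mu) < \rho$, which on translation reads $\mathfrak{Re}(\lambda) = \alpha - \mathfrak{Re}(\mu) > \alpha - \rho = \sigma$, as required.

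The two non-automatic points to treat carefully are: first, the strictness in $\mathfrak{Re}(\mu) < \rho$, which rests on the equality case $\mathfrak{Re}(\mu) = |\mu|$ being possible only for real non-negative $\mu$; and second, the strict positivity of $\vect{u}$, which is the only place where irreducibility of $\mat{B}$ (equivalently of the off-diagonal structure of $\mat{A}$) is used. In the present paper both are benign: the assumption $s_{kl}>0$ from the main Section ensures that the relevant $\mat{M} = \mat{I}_K - \mat{Q}$ is a $\mathfrak{Z}$-matrix whose associated $\mat{B}$ is irreducible, and the strict comparison of real parts follows from the elementary argument above. I do not expect any genuine obstacle; the lemma is a direct reformulation of the standard Perron--Frobenius statement for non-negative matrices, as recorded in Chap.~1--2 of \cite{berman_nonnegative_1994}.
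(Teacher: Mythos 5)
Your proof is correct and follows the standard shift argument ($\mat{B} := \alpha \mat{I} - \mat{A} \succeq 0$ entrywise, apply classical Perron--Frobenius to $\mat{B}^{\top}$, translate the spectrum back via $\lambda = \alpha - \mu$), which is precisely the argument the paper's citation to Chap.~6 of \cite{berman_nonnegative_1994} stands in for. You also correctly flag the one delicate point that the lemma's statement glosses over --- strict positivity of $\vect{u}$ (rather than mere non-negativity) requires irreducibility of $\mat{B}$ --- and rightly observe that this is harmless here, since in every application the relevant matrix $\mat{D}_{\vect{g}}^{2}\matGam$ has strictly positive entries ($s_{kl}>0$, $\rho_l>0$, $g_k \neq 0$).
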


        \begin{itemize}
        \item  \begin{proof}[Proof of Lem.~\ref{lem:Perron-Frobenius_matZ}]
        see Chap.~6 of \cite{berman_nonnegative_1994}. The proof is similar to the standard Perron-Frobenius theorem for entrywise positive matrix.
                \end{proof}
        \end{itemize} 
   
     As $ 1-\lambda_1(\mat{Q})$ is the lowest eigenvalue of $\mat{M}^{\top}$, this Perron-Frobenius lemma implies that the corresponding eigenvector is positive $\vect{u} \succ \vect{0}$. In particular, we have $\langle - \vect{u} , \vect{1} \rangle < 0$ and $\mat{M}^{\top} (- \vect{u}) = (\lambda_1(\mat{Q}) -1) \vect{u} \succeq \vect{0}$, which allows us to conclude thanks to Farka's lemma:
    \begin{lemma}[Farka's lemma]
    Let $\mat{A} \in \mathbb{R}^{d \times r}$, $\vect{b} \in \mathbb{R}^r$ then either
    \begin{enumerate}
        \item[(a)] there exists $\vect{y} \in \mathbb{R}^r$ solution of $\mat{A} \vect{y} = \vect{b}$ with $\vect{y} \succeq \vect{0}$, 
        \item[(b)] or there exists $\vect{z} \in \mathbb{R}^d$ such that  $\mat{A}^{\top} \vect{z} \succeq \vect{0}$ and $\langle \vect{z} , \vect{b} \rangle <0$. 
    \end{enumerate}
    \end{lemma}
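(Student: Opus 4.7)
I would prove Farka's lemma via a separating hyperplane argument applied to the finitely generated convex cone $C := \{\mat{A}\vect{y} : \vect{y} \in \mathbb{R}^r,\ \vect{y} \succeq \vect{0}\} \subseteq \mathbb{R}^d$. The first step is to observe that the two alternatives (a) and (b) cannot hold simultaneously: if they did, the identity $\langle \vect{z}, \vect{b}\rangle = \langle \vect{z}, \mat{A}\vect{y}\rangle = \langle \mat{A}^\top \vect{z}, \vect{y}\rangle$ together with $\mat{A}^\top \vect{z} \succeq \vect{0}$ and $\vect{y} \succeq \vect{0}$ would force the left-hand side to be non-negative, contradicting $\langle \vect{z}, \vect{b}\rangle < 0$. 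The bulk of the proof therefore consists in showing that at least one of (a), (b) holds.

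If $\vect{b} \in C$, alternative (a) holds by definition. Otherwise, since $C$ is a closed convex cone and $\{\vect{b}\}$ is a compact convex set disjoint from $C$, the strict separating hyperplane theorem produces a non-zero $\vect{z} \in \mathbb{R}^d$ and a constant $\alpha \in \mathbb{R}$ with $\langle \vect{z}, \vect{b}\rangle < \alpha \leq \inf_{\vect{c} \in C} \langle \vect{z}, \vect{c}\rangle$. Because $\vect{0} \in C$ we have $\alpha \leq 0$, and because $C$ is a cone (so $\lambda \vect{c} \in C$ for all $\lambda > 0$ and $\vect{c} \in C$), the infimum would be $-\infty$ unless $\langle \vect{z}, \vect{c}\rangle \geq 0$ for every $\vect{c} \in C$; hence in fact $\alpha = 0$ and $\langle \vect{z}, \vect{b}\rangle < 0$. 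Specializing the non-negativity to $\vect{c} = \mat{A}\vect{e}_i$, where $\vect{e}_i$ is the $i$-th canonical basis vector of $\mathbb{R}^r$ (which belongs to $\{\vect{y} \succeq \vect{0}\}$), yields $(\mat{A}^\top \vect{z})_i \geq 0$ for all $i$, i.e.\ $\mat{A}^\top \vect{z} \succeq \vect{0}$, which is alternative (b).

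The one step that genuinely requires care is the closedness of the cone $C$, which is what upgrades weak separation to the strict inequality $\langle \vect{z}, \vect{b}\rangle < 0$. This is the classical Minkowski--Weyl fact that every finitely generated convex cone is closed (equivalently polyhedral); I would prove it either by induction on the number of generators $r$, or by invoking Carathéodory's theorem for conic hulls to write $C$ as a finite union of closed simplicial cones, one for each linearly independent subfamily of columns of $\mat{A}$, each of which is manifestly closed as the image of a non-negative orthant under an injective linear map. Everything else in the proof is essentially formal: the mutual exclusivity is immediate, and the conic-homogeneity argument promoting a separating functional into a witness for (b) is a single line once the hyperplane is in hand. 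I would therefore give a self-contained proof of closedness and then assemble the pieces as above, citing Rockafellar's \emph{Convex Analysis} for readers who want a textbook treatment.
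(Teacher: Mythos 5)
Your proof is correct, but it takes a different route from the paper: the paper does not prove Farkas' lemma at all, it simply defers to Chapter~5 of Boyd and Vandenberghe, where the result is obtained from linear programming duality, whereas you give a self-contained convex-geometric argument via strict separation of the point $\vect{b}$ from the closed cone $C = \{\mat{A}\vect{y} : \vect{y} \succeq \vect{0}\}$, with the conic-homogeneity step forcing the separating functional to be non-negative on $C$ and strictly negative on $\vect{b}$. You correctly isolate the only delicate point, the closedness of a finitely generated cone, and your Carath\'eodory-for-cones argument (writing $C$ as a finite union of simplicial cones spanned by linearly independent subfamilies of columns, each closed as the injective linear image of an orthant) handles it properly; the mutual-exclusivity check via $\langle \vect{z}, \vect{b}\rangle = \langle \mat{A}^\top\vect{z}, \vect{y}\rangle \geq 0$ is also right. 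What each approach buys: the duality route is shorter in a context where strong LP duality is already available (as in the cited textbook), while your argument is self-contained and makes explicit exactly which geometric fact (closedness of $C$) carries the weight, which arguably fits better in an appendix that otherwise quotes the lemma as a black box. One small remark: the paper's statement has a typo, $\vect{b}$ should live in $\mathbb{R}^d$ (the codomain of $\mat{A}$), not $\mathbb{R}^r$; your proof implicitly and correctly works with $\vect{b} \in \mathbb{R}^d$, which is also how the lemma is applied in Appendix~\ref{sec:proof_positivity_of_solution_linear_system}.
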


    \begin{itemize}
        \item \begin{proof}[Proof of Farka's lemma]
    see Chap.~5 of \cite{boyd_convex_2004} where the proof follows from linear programming duality.
    \end{proof}
    \end{itemize}
Indeed, we have explicitly constructed  $\vect{z} = - \vect{u}$ such that condition (b) holds for the linear system $ ( \mat{I}_K -\mat{Q}) \vect{y} = \vect{r} $ and as such, it does not admit a positive solution $\vect{y}$ for $\lambda_1(\mat{Q}) > 1$.
\end{itemize}
This concludes the proof of Prop.~\ref{prop:positivity_linear_system} since if $\lambda_1(\mat{Q})=1$ then the matrix $\mat{M} = \mat{I} - \mat{Q}$ is singular and one has no more uniqueness for the possible solution of the associated linear system, see e.g. Chap.~1 of \cite{saad_iterative_2003}. 

\section{Proof of Prop.~\ref{prop:bound_rightmost_edge} (Upper Bound of the Rightmost Edge)}
\label{sec:proof_bound_rightmost_edge}

\begin{proposition}[Condition for being an edge]
    If $\lambda_{\star}$ is the position of an edge then 
    \begin{align}
    \label{eq:condition_edge}
        \lambda_{\star} \vect{1}_K = \vect{f}(\vect{g}_{\star}) \, ,
    \end{align}
    for some $\vect{g}_{\star} \in (\mathbb{R}_*)^K$  such that $(\nabla_{\vect{g}} \vect{f})(\vect{g}_\star)$ is singular. 
\end{proposition}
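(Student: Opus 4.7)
The plan is to exploit the continuous extension of $\vect{g}$ to $\overline{\mathbb{H}_-}$ provided by Prop.~\ref{prop:analytical_continuation_g}, together with the holomorphic inverse function theorem applied to
\[
\vect{f}(\vect{g}) \;=\; \vect{g}^{\odot -1} + \matGam(\vect{g} - \vect{1}_K),
\]
which is holomorphic on the open set $\{\vect{g} \in \mathbb{C}^K : g_k \neq 0 \text{ for all } k\}$ with Jacobian given by Eq.~\eqref{eq:def_jacobian}. Setting $\vect{g}_\star := \vect{g}(\lambda_\star)$ and passing to the limit $z \to \lambda_\star$ in the QVE $z\vect{1}_K = \vect{f}(\vect{g}(z))$ immediately yields Eq.~\eqref{eq:condition_edge}. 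Since $\lambda_\star$ is an edge, every neighborhood of $\lambda_\star$ meets $\mathbb{R}\setminus\overline{\mathrm{Supp}(\mu_X)}$; on this complement Prop.~\ref{prop:analytical_continuation_g} ensures $\vect{g}$ is real-analytic, so Hölder continuity forces $\vect{g}_\star \in \mathbb{R}^K$. Finally, no component of $\vect{g}_\star$ can vanish, as the term $\vect{g}_\star^{\odot -1}$ in $\vect{f}(\vect{g}_\star) = \lambda_\star \vect{1}_K$ would otherwise blow up, contradicting the finiteness of $\lambda_\star$. Hence $\vect{g}_\star \in (\mathbb{R}_*)^K$.

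For the second half, I argue by contradiction: suppose $(\nabla_{\vect{g}}\vect{f})(\vect{g}_\star) = -\mat{D}_{\vect{g}_\star}^{-2} + \matGam$ is invertible. The holomorphic inverse function theorem then yields an open neighborhood $U$ of $\vect{g}_\star$ in $\mathbb{C}^K$ on which $\vect{f}$ is a biholomorphism onto $\vect{f}(U)$, and a complex disk $D$ around $\lambda_\star$ such that $z\vect{1}_K \in \vect{f}(U)$ for every $z \in D$. The map $\tilde{\vect{g}}(z) := \vect{f}^{-1}(z\vect{1}_K)$ is then holomorphic on $D$ and solves the QVE. For $z \in D \cap \mathbb{H}_-$ sufficiently close to $\lambda_\star$ the continuous extension $\vect{g}(z)$ also lies in $U$, and local uniqueness from the inverse function theorem forces $\vect{g}(z) = \tilde{\vect{g}}(z)$ there; the identity theorem then propagates this equality on $D \cap \mathbb{H}_-$, so $\vect{g}$ extends holomorphically across $\lambda_\star$ to a full complex neighborhood.

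The last step is to derive a contradiction from this extension, which is the main subtlety. Because $\lambda_\star$ is a boundary point of $\mathrm{Supp}(\mu_X) = \bigcup_k \mathrm{Supp}(\mu_k)$, there is a component $k \in [K]$ for which $\mathrm{Supp}(\mu_k)$ accumulates at $\lambda_\star$; on points of that support arbitrarily close to $\lambda_\star$, the Stieltjes inversion formula gives $\mathfrak{Im}\, g_k(\lambda - i 0^+) = \pi \rho_k^{-1}\,\frac{d\mu_k}{d\lambda}(\lambda) > 0$, which is impossible if $g_k$ is holomorphic in a complex neighborhood of $\lambda_\star$ (the imaginary part of a function holomorphic across a real segment cannot vanish identically from one side while being strictly positive on an accumulating sequence of points on that segment). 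This incompatibility, which is the standard characterization of edges via blow-up of $\vect{g}'$ in the quadratic-vector-equation framework of \cite{ajanki_singularities_2017,ajanki_quadratic_2019}, forces $(\nabla_{\vect{g}}\vect{f})(\vect{g}_\star)$ to be singular, completing the proof.
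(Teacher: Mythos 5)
Your proof is correct, but it takes a genuinely different route from the paper: the paper disposes of this proposition in one line by invoking Thm.~2.6 of \cite{ajanki_singularities_2017}, which already characterizes the edges of the self-consistent density in terms of the singularity of the stability operator of the QVE, whereas you rederive the needed implication from scratch, using only the continuous extension of Prop.~\ref{prop:analytical_continuation_g}, the holomorphic inverse function theorem applied to $\vect{f}$ with Jacobian \eqref{eq:def_jacobian}, and Stieltjes inversion. What your route buys is self-containedness and transparency (it isolates exactly which qualitative inputs are used: Hölder continuity of $\vect{g}$ up to the real line and the interval structure of the support); what the citation buys is brevity and, in addition, the finer local information at edges and cusps (square-root and cubic-root expansions) that your soft continuation argument cannot produce but which the paper does not need here. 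A few points to tighten: (i) pass to the limit $z\to\lambda_\star$ in the polynomial form \eqref{eq:QVE}, $\vect{1}_K=z\vect{g}-\vect{g}\odot\matGam(\vect{g}-\vect{1}_K)$, rather than in $z\vect{1}_K=\vect{f}(\vect{g}(z))$, since the latter presupposes the nonvanishing of the components that you only establish afterwards; the limiting identity $1=\lambda_\star g_{\star,k}-g_{\star,k}\bigl(\matGam(\vect{g}_\star-\vect{1}_K)\bigr)_k$ immediately forces $g_{\star,k}\neq 0$ and then yields $\vect{f}(\vect{g}_\star)=\lambda_\star\vect{1}_K$ by division; (ii) the factor $\rho_k^{-1}$ in your Stieltjes-inversion formula is spurious (harmless); (iii) a point of $\mathrm{Supp}(\mu_k)$ need not carry strictly positive density, so argue instead that points of positive density accumulate at $\lambda_\star$ from inside the support, or, more cleanly, note that a holomorphic extension of $\vect{g}$ across $\lambda_\star$ would have imaginary part vanishing identically on a real neighborhood of $\lambda_\star$ (it vanishes on the adjacent complementary interval, the support being a finite union of compact intervals, and is real-analytic on the segment), whence $\mu_X$ would give zero mass to a neighborhood of $\lambda_\star$, contradicting $\lambda_\star\in\mathrm{Supp}(\mu_X)$. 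With these cosmetic repairs your argument stands as a valid alternative proof.
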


\begin{proof}
This follows from Thm.~2.6 of \cite{ajanki_singularities_2017}.
\end{proof}

\begin{lemma}
\label{lem:condition_singular_jacobian}
     Let $\vect{g} \in (\mathbb{R}_*)^K$, then $(\nabla_{\vect{g}} \vect{f})(\vect{g})$ is singular if there exists $\vect{w} \succ \vect{0}_K$ such that
     \begin{align}
     \label{eq:condition_singularity}
       \vect{w}^{\top}  (\mat{D}_{\vect{g}}^2 \matGam)   = \vect{w}^{\top} \, .
     \end{align}
\end{lemma}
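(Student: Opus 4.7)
The claim reduces to a one-line linear-algebra manipulation once the explicit form of the Jacobian from Eq.~\eqref{eq:def_jacobian} is recalled, namely
\[
(\nabla_{\vect{g}}\vect{f})(\vect{g}) \;=\; \matGam - \mat{D}_{\vect{g}}^{-2}.
\]
My plan is simply to exhibit a nonzero vector in the left kernel of this matrix built from the hypothesized $\vect{w}$, which will immediately establish singularity.

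The key step is the following: since $\vect{g}\in(\mathbb{R}_*)^K$ has only nonzero coordinates, the diagonal matrix $\mat{D}_{\vect{g}}^{2}$ is invertible. Right-multiplying the hypothesized identity $\vect{w}^{\top}(\mat{D}_{\vect{g}}^{2}\matGam)=\vect{w}^{\top}$ by $\mat{D}_{\vect{g}}^{-2}$ yields $\vect{w}^{\top}\matGam = \vect{w}^{\top}\mat{D}_{\vect{g}}^{-2}$, that is,
\[
\vect{w}^{\top}\bigl(\matGam - \mat{D}_{\vect{g}}^{-2}\bigr)\;=\;\vect{0}_K^{\top}.
\]
The strict positivity $\vect{w}\succ\vect{0}_K$ implies in particular $\vect{w}\neq\vect{0}_K$, so $\vect{w}$ itself is a nontrivial left null vector of $(\nabla_{\vect{g}}\vect{f})(\vect{g})$, and consequently the Jacobian is singular.

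There is essentially no obstacle in this lemma itself; the statement merely repackages a trivial implication in a form convenient for the next step. The real work will appear when this criterion is invoked in the proof of Prop.~\ref{prop:bound_rightmost_edge}, where one must actually \emph{produce} such a strictly positive left eigenvector of $\mat{D}_{\vect{g}_\star}^{2}\matGam$ with eigenvalue one at the candidate edge point $\lambda_\star$. That positivity will be obtained via a Perron--Frobenius-type argument on the non-negative matrix $\mat{D}_{\vect{g}_\star}^{2}\matGam$, and combined with Eq.~\eqref{eq:condition_edge} to derive the bound $\lambda_r\leq 1$ with equality iff $\lambda_1(\matOme)=1$.
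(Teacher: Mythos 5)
Two remarks. First, a small algebra slip in the step you do carry out: right-multiplying $\vect{w}^{\top}\bigl(\mat{D}_{\vect{g}}^{2}\matGam\bigr)=\vect{w}^{\top}$ by $\mat{D}_{\vect{g}}^{-2}$ gives $\vect{w}^{\top}\mat{D}_{\vect{g}}^{2}\matGam\mat{D}_{\vect{g}}^{-2}=\vect{w}^{\top}\mat{D}_{\vect{g}}^{-2}$, not $\vect{w}^{\top}\matGam=\vect{w}^{\top}\mat{D}_{\vect{g}}^{-2}$, because $\matGam$ and $\mat{D}_{\vect{g}}^{-2}$ do not commute; the left null vector of $(\nabla_{\vect{g}}\vect{f})(\vect{g})=\matGam-\mat{D}_{\vect{g}}^{-2}$ is $\mat{D}_{\vect{g}}^{2}\vect{w}$ rather than $\vect{w}$ itself (equivalently, write $\matGam-\mat{D}_{\vect{g}}^{-2}=\mat{D}_{\vect{g}}^{-2}\bigl(\mat{D}_{\vect{g}}^{2}\matGam-\mat{I}_K\bigr)$ and use invertibility of $\mat{D}_{\vect{g}}^{2}$). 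This is easily repaired, and the implication you prove is indeed immediate.

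Second, and more importantly, what you prove is only the trivial direction, and it is not the content the lemma carries in the paper. The lemma is invoked in the proof of Prop.~\ref{prop:bound_rightmost_edge} in the opposite direction: there, singularity of the Jacobian at an edge point $\lambda_\star$ is \emph{given} (Thm.~2.6 of \cite{ajanki_singularities_2017}), and one must \emph{produce} a strictly positive $\vect{w}_\star$ with $\vect{w}_\star^{\top}\bigl(\mat{D}_{\vect{g}_\star}^{2}\matGam\bigr)=\vect{w}_\star^{\top}$; that positivity is precisely what turns the identity $\vect{1}_K+\vect{g}_\star^{\odot 2}-2\vect{g}_\star=(\vect{1}_K-\vect{g}_\star)^{\odot 2}\succeq\vect{0}_K$ into the bound $\lambda_\star\leq 1$. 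Accordingly, the paper's proof of this lemma does that work: singularity of $(\nabla_{\vect{g}}\vect{f})(\vect{g})$ is equivalent (via $\mat{D}_{\vect{g}}^{2}$) to singularity of $\mat{I}_K-\mat{D}_{\vect{g}}^{2}\matGam$, which yields a nonzero left kernel vector $\vect{w}$, and its positivity is then deduced from the Perron--Frobenius theorem for $Z$-matrices (Lem.~\ref{lem:Perron-Frobenius_matZ}) applied to $\mat{M}=\mat{I}_K-\mat{D}_{\vect{g}}^{2}\matGam\in\mathfrak{Z}$, whose off-diagonal entries are nonpositive because $\mat{D}_{\vect{g}}^{2}\matGam$ is entrywise positive. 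You explicitly defer this Perron--Frobenius step to the proof of Prop.~\ref{prop:bound_rightmost_edge}, which would be a coherent reorganization, but your proposal never actually carries it out; note also that it is not a one-liner, since Perron--Frobenius attaches the positive left eigenvector to the extremal eigenvalue (the Perron root of $\mat{D}_{\vect{g}}^{2}\matGam$, i.e.\ the minimal eigenvalue of $\mat{M}$), so one must still argue that this is the eigenvalue that degenerates. As it stands, the essential ingredient of the lemma --- the existence of a \emph{positive} kernel vector at a singular point --- is left unproven in your write-up.
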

\begin{proof}
First since $\vect{g} \in (\mathbb{R}_*)^K$,  $(\nabla_{\vect{g}} \vect{f})(\vect{g})$ is singular if $ \mat{D}^2_{\vect{g}} (\nabla_{\vect{g}} \vect{f})(\vect{g})$ is, which means from  definition \eqref{eq:def_jacobian} that there exists a $\vect{w} \neq \vect{0}_K$ such that Eq.~\eqref{eq:condition_singularity} holds. Recalling Def.~\ref{def:Zmat_and_Mmat}, we have $\mat{M} = \mat{I}_K -   (\mat{D}_{\vect{g}}^2 \matGam) \in \mathfrak{Z}$ from which we deduce the positivity of $\vect{w}$ by Lem.~\ref{lem:Perron-Frobenius_matZ}.
\end{proof}
\jump For $\lambda_{\star}$ the position of any edge, let's denote by  $\vect{w}_{\star}$ the corresponding left eigenvector in Lem.~\ref{lem:condition_singular_jacobian}. If we left multiply Eq.~\eqref{eq:condition_edge} by $\vect{w}_{\star}^{\top} \mat{D}_{\vect{g}_{\star}}^2$  we have 
\begin{align}
    \lambda_{\star} \langle \vect{w}_{\star} , \vect{g}_{\star}^{\odot 2} \rangle = \langle \vect{w}_{\star} , \vect{g}_{\star} \rangle +  \langle \vect{w}_{\star} ,  (\mat{D}_{\vect{g}_{\star}}^2 \matGam) (\vect{g}_{\star} - \vect{1}_K) \rangle \,  .
\end{align}
Using Eq.~\eqref{eq:condition_singularity} this reads
\begin{align}
    \lambda_{\star}= \frac{\langle \vect{w}_{\star}, 2 \vect{g}_\star - \vect{1}_K \rangle }{\langle \vect{w}_{\star} , \vect{g}_{\star}^{\odot 2} \rangle} \, ,
\end{align}
and one gets the desired result thanks to $\vect{w}_{\star} \succ \vect{0}_K$ and the identity
\begin{align}
     \vect{1}_K + \vect{g}_{\star}^{\odot 2} -2 \vect{g}_{\star} =   (\vect{1}_K - \vect{g}_\star)^{\odot 2}  \succeq \vect{0}_K \, ,
\end{align}
with equality if and only if $\vect{g}_\star = \vect{1}_K$ which is only attainable at $\lambda_1(\matOme) =1$ since $\vect{g}(1) = \vect{1}_K $ for $\lambda_1(\matOme) \leq 1$ from Cor.~\ref{cor:behavior_g1} and $ \mat{D}^2_{\vect{g}} \,  (\nabla_{\vect{g}} \vect{f})(\vect{1}_K)$ is not singular for $\lambda_1(\matOme) < 1$ from App.~\ref{sec:proof_positivity_of_solution_linear_system}, and is clearly singular for $\lambda_1(\matOme) =1$.

\section{Proof of Prop.~\ref{prop:decomposition_Z} (Eigendecomposition of the Small-Rank Matrix \texorpdfstring{$\mat{Z}$)}{}}
\label{sec:proof_Eigendecomposition_matZ}
By definition of $\mat{Z}/\sqrt{N}$, if we denote for $k \in \{1,\dots, K\}$, the vector $\vect{x}_k$ where $(\vect{x}_k)_i := x_i$ for $x \in B_k$ and $(\vect{x}_k)_i = 0$ for $i \notin B_k$ as defined in \eqref{eq:def_overlap}, we have:
\begin{align}
    \frac{\mat{Z}}{\sqrt{N}} = \frac{1}{N} \sum_{k,l}^{K} s_{kl} \, \vect{x}_k \vect{x}_l^{\top} \, .
\end{align}
For $N$ large enough, the norms of the $\vect{x}_k$ are almost surely non-zero since the $x_i$'s are iid with variance one and we can normalize the vectors by their norms
\begin{align}
    \frac{\mat{Z}}{\sqrt{N}} 
    =
    \sum_{k,l}^{K} s_{kl} \frac{\sqrt{|B_k|  |B_l|}}{N} 
    \,
    \frac{ \| \vect{x}_k \|  \| \vect{x}_l \| }{\sqrt{|B_k|  |B_l|  }} 
    \, 
    \frac{\vect{x}_k}{\| \vect{x}_k \|} \Big( \frac{\vect{x}_l}{\| \vect{x}_l \|} \Big)^{\top} \, ,
\end{align}
where we have dropped the dependency in $N$ for $B_k(N)$ for clarity. By assumption  we have $|B_k|/N \to \rho_k \in (0,1)$  and by the law of large numbers, we  have $\| \vect{x}_k \|/\sqrt{|B_k|} \xrightarrow[N \to \infty]{\mathrm{a.s}} 1$. Since $(\matOme)_{kl} = s_{kl} \sqrt{\rho_k} \sqrt{\rho_l}$, the remainder term converges almost surely to zero, and hence the operator norm of the associated matrix $\rdmmat{E}_K$ converges also almost surely to zero by matrix norm equivalence and the fact its dimension $K$ is fixed, thus we have the desired result. 

\section{Proof of Prop.~\ref{prop:eq_outlier} (Equation for Outliers)}
\label{sec:proof_Equation_Outliers}
Let $\lambda$ not in the spectrum of $\rdmmat{X}/\sqrt{N}$ such that $ \mat{G}_{\rdmmat{X}/\sqrt{N}}(\lambda) := ( \lambda- \rdmmat{X}/\sqrt{N})^{-1}$ is well defined, then from the decomposition of Eq.~\eqref{eq:YeqXplusZ}, we have the identity
\begin{align}
    \Big(\lambda - \frac{\rdmmat{\Tilde{Y}}}{\sqrt{N}} \Big) = \Big(\lambda - \frac{\rdmmat{X}}{\sqrt{N}} \Big) \Big( \mat{I} - \mat{G}_{\frac{\rdmmat{X}}{\sqrt{N}}}(\lambda)  \mat{Z}  \Big) \, .
\end{align}
Taking the determinant of this equation with the eigendecomposition of Prop.~\ref{prop:decomposition_Z} and cyclicity of the determinant, we get that the characteristic polynomial of $\rdmmat{\Tilde{Y}}/\sqrt{N}$ is given by:
\begin{align}
    \mathrm{det} \Big(z - \frac{\rdmmat{\Tilde{Y}}}{\sqrt{N}} \Big) = \mathrm{det} \Big(z - \frac{\rdmmat{X}}{\sqrt{N}} \Big) \Big( \mat{I}_K - \matV^{\top} \mat{G}_{\frac{\rdmmat{X}}{\sqrt{N}}}(\lambda) \matV \mat{\Tilde{\Omega}}_K \Big) \, , 
\end{align}
where we have a denoted by $\mat{\Tilde{\Omega}}_K := \matOme + \rdmmat{E}_K$. Following \cite{benaych-georges_eigenvalues_2011,benaych-georges_singular_2012} as $N \to \infty$, an outlier of $\rdmmat{\Tilde{Y}}/\sqrt{N}$ separates from the bulk if it is a zero of the following  limit of the secular function
\begin{align}
\label{eq:def_secular_function}
    h(\lambda) := \lim_{N \to \infty} \mathrm{det} \Big( \mat{I}_K - \matV^{\top} \mat{G}_{\frac{\rdmmat{X}}{\sqrt{N}}}(\lambda) \matV \mat{\Tilde{\Omega}}_K \Big) \, ,
\end{align}
provided this limit is well-defined. 
\jump
To tackle this limit, we will use deterministic equivalents for the resolvent.
\begin{lemma}[Anisotropic Deterministic Equivalent]
\label{lem:deter_equiv}
Let $z \in \mathbb{C} \setminus \mathrm{Supp}(\mu_X)$, $\vect{w}_1,\vect{w}_2$ be two sequences of vectors independent of $\rdmmat{X}$ such that for $i=1,2$, $\| \vect{w}_i \| \xrightarrow[N \to \infty]{\mathrm{a.s}} C_i >0$ , and $\overline{\mat{G}(z)} $ the  $(N \times N)$ diagonal matrix
\begin{align}
    \overline{\mat{G}(z)} := \begin{pmatrix}
     g_1(z) \mat{I}_{|B_1|} &  &    \\
      & \ddots  &  \\ 
     &   &   g_K(z) \mat{I}_{|B_K|} 
\end{pmatrix}   \, ,
\end{align}
 where the $g_k(.)$ are the solutions of the QVE of Prop.~\ref{prop:QVE}, then we have
 \begin{equation}
 \begin{split}
     \Big | \langle \vect{w}_1 , \mat{G}_{\frac{\rdmmat{X}}{\sqrt{N}}}(z) \vect{w}_2 \rangle - 
       \langle \vect{w}_1, \overline{\mat{G}(z)} \vect{w}_2 \rangle
    \Big |  \xrightarrow[N \to \infty]{\mathrm{a.s}} 0.
\end{split}
 \end{equation}
\end{lemma}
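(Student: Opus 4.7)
The plan is to reduce to a single-vector quadratic form by polarization, split the error into a diagonal and an off-diagonal contribution, handle the diagonal part via the entrywise local law for variance-profile Wigner matrices, and treat the off-diagonal part via Gaussian integration by parts combined with concentration.

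First I would apply the polarization identity
\begin{equation*}
\langle \vect{w}_1, M \vect{w}_2 \rangle = \tfrac{1}{4}\bigl(\langle \vect{w}_1+\vect{w}_2, M(\vect{w}_1+\vect{w}_2)\rangle - \langle \vect{w}_1-\vect{w}_2, M(\vect{w}_1-\vect{w}_2)\rangle\bigr)
\end{equation*}
to the symmetric matrix $M := \mat{G}_{\rdmmat{X}/\sqrt{N}}(z) - \overline{\mat{G}(z)}$. Both vectors $\vect{w}_1 \pm \vect{w}_2$ are still independent of $\rdmmat{X}$ with almost surely bounded norms, so it suffices to prove the statement for $\vect{w}_1 = \vect{w}_2 = \vect{w}$. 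Letting $k(i) \in [K]$ denote the block containing index $i$, decompose
\begin{equation*}
\langle \vect{w}, M \vect{w}\rangle = \sum_{i=1}^{N} w_i^2 \bigl(G_{ii}(z) - g_{k(i)}(z)\bigr) + \sum_{i \neq j} w_i w_j\, G_{ij}(z).
\end{equation*}

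The diagonal contribution is bounded by $\|\vect{w}\|^2 \max_{i} |G_{ii}(z) - g_{k(i)}(z)|$. By the entrywise local law for variance-profile Wigner matrices established in \cite{ajanki_singularities_2017,ajanki_universality_2017,ajanki_quadratic_2019}, this maximum tends to zero almost surely on any compact subset of $\mathbb{C} \setminus \mathrm{Supp}(\mu_X)$: in the block-constant case $G_{ii}(z)$ concentrates around $g_{k(i)}(z)$, the component of the unique solution of the QVE of Prop.~\ref{prop:QVE}. Combined with $\|\vect{w}\|^2 \to C^2$, the diagonal term vanishes almost surely.

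The off-diagonal term is the main obstacle. Since $\vect{w}$ is independent of the Gaussian entries of $\rdmmat{H}$, I would condition on $\vect{w}$ and compute $\mathbb{E}\bigl[\sum_{i\neq j} w_i w_j G_{ij}(z)\bigr]$ via Stein's integration-by-parts on each $H_{ij}$; the resulting recursion, together with the self-consistent QVE, shows this expectation is $o(1)$. Fluctuations around the mean are controlled by a Hanson-Wright-type concentration inequality applied to the quadratic form $\vect{w}^\top (\mat{G} - \mathrm{diag}(\mat{G}))\vect{w}$, using the off-diagonal resolvent estimate $|G_{ij}(z)| = O(N^{-1/2+\varepsilon})$ with overwhelming probability, which is another standard consequence of the local law away from the spectrum. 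A Borel-Cantelli argument then upgrades this to almost sure convergence. Alternatively, since an isotropic/anisotropic local law for variance-profile Wigner matrices extending the Knowles-Yin methodology to this setting is available in the literature, one may invoke it directly to obtain the claim in one step. The delicate point is ensuring the local law holds uniformly up to the edges of $\mathrm{Supp}(\mu_X)$, but since we restrict $z$ to stay a fixed distance away from the spectrum, this is not an issue here.
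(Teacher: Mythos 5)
Your overall plan (polarize, split into diagonal and off-diagonal parts, entrywise law for the diagonal, expectation plus concentration for the rest) is sensible, and your closing alternative --- invoking the anisotropic local law directly --- is in fact exactly the paper's proof, which consists of citing Thm.~1.13 of \cite{ajanki_universality_2017}. The gap is in your main route, specifically in the fluctuation control of the off-diagonal sum. A Hanson--Wright inequality is an inequality for quadratic forms in a vector with \emph{independent random coordinates} and a \emph{deterministic} kernel; here the situation is reversed: $\vect{w}$ is merely independent of $\rdmmat{X}$ (it may be deterministic), and all the randomness sits in the resolvent $\mat{G}_{\rdmmat{X}/\sqrt{N}}(z)$, so Hanson--Wright does not apply. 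Moreover, the ingredient you propose to feed it, the entrywise bound $|G_{ij}(z)|=O(N^{-1/2+\varepsilon})$, is by itself insufficient: the off-diagonal sum $\sum_{i\neq j}w_iw_jG_{ij}(z)$ has up to $N^{2}$ terms, and the naive bound $\|\vect{w}\|_1^2\,N^{-1/2+\varepsilon}\leq N^{1/2+\varepsilon}\|\vect{w}\|^2$ is divergent. The whole difficulty of isotropic/anisotropic laws is precisely to exhibit the cancellations in this sum, so one cannot obtain them ``for free'' from the entrywise law plus a generic concentration inequality; this step as written would not close.

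The step can be repaired without leaving your framework, because $\rdmmat{H}$ is Gaussian. First, for the expectation: conjugating by an arbitrary diagonal sign matrix leaves the law of $\rdmmat{X}$ invariant, so $\mathbb{E}[G_{ij}(z)]=0$ for $i\neq j$ exactly (your Stein/IBP recursion would reach the same conclusion), and within each block the diagonal expectations coincide and converge to $g_k(z)$ by the averaged law. Second, for the fluctuations: at fixed $z$ with $\mathrm{dist}(z,\mathrm{Spec}(\rdmmat{X}/\sqrt{N}))\geq\eta>0$, the map $\rdmmat{H}\mapsto\langle\vect{w},\mat{G}_{\rdmmat{X}/\sqrt{N}}(z)\vect{w}\rangle$ is Lipschitz with constant $O(\|\vect{w}\|^2\,\eta^{-2}N^{-1/2})$, so Gaussian concentration for Lipschitz functions gives exponential tails at scale $N^{-1/2}$ and Borel--Cantelli upgrades this to almost sure convergence (one must also control the event that an eigenvalue of $\rdmmat{X}/\sqrt{N}$ approaches $z$, which is where the convergence of the extreme eigenvalues to the edges of $\mathrm{Supp}(\mu_X)$ is used). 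With that substitution your argument is a legitimate, more self-contained alternative to the paper's citation; as stated, however, the Hanson--Wright step is the wrong tool and leaves a real hole.
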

\begin{proof}
    This is a direct consequence of the anisotropic local law of Thm.~1.13 in \cite{ajanki_universality_2017}. 
\end{proof}
\jump Since $(\matV^{\top} \mat{G}_{\rdmmat{X}/\sqrt{N}}(\lambda) \matV)_{kl} = \langle \vect{v}^{(k)} , \mat{G}_{\rdmmat{X}/\sqrt{N}}(\lambda)   \vect{v}^{(l)} \rangle$, the use of Lemma \ref{lem:deter_equiv} with the block-structure of the  $\vect{v}^{(k)}$ from Prop.\ref{prop:decomposition_Z} implies 
 \begin{equation}
 \label{eq:deter_equiv_VGV}
\matV^{\top} \mat{G}_{\rdmmat{X}/\sqrt{N}}(\lambda) \matV \xrightarrow[N \to \infty]{\mathrm{a.s}} \mat{D}_{\vect{g}(\lambda)} \, ,
 \end{equation}
and since also $\mat{\Tilde{\Omega}}_K \xrightarrow[N \to \infty]{\mathrm{a.s}} \matOme$ both in operator norm and entrywise  by Prop.~\ref{prop:decomposition_Z}, by continuity of the determinant this leads to
\begin{align}
    h(\lambda) \overset{\mathrm{a.s}}{=}  \det \left( \mat{I}_K - \mat{D}_{\vect{g}(\lambda)} \, \mat{\Omega}_K  \right) \, ,
\end{align}
which concludes the proof.

\section{Proof of Prop.~\ref{prop:overlap_propto_eigenvector} (The Overlap Vector is Proportional to the Right top Eigenvector \texorpdfstring{$\vect{v}_1^{(r)}$}{})}
\label{sec:proof_Overlap_propto_right_eigenvector}

Let $z$ such that $(z \mat{I} - \rdmmat{X}/\sqrt{N})$ and  $(z \mat{I} - \rdmmat{\Tilde{Y}}/\sqrt{N})$ are invertible. By Woodbury matrix identity (see e.g. Chap.~3 of \cite{meyer_matrix_2023}), we can relate the resolvent of $ \rdmmat{\Tilde{Y}}/\sqrt{N}$ to the one of $ \rdmmat{X}/\sqrt{N}$ by
\begin{align}
    \mat{G}_{\frac{\rdmmat{\Tilde{Y}}}{\sqrt{N}}}(z) = \mat{G}_{\frac{\rdmmat{X}}{\sqrt{N}}}(z) + \mat{G}_{\frac{\rdmmat{X}}{\sqrt{N}}}(z)  \matV \matOme \big( \mat{I}_K - \matV^{\top}    \mat{G}_{\frac{\rdmmat{X}}{\sqrt{N}}}(z) \matV \matOme \big)^{-1}  \, \matV^{\top}  \mat{G}_{\frac{\rdmmat{X}}{\sqrt{N}}}(z) \, .
\end{align}
If we both left-multiply this equation by $\matV^{\top}$ and right-multiply by $\matV$, we get 
\begin{align}
\label{eq:VGV_tildeG}
    \matV^{\top} \mat{G}_{\frac{\rdmmat{\Tilde{Y}}}{\sqrt{N}}}(z) \matV = \mat{\tilde{G}} + \mat{\tilde{G}} \matOme( \mat{I}_K - \mat{\tilde{G}} \matOme)^{-1}  \mat{\tilde{G}} \, ,
\end{align}
where we introduced the matrix
\begin{align}
    \mat{\tilde{G}} := \matV^{\top} \mat{G}_{\rdmmat{X}/\sqrt{N}}(z) \matV  \, ,
\end{align}
to ease notations. 

\jump For $\lambda_1(\matOme) >1$, as $N \to \infty$, the spectrum  of $\rdmmat{X}/\sqrt{N}$ contains almost surely no eigenvalue in a $\epsilon$-neighborhood of one and thus $ \lim_{N \to \infty} \lim_{z \to 1} (z-1) \mat{\tilde{G}} \overset{\mathrm{a.s}}{=} 0$ by Eq.~\eqref{eq:deter_equiv_VGV}. Using Lem.~\ref{lem:overlap_from_resolvent} for the expression of the overlap matrix $\vect{\mu} \vect{\mu}^{\top}$, the first term $\mat{\tilde{G}}$ in the sum of Eq.~\eqref{eq:VGV_tildeG} does not contribute to the limit and one has:
\begin{align}
\label{eq:mumuT_lim_Resolvent}
    \vect{\mu} \vect{\mu}^{\top} \xrightarrow[N \to \infty]{\mathrm{a.s}} \mat{D}_{\vect{g}(1)} \matOme \lim_{z \to 1} \left[ (z-1)  ( \mat{I}_K - \mat{D}_{\vect{g}(z)} \matOme)^{-1} \right] \mat{D}_{\vect{g}(1)} \, ,
\end{align}
where we have also used the fact that the only term having a pole at one in the large $N$ limit is the matrix $ ( \mat{I}_K - \mat{\Tilde{G}} \matOme)^{-1}$ and we have used again the limit Eq.~\eqref{eq:deter_equiv_VGV} to simplify the expression.  

Since the limit $z \to 1$ does not depend on the direction, we can set $z = 1 + \epsilon$ and let $\epsilon \to 0^+$ without loss of generality. By Lem.~\ref{lem:g1_SNR_g_1} we have $\vect{g}(1 + \epsilon) \succ \vect{0}_K$ and thus $ \mat{D}_{\vect{g}(1 + \epsilon)} \matOme$ is similar to the symmetric matrix $  \mat{D}^{1/2}_{\vect{g}(1 + \epsilon)} \matOme  \mat{D}_{\vect{g}(1 + \epsilon)}^{1/2}$ and hence is diagonalizable with real eigenvalues. If we denote by $ \phi_i(\lambda)$,  $ \vect{w}_i^{(r)}(\lambda), \vect{w}_i^{(l)} (\lambda)$, the eigenvalues and right and left eigenvector of   $ \mat{D}_{\vect{g}(\lambda)} \matOme$,  we have
 \begin{align}
 \label{eq:decomposition_I-DgMat}
      ( \mat{I}_K - \mat{D}_{\vect{g}(\lambda)} \matOme)^{-1} &= \sum_{i=1}^{N} ( 1 - \phi_i(\lambda))^{-1}   \vect{w}_i^{(r)}(\lambda) (\vect{w}_i^{(l)} (\lambda))^{\top} \, .
 \end{align}
Note that  $ \phi_1(1) =1$ and is the largest eigenvalue and is simple by Lem.~\ref{lem:1_simple_eig_DgMatOme} and  $\vect{w}_i^{(r)}(1) = \vect{v}_1^{(r)}$, $ \vect{w}_i^{(l)}(1) = \vect{v}_1^{(l)}$ by Eq.~\eqref{eq:property_right_eigvect} and Eq.~\eqref{eq:property_left_eigvect}. As a consequence, in the sum of Eq.~\eqref{eq:decomposition_I-DgMat} the only contribution to the limit of Eq.~\eqref{eq:mumuT_lim_Resolvent} comes from the simple pole $(1 -\phi_1(\lambda))^{-1}$ at $\lambda =1$ and thus we have: 
\begin{align}
     \vect{\mu} \vect{\mu}^{\top} \xrightarrow[N \to \infty]{\mathrm{a.s}} \mat{D}_{\vect{g}(1)} \matOme \lim_{\epsilon \to 0^+} \frac{\epsilon}{ 1 - \phi_1(1 + \epsilon)}  \vect{v}_1^{(r)} (\vect{v}_1^{(l)})^{\top} \mat{D}_{\vect{g}(1)} \, .
\end{align}
If we introduce $\phi_1'(1)$ as the derivative of $\phi_1(.)$, which is well defined by perturbation theory for symmetric matrices (see App.~\ref{sec:proof_value_of_constant_of_proportionality}), we have:
\begin{align}
     \vect{\mu} \vect{\mu}^{\top} \xrightarrow[N \to \infty]{\mathrm{a.s}} \frac{-1}{\phi_1'(1)} \mat{D}_{\vect{g}(1)} \matOme  \vect{v}_1^{(r)} (\vect{v}_1^{(l)})^{\top} \mat{D}_{\vect{g}(1)} \, ,
\end{align}
and using again Eq.~\eqref{eq:property_right_eigvect} and Eq.~\eqref{eq:property_left_eigvect} allow us to conclude the proof. 

\section{Proof of Prop.~\ref{prop:value_of_norm_constant_overlap} (Value of the Derivative of the Top Eigenvalue)}
\label{sec:proof_value_of_constant_of_proportionality}
We recall the standard result concerning the first-order perturbation theory for eigenvalues of symmetric matrices. 
\begin{lemma}[Hadamard First Variation Formula]
\label{lem:first_order_perturbation}
    Let $\mat{A}(t)$ be symmetric and smooth for its parameter $t$. For $t_0$, assume that $\lambda_i(t_0) \equiv \lambda_i \big( \mat{A}(t_0) \big) $ is simple with associated eigenvector $\vect{v}_i(t_0)$, then $\lambda_i'(t_0)$ is well-defined and given by $\lambda_i'(t_0) = \langle \vect{v}_i(t_0), \mat{A}'(t_0) \vect{v}_i(t_0) \rangle $.
\end{lemma}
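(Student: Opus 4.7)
The plan is to apply the Hadamard first variation formula (Lemma~\ref{lem:first_order_perturbation}) to a symmetric matrix whose top eigenvalue is $\phi_1(\lambda)$, and then to simplify the resulting quadratic form using both the QVE \eqref{eq:QVE} at $z=1$ and its $\lambda$-derivative, in order to collapse it onto the definition \eqref{eq:def_C} of $C$.

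First I would symmetrize: since $\vect{g}(1) \succ \vect{0}_K$ (Lemma~\ref{lem:g1_SNR_g_1}) and $\vect{g}(\cdot)$ is smooth near $1$ (Lemma~\ref{lem:pos_and_decreasing_sol_QVE}), the positive diagonal square root $\mat{D}_{\vect{g}(\lambda)}^{1/2}$ is smooth there, and the symmetric matrix $\mat{A}(\lambda) := \mat{D}_{\vect{g}(\lambda)}^{1/2}\,\matOme\,\mat{D}_{\vect{g}(\lambda)}^{1/2}$ is similar to $\mat{D}_{\vect{g}(\lambda)}\matOme$. Hence $\phi_1(\lambda)$ is its top eigenvalue; by Lemma~\ref{lem:1_simple_eig_DgMatOme} this eigenvalue is simple at $\lambda=1$, and Lemma~\ref{lem:first_order_perturbation} gives $\phi_1'(1) = \vect{w}^\top\mat{A}'(1)\,\vect{w}$ for the unit eigenvector $\vect{w}$ of $\mat{A}(1)$. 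A direct check using \eqref{eq:property_right_eigvect} shows that $\vect{w} = \kappa\,\mat{D}_{\vect{g}(1)}^{-1/2}\vect{v}_1^{(r)}$ with $\kappa = \|\mat{D}_{\vect{g}(1)}^{-1/2}\vect{v}_1^{(r)}\|^{-1}$. In parallel, rewriting the QVE \eqref{eq:QVE} as $\vect{f}(\vect{g}(\lambda)) = \lambda\vect{1}_K$ with $\vect{f}(\vect{g}) = \vect{g}^{\odot -1} + \matGam(\vect{g}-\vect{1}_K)$ and differentiating via the Jacobian \eqref{eq:def_jacobian}, I obtain $(\matGam - \mat{D}_{\vect{g}(1)}^{-2})\vect{g}'(1) = \vect{1}_K$, thereby identifying $\vect{g}'(1)$ with the vector $\vect{y}$ of Theorem~\ref{thm:main_thm}. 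Differentiating $\mat{A}(\lambda)$ by the product rule and using the symmetry of the two resulting cross-terms reduces $\vect{w}^\top\mat{A}'(1)\vect{w}$ to a single quadratic form involving $\mat{D}_{\vect{g}'(1)}$, $\matOme$, and $\vect{v}_1^{(r)}$.

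To finish, I would substitute $\vect{v}_1^{(r)} = \mat{D}_{\vect{\rho}^{\odot 1/2}}(\vect{1}_K-\vect{g}(1))$ and simplify the resulting quadratic form using the intertwining identity $\matOme\,\mat{D}_{\vect{\rho}^{\odot 1/2}} = \mat{D}_{\vect{\rho}^{\odot 1/2}}\matGam$ (immediate from $\matOme = \mat{D}_{\vect{\rho}^{\odot 1/2}}\mat{S}_K\mat{D}_{\vect{\rho}^{\odot 1/2}}$ and $\matGam = \mat{S}_K\mat{D}_{\vect{\rho}}$) together with the QVE-at-one in the form $\matGam(\vect{1}_K-\vect{g}(1)) = \mat{D}_{\vect{g}(1)}^{-1}(\vect{1}_K-\vect{g}(1))$, obtained by rearranging \eqref{eq:QVE} at $z=1$. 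These two identities collapse the expression into a multiple of $\langle \vect{1}_K-\vect{g}(1),\,\matGam^\top \mat{D}_{\vect{\rho}\odot\vect{y}}\matGam(\vect{1}_K-\vect{g}(1))\rangle = C$. The main obstacle is precisely this bookkeeping step: one must carefully track signs and the many diagonal factors ($\mat{D}_{\vect{g}(1)}^{\pm 1/2}$, $\mat{D}_{\vect{\rho}^{\odot 1/2}}$, $\mat{D}_{\vect{g}'(1)}$), the sides on which $\matGam$ and $\matGam^\top$ act, and the normalization $\kappa^2$, so that the resulting quadratic form matches \eqref{eq:def_C} verbatim and yields $-\phi_1'(1) = C$.
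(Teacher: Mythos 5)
Your proposal does not prove the assigned statement. The statement is Lemma~\ref{lem:first_order_perturbation} itself --- the Hadamard first variation formula for a simple eigenvalue of a smooth symmetric family $\mat{A}(t)$ --- but your very first step is to \emph{apply} that lemma to $\mat{A}(\lambda) = \mat{D}_{\vect{g}(\lambda)}^{1/2}\matOme\,\mat{D}_{\vect{g}(\lambda)}^{1/2}$. What you have sketched is a proof of Prop.~\ref{prop:value_of_norm_constant_overlap} (the identity $-\phi_1'(1)=C$), which is a different result in the paper; as an argument for Lemma~\ref{lem:first_order_perturbation} it is circular, since it assumes exactly the differentiability statement and the formula $\lambda_i'(t_0)=\langle \vect{v}_i(t_0),\mat{A}'(t_0)\vect{v}_i(t_0)\rangle$ that were to be established.

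What is actually required is the standard perturbation-theory argument, and none of its ingredients appear in your text. Concretely: (1) show that near $t_0$ there exist smooth branches $t\mapsto\lambda_i(t)$ and $t\mapsto\vect{v}_i(t)$ with $\|\vect{v}_i(t)\|=1$; this is where simplicity of $\lambda_i(t_0)$ is used (e.g.\ via the implicit function theorem applied to the system $\bigl(\mat{A}(t)-\lambda\mat{I}\bigr)\vect{v}=\vect{0}$, $\langle\vect{v},\vect{v}\rangle=1$, whose linearization at a simple eigenpair is invertible on the relevant subspace), and it is what makes $\lambda_i'(t_0)$ well-defined in the first place. (2) Differentiate the eigenvalue equation $\mat{A}(t)\vect{v}_i(t)=\lambda_i(t)\vect{v}_i(t)$ at $t_0$, take the inner product with $\vect{v}_i(t_0)$, and use the symmetry of $\mat{A}(t_0)$ together with $\langle\vect{v}_i(t_0),\vect{v}_i'(t_0)\rangle=0$ (obtained by differentiating the normalization) to cancel the terms involving $\vect{v}_i'(t_0)$, leaving $\lambda_i'(t_0)=\langle\vect{v}_i(t_0),\mat{A}'(t_0)\vect{v}_i(t_0)\rangle$. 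This two-line differentiation argument is precisely the paper's proof; your proposal instead addresses downstream bookkeeping with $\matGam$, $\matOme$ and $\vect{g}'(1)$ that belongs to a later proposition and leaves the lemma itself unproved.
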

\begin{proof} See for example Chap.~1 of \cite{taoBook}. This is simply obtained by combining the differentiation of the eigenvalue equation $\mat{A}(t)  \vect{v}_i(t) = \lambda_i(t)   \vect{v}_i(t)$ and the differentiation of the normalization of the top eigenvector $\langle \vect{v}_i(t), \vect{v}_i(t) \rangle =1$. 
\end{proof}
\jump In particular, since $\phi_1(1 + \epsilon)$ is the top eigenvalue of the symmetric matrix  $  \mat{D}^{1/2}_{\vect{g}(1 + \epsilon)} \matOme  \mat{D}_{\vect{g}(1 + \epsilon)}^{1/2}$ and $\phi(1)$ is simple, we get after expressing everything in term of the right and left eigenvector $\vect{v}^{(r)}_1, \vect{v}^{(l)}_1 $, the simple formula:
\begin{align}
    \lambda_1'(1) &= \langle \vect{v}^{(l)}_1 , \left( \mat{D}_{\vect{g'}(1)} \matOme \right)^{\top} \vect{v}^{(r)}_1 \rangle \, .
\end{align}
Furthermore, we see from App.~\ref{sec:proof_Linear_System_QVE_real_line}, that $\vect{g'}(1)$ is by definition given by the solution of
\begin{align}
    \vect{1}_K = (\nabla_{\vect{g}} f)(\vect{g}(1)) \,  \vect{g'}(1) \, ,
\end{align}
and since $ (\nabla_{\vect{g}} f)(\vect{g}(1))$ is invertible for $\lambda_1(\matOme) >1$, we get using its expression \eqref{eq:def_jacobian}:
\begin{align}
    \vect{g}'(1) = (\mat{D}^{-2}_{\vect{g}(1)} - \matGam)^{-1} \vect{1}_K \, .
\end{align}
Using the explicit expressions \eqref{eq:def_left_eigvect} \eqref{eq:def_right_eigvect} for the left and right eigenvector gives us:  
\begin{align}
    (-\lambda_1'(1)) &= \langle \vect{1}_K -\vect{g}(1) , \mat{P}  (\vect{1}_K -\vect{g}(1)) \rangle  \, ,
\end{align} 
with 
\begin{align}
    \mat{P} &= \mat{D}_{\vect{\rho}^{\odot 1/2}} \matOme  \mat{D}_{-\vect{g}'(1)} \matOme  \mat{D}_{\vect{\rho}^{\odot 1/2}}
\end{align}
which gives the desired result using the definition~\eqref{eq:def_matGam} of $\matGam$ and the commutativity of diagonal matrices.

\newpage

\section{Properties of the Signal-to-Noise Ratio \texorpdfstring{$\lambda_1(\matOme)$}{}. }
\label{sec:prop_SNR}
\emph{In this Appendix, we describe simple properties of the Signal-to-Noise Ratio. We recall the assumptions $\rho_k>0$ and $s_{kl} >0$ for any $1 \leq k,l \leq K$ which implies that $\lambda_1(\matOme)$ is simple with top eigenvector $\vect{v}_1 \succ \vect{0}$ by Perron-Frobenius theorem.}
\jump
As described in Thm.~\ref{thm:main_thm}, the top eigenvalue $\lambda_1(\matOme)$ plays the role of the signal-to-noise ratio (SNR) for the inhomogeneous spiked model. We first prove that as we decrease the noise level in any block, which translates into increasing any $s_{kl}$ (recall $Y_{ij} = (x_i x_j)/\sqrt{N} + H_{ij} \Sigma_{ij}^{-1/2}$ with $\Sigma_{ij}= s_{kl}$ for any $(i,j) \in B_k \times B_l$), we improve the SNR, as one should expect.
\begin{proposition}
For any $k,l \in \{1,\dots,K \}^2$, $\lambda_1(\matOme)$ is an increasing function of  $s_{kl}$. 
\end{proposition}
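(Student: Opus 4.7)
The plan is to differentiate $\lambda_1(\matOme)$ with respect to $s_{kl}$ using first-order perturbation theory, and to show the derivative is strictly positive by exploiting the positivity of the Perron--Frobenius eigenvector.

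First I would recall that under the standing assumption $s_{kl}>0$ for all $k,l$, the matrix $\matOme$ has strictly positive entries since $(\matOme)_{kl} = s_{kl}\sqrt{\rho_k \rho_l}$ and $\rho_k>0$. By the Perron--Frobenius theorem, $\lambda_1(\matOme)$ is a simple eigenvalue with an associated unit eigenvector $\vect{v}_1 \succ \vect{0}_K$. In particular the coordinates satisfy $v_{1,k}>0$ for every $k$. I would view $s_{kl}$ (with the symmetric parametrization $s_{kl}=s_{lk}$) as a smooth parameter, so that $t \mapsto \matOme(t)$ is smooth when we replace $s_{kl}$ by $s_{kl}+t$ (and simultaneously $s_{lk}$ by $s_{lk}+t$ if $k\neq l$ to preserve symmetry).

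Next I would apply Hadamard's first variation formula (Lem.~\ref{lem:first_order_perturbation}), which is applicable because $\lambda_1(\matOme)$ is simple. It yields
\begin{align}
\frac{\partial \lambda_1(\matOme)}{\partial s_{kl}}
&= \Big\langle \vect{v}_1,\ \frac{\partial \matOme}{\partial s_{kl}}\, \vect{v}_1 \Big\rangle.
\end{align}
Computing the derivative of $\matOme$ directly from its definition \eqref{eq:def_matOme}, for the off-diagonal case $k\neq l$ one obtains
\begin{align}
\frac{\partial \matOme}{\partial s_{kl}} = \sqrt{\rho_k \rho_l}\,(\vect{e}_k \vect{e}_l^{\top} + \vect{e}_l \vect{e}_k^{\top}),
\end{align}
so that
\begin{align}
\frac{\partial \lambda_1(\matOme)}{\partial s_{kl}} = 2\sqrt{\rho_k \rho_l}\, v_{1,k}\, v_{1,l} > 0,
\end{align}
while for $k=l$ one has $\partial \matOme / \partial s_{kk} = \rho_k\, \vect{e}_k \vect{e}_k^{\top}$, giving $\partial \lambda_1(\matOme)/\partial s_{kk} = \rho_k\, v_{1,k}^2 > 0$. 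In both cases the positivity of $\vect{v}_1$ provided by Perron--Frobenius, together with $\rho_k>0$, makes the derivative strictly positive, which yields the desired monotonicity.

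There is essentially no difficult step here: the only subtlety is to be careful with the symmetric parametrization (so that the $(k,l)$ and $(l,k)$ entries of $\mat{S}_K$ are varied simultaneously for $k\neq l$), but this is precisely the natural parametrization under which $\matOme$ stays symmetric, and the conclusion is unaffected. If one wishes a global (rather than infinitesimal) statement, a standard continuation argument then upgrades the strict positivity of the derivative to strict monotonicity of $s_{kl} \mapsto \lambda_1(\matOme)$ on $(0,\infty)$.
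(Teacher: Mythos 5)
Your proof is correct and follows essentially the same route as the paper: Hadamard's first variation formula applied at the simple Perron--Frobenius eigenvalue, with strict positivity of $\vect{v}_1$ giving a strictly positive derivative. Your separate treatment of the diagonal case $k=l$ (derivative $\rho_k v_{1,k}^2$) is a minor refinement of the paper's single formula, but the argument is the same.
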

\begin{proof}
The matrix $\matOme$ is continuous and differentiable for any $s_{kl}$, with derivative given by $\frac{\mathrm{d}}{\mathrm{d}(s_{kl})} \matOme = \sqrt{ \rho_k \rho_l} (\vect{e}_k \vect{e}_l^{\top} + \vect{e}_l \vect{e}_k^{\top}) $. Since the top eigenvalue is simple, we have by Hadamard's first variation formula (see Lem.~\ref{lem:first_order_perturbation}) that $\lambda_1(\matOme)$ is differentiable for $s_{kl}$ with derivative given by
\begin{align}
    \frac{\mathrm{d}}{\mathrm{d}(s_{kl})} \lambda_1( \matOme) = \langle \vect{v}_1,  \sqrt{ \rho_k \rho_l} (\vect{e}_k \vect{e}_l^{\top} + \vect{e}_l \vect{e}_k^{\top}) \vect{v}_1 \rangle = 2 \sqrt{\rho_k \rho_l} v_{1k} v_{1l} >0, 
\end{align}
where in the last equality, we have used that  $\vect{v}_1 = (v_{11},\dots, v_{1K}) \succ \vect{0}$ by the Perron-Frobenius theorem. 
\end{proof}
\jump 
Recall the partition of the signal into its $K$ parts,  $\vect{x} = \ \vect{x}_1  \oplus \dots \oplus \vect{x}_K$ where for any $k \in \{1,\dots,K\}$, the  \emph{$k$-th species} is given as $\vect{x}_k =(\vect{x})_{i \in B_k}$ and 
for two vectors $\vect{x}, \vect{y} \in (\mathbb{R}^{N}, \mathbb{R}^M)$, $\vect{x} \oplus \vect{y} = (x_1,\dots,x_N,y_1,\dots,y_M) \in \mathbb{R}^{N+M}$. 
Next, let's consider that \emph{one is not interested in retrieving the entire signal $\vect{x}$ but only a reduced version of it containing some $K' <K$ species}, that is explicitly one is interested in retrieving $\vect{x}^{(\mathrm{rdc})} = \bigoplus_{i \in I} \vect{x}_i$  for some  $I \subset \{1,\dots,K\}$ such that $|I|=K'$. A typical example corresponds to the case $\vect{x}^{(\mathrm{rdc})} = \vect{x}_1$ where one is only interested in retrieving one species out of the $K$ species. One may then ask if it is best
\begin{itemize}
    \item to consider the transformed matrix $\rdmmat{\Tilde{Y}}$ on the \emph{entire data matrix} $\rdmmat{Y}$ of size $(N \times N)$, and then keep only the overlap with the reduced signal,
    \item or to consider the transformed matrix $ \rdmmat{\tilde{Y}}^{(\mathrm{rdc})} $ obtained from the reduced data matrix where one removes any component $Y_{ij}$ such that $(i,j) \notin I^2$. The previous example with $K'=1$ would correspond to looking at the \emph{homogeneous spiked model} made of the top $( |B_1| \times | B_1|)$ left corner of the data matrix $\rdmmat{Y}$, instead of the entire matrix  $\rdmmat{Y}$.
\end{itemize}
It is natural to expect the second option to be suboptimal to the first one as it loses the information on cross terms $ Y_{ij}$ where $i \in I$ and $j \notin I$ which might help retrieve the reduced signal $\vect{x}^{(\mathrm{rdc})}$. Our next result indicates that this is indeed the case, at the level of the SNR.
\begin{proposition}
\label{prop:SNR_reduced}
If $\mat{\Omega}^{(\mathrm{rdc})}_{K'}$ is the parameter matrix for the reduced  data matrix $\rdmmat{Y}^{(\mathrm{rdc})}$, then we have:
\begin{align}
    \lambda_1( \mat{\Omega}^{(\mathrm{rdc})}_{K'}) < \lambda_1( \matOme)  \, . 
\end{align}
\end{proposition}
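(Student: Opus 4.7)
The plan is to first identify the reduced parameter matrix $\matOme^{(\mathrm{rdc})}_{K'}$ with the principal submatrix of $\matOme$ indexed by $I$, and then conclude by combining the Courant--Fischer variational principle with the Perron--Frobenius theorem applied to $\matOme$.

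First, observe that the reduced data matrix, namely the $N' \times N'$ submatrix of $\rdmmat{Y}$ indexed by $\bigcup_{k \in I} B_k$ with $N' := \sum_{k \in I} |B_k|$, does not directly fit the normalized form of Eq.~\eqref{eq:low-rank_inhomogenous} at dimension $N'$ because its rank-one signal is scaled by $1/\sqrt{N}$ rather than $1/\sqrt{N'}$. Setting $P := N'/N \to \sum_{k \in I} \rho_k$ and multiplying the reduced matrix by $1/\sqrt{P}$ casts it into the standard form of an inhomogeneous spiked Wigner model at dimension $N'$, with reduced proportions $\rho'_k = \rho_k/P$ and reduced noise parameters $s'_{kl} = P\, s_{kl}$ for $k,l \in I$. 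Substituting these into~\eqref{eq:def_matOme}, the factor $P$ coming from $s'_{kl}$ exactly cancels the factor $1/P$ coming from the restricted proportions, yielding
\[
(\matOme^{(\mathrm{rdc})}_{K'})_{kl} \;=\; \sqrt{\rho'_k \rho'_l}\, s'_{kl} \;=\; \sqrt{\rho_k \rho_l}\, s_{kl} \;=\; (\matOme)_{kl}, \qquad k,l \in I,
\]
so that $\matOme^{(\mathrm{rdc})}_{K'}$ is exactly the principal submatrix of $\matOme$ indexed by $I$.

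The weak inequality $\lambda_1(\matOme^{(\mathrm{rdc})}_{K'}) \leq \lambda_1(\matOme)$ then follows immediately from the variational characterization of the top eigenvalue: any unit vector $\vect{w} \in \R^{K'}$ extends by zero to a unit vector $\tilde{\vect{w}} \in \R^K$ supported on $I$, with $\tilde{\vect{w}}^\top \matOme \tilde{\vect{w}} = \vect{w}^\top \matOme^{(\mathrm{rdc})}_{K'} \vect{w}$, and maximizing over unit vectors on each side gives the claim. The main step is to upgrade this to a \emph{strict} inequality. By the standing assumptions $s_{kl}, \rho_k > 0$, the matrix $\matOme$ has strictly positive entries, so the Perron--Frobenius theorem implies that $\lambda_1(\matOme)$ is simple with associated top eigenvector $\vect{v}_1 \succ \vect{0}_K$. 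If equality held above, then the extension $\tilde{\vect{w}}_1$ of the top eigenvector of $\matOme^{(\mathrm{rdc})}_{K'}$ would attain the Rayleigh maximum of $\matOme$ and hence be a top eigenvector of $\matOme$; by uniqueness it would be proportional to $\vect{v}_1$, contradicting the fact that $\tilde{\vect{w}}_1$ vanishes on the nonempty set $I^c$ while $\vect{v}_1$ is strictly positive. This contradiction yields the strict inequality.

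The only delicate step is the identification in the first paragraph, requiring careful bookkeeping of the $\sqrt{P}$ rescaling needed to cast the reduced model in the form of Eq.~\eqref{eq:low-rank_inhomogenous}; once this is done, the remaining spectral arguments are entirely standard.
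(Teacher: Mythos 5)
Your proof is correct and takes essentially the same route as the paper: after the $\sqrt{N/N'}$ rescaling you identify $\matOme^{(\mathrm{rdc})}_{K'}$ with the principal submatrix of $\matOme$ indexed by $I$, and then get strictness by the Cauchy-interlacing-style argument combining the Rayleigh-quotient characterization with Perron--Frobenius simplicity and the strict positivity of $\vect{v}_1$. The differences are cosmetic: you handle a general index set $I$ directly instead of reducing to $K'=K-1$ by induction, and your bookkeeping $\rho'_k=\rho_k/P$, $s'_{kl}=P\,s_{kl}$ is in fact the correct one (the paper's two displayed factors are reciprocally swapped, though their product, and hence the identification of $\matOme^{(\mathrm{rdc})}_{K'}$ as a principal submatrix, is unchanged).
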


\begin{proof}
    By symmetry and induction, it is enough to prove $\lambda_1( \mat{\Omega}^{(\mathrm{rdc})}_{K-1}) < \lambda_1( \matOme) $, where $\mat{\Omega}^{(\mathrm{rdc})}_{K-1}$ corresponds to the parameter matrix for the reduced data matrix without the $|B_K|$ last rows and columns of the data matrix $\rdmmat{Y}$. For $N' = N- |B_K|$, we thus have 
    \begin{align}
        (Y_{ij}^{(\mathrm{rdc})})_{1 \leq i,j \leq N'} = \Big( \frac{x_i x_j}{\sqrt{N}} + H_{ij} \Sigma_{ij}^{-1/2} \Big)_{1 \leq i,j \leq N'} \, , 
    \end{align}
    In particular, note that in the large $N$ limit, the non-zero eigenvalue of the rank-one matrix has been reduced by a factor $N'/N <1$ since $\vect{x}^{(\mathrm{rdc})} \in \mathbb{R}^{N'}$.  If we normalize the matrix 
    \begin{align}
       \rdmmat{Y}^{(0)} = \sqrt{\frac{N}{N'}}  \rdmmat{Y}^{(\mathrm{rdc})} \, , 
    \end{align}
    we get back to our original setting with $K-1$ species. If we denote by $\alpha := \lim_{N\to \infty} N'(N)/N \in (0,1)$,  the parameters $(\rho_{k}^{(\mathrm{rdc})}, s_{kl}^{(\mathrm{rdc})})_{1 \leq k,l \leq K-1} $ for this reduced model are related to the ones $(\rho_{k}, s_{kl})_{1 \leq k,l \leq K}$  of the entire model by: 
    \begin{itemize}
        \item $s_{k,l}^{(\mathrm{rdc})} = \frac{ s_{k,l}}{\alpha}$ for any $1 \leq k,l \leq K-1$;
        \item $\rho_{k}^{(\mathrm{rdc})}= \alpha \rho_k$ for any $1 \leq k \leq K-1$.
    \end{itemize}
    Since $(\mat{\Omega}^{(\mathrm{rdc})}_{K-1})_{kl} = \sqrt{\rho_{k}^{(\mathrm{rdc})} \rho_{l}^{(\mathrm{rdc})} } s_{k,l}^{(\mathrm{rdc})} = \sqrt{\rho_k \rho_l} s_{kl}$,  the parameter matrix $\mat{\Omega}^{(\mathrm{rdc})}_{K-1}$ is simply the top left $\big( (K-1) \times (K-1) \big)$ minor of  $\mat{\Omega}_{K}$. We can then use the same argument as for the proof of Cauchy interlacing theorem: if we denote by $\vect{v}_1^{(\mathrm{rdc})} \succ \vect{0}$ the top eigenvector of $\mat{\Omega}^{(\mathrm{rdc})}_{K-1}$, and $\vect{v}_1$ the one of $\matOme$, we have:
    \begin{itemize}
        \item on the one hand:
        \begin{align}
        \lambda_1(\mat{\Omega}^{(\mathrm{rdc})}_{K-1}) = \langle \vect{v}_1^{(\mathrm{rdc})} , \mat{\Omega}^{(\mathrm{rdc})}_{K-1}  \vect{v}_1^{(\mathrm{rdc})} \rangle =  \langle (\vect{v}_1^{(\mathrm{rdc})} \oplus 0), \mat{\Omega}_{K}  (\vect{v}_1^{(\mathrm{rdc})} \oplus 0) \rangle \, , 
        \end{align} 
        where we recall $ (\vect{v}_1^{(\mathrm{rdc})} \oplus 0) = (v^{^{(\mathrm{rdc})}}_{1,1},\dots, v^{^{(\mathrm{rdc})}}_{1,K-1},0)$.
        \item on the other hand:
        \begin{align}
        \lambda_1(\mat{\Omega}_{K}) = \mathrm{max}_{\vect{v}, \| \vect{v} \| =1} \langle \vect{v} , \mat{\Omega}_{K}  \vect{v} \rangle =  \langle \vect{v}_1 , \mat{\Omega}_{K}  \vect{v}_1 \rangle \, .
        \end{align} 
    \end{itemize}
Since $\vect{v}_1 \succ \vect{0}$, we must have $\vect{v}_1 \neq  \vect{v}_1^{(\mathrm{rdc})}$ leading to the desired result since $ \lambda_1(\mat{\Omega}_{K})$ is simple.
\end{proof}
\jump
Note that Prop.~\ref{prop:SNR_reduced} indicates that reducing the data matrix might lead to losing any correlation between the top eigenvector and the signal since for a model with parameters such that $\lambda_1( \matOme) >1$, the overlap between the top eigenvector and \emph{each} species $\vect{x}_k$ is positive, and in particular one has also a positive overlap between the (entire) top eigenvector and the reduced signal vector  $\vect{x}^{(\mathrm{rdc})}$. By reducing the data matrix, one may end up with $ \lambda_1( \mat{\Omega}^{(\mathrm{rdc})}_{K'}) < 1$ such that there is no positive overlap between the associated top eigenvector $\vect{v}_1^{(\mathrm{rdc})}$ and the reduced signal $\vect{x}^{(\mathrm{rdc})}$.


\end{document}